\documentclass[oneside,11pt]{article}

\usepackage{tikz}
\usetikzlibrary{3d,calc}
\usepackage{mathrsfs,placeins,soulutf8}
\usepackage[linesnumbered,ruled]{algorithm2e} 
\usepackage{tikz}
\usepackage{float}
\usepackage{subcaption} 

\usepackage[a4paper, total={6in, 8in}]{geometry}

\usepackage{mathtools}
\usetikzlibrary{shapes.geometric, arrows}
\usepackage{algorithmic}
\usepackage{bm}
\usepackage{booktabs}
\usepackage[english]{babel}
\usepackage[utf8]{inputenc}
\usepackage{multirow}
\usepackage{adjustbox}
\usepackage{subcaption,afterpage} 
\usepackage{mathtools}
\usepackage{amsfonts}
\usepackage[colorinlistoftodos]{todonotes}
\usepackage{enumerate}
\usepackage{hyperref}
\RequirePackage{epsfig}
\RequirePackage{amssymb}
\usepackage{amsthm}
\RequirePackage{natbib}
\RequirePackage{graphicx}

\if@usehyper
\RequirePackage[colorlinks=false,allbordercolors={1 1 1}]{hyperref}
\fi

\if@abbrvbib
\else
\fi

\bibpunct{(}{)}{;}{a}{,}{,}
 
\newcommand{\excise}[1]{}

\usepackage{amsmath}
\usepackage{cleveref}
\usepackage{dsfont}

\usepackage{setspace}

\newcommand\QQ{\mathbb{Q}}
\newcommand\RR{\mathbb{R}}
\newcommand\PP{\mathbb{P}}

\newcommand{\GG}{\mathrm{G}}
\DeclareMathOperator*{\argmin}{argmin}

\theoremstyle{plain}
\newtheorem{theorem}{Theorem}[section]

\newtheorem{corollary}[theorem]{Corollary}
\theoremstyle{definition}
\newtheorem{definition}[theorem]{Definition}

\theoremstyle{remark}

\newtheorem{assumption}{Assumption}

\makeatletter
\newtheorem{rep@theorem}{\rep@title}
\newcommand{\newreptheorem}[2]{%
\newenvironment{rep#1}[1]{%
 \def\rep@title{#2 \ref{##1}}%
 \begin{rep@theorem}}%
 {\end{rep@theorem}}}
\makeatother

\newreptheorem{theorem}{Theorem}
\newreptheorem{lemma}{Lemma}
\newreptheorem{definition}{Definition}
\crefname{definition}{definition}{definitions}
\Crefname{definition}{Definition}{Definitions}

\usepackage{changepage}
\newif\ifhighlightchanges

\usepackage{lastpage}

\begin{document}

\title{Can Generative Artificial Intelligence Survive Data Contamination? Theoretical Guarantees under Contaminated Recursive Training
}
\author{Kevin Wang, Hongqian Niu, Didong Li\thanks{didongli@unc.edu}\\
Department of Biostatistics, University of North Carolina at Chapel Hill}
\date{}

\maketitle

\begin{abstract}

    As artificial intelligence (AI)-generated content proliferates, models are increasingly trained on their own outputs, risking progressive degradation or collapse. In this article, we provide the first positive, rigorous theoretical results, to the best of our knowledge, showing that under model-agnostic mild conditions, the model converges to the true data-generating distribution. The convergence rate is the minimum of the model's intrinsic rate and the fraction of real data at each training iteration, revealing a phase transition between data-limited and model-limited regimes. We further show that, for biased real data, correcting the bias prevents the persistence and amplification of early bias over training iteration. Extensive experiments across simulations, real images and texts validate our theoretical framework, establishing quantitative conditions for long-term AI stability in contaminated environments.
    \end{abstract}
        
    \noindent\textbf{Keywords}: Generative AI, recursive training, synthetic data, convergence rates

\section{Introduction}

    Recent analyses suggest that Artificial Intelligence (AI) generated text, images, and code constitute an increasingly large share of online content. Journalistic investigations have documented widespread use of AI-generated text across platforms such as Wikipedia and hobbyist communities \citep{Froio2025AIHouseplant}, as well as coordinated campaigns on social media \citep{BBC2025spammersAI}. In parallel, empirical research has documented the growing presence of AI-generated content in scientific and academic writing \citep{brooks2024rise, liang2024monitoring}.

    Consequently, modern generative models are increasingly trained on data streams that contain both human-generated and model-generated content. Such recursive training is already occurring, both unintentionally through web-scale data collection and intentionally in applications that incorporate synthetic data \citep{bowles2018gan,tobin2017domain,billot2023synthseg,xie2018differentially,papernot2016semi,jordon2018pate,lopes2017data,yin2020dreaming,lee2013pseudo,xie2020selfnoisystudent,arazo2020pseudo}. Because synthetic content is often difficult to reliably identify, simply filtering it from training data is unlikely to be practical \citep{nist2024reducing}. Synthetic content is therefore becoming an intrinsic part of the digital information ecosystem, influencing both the information encountered by users and the data pipelines that shape future models. The critical question then is: \textit{will recursive training on mixtures of real and synthetic data generated by the model's predecessors lead to progressive degradation, or can learning still improve over time?} This question is central to understanding the statistical behavior of learning algorithms trained on recursively contaminated data.

        \begin{figure}[ht!]
        \centering
        \includegraphics[width=0.75\linewidth]{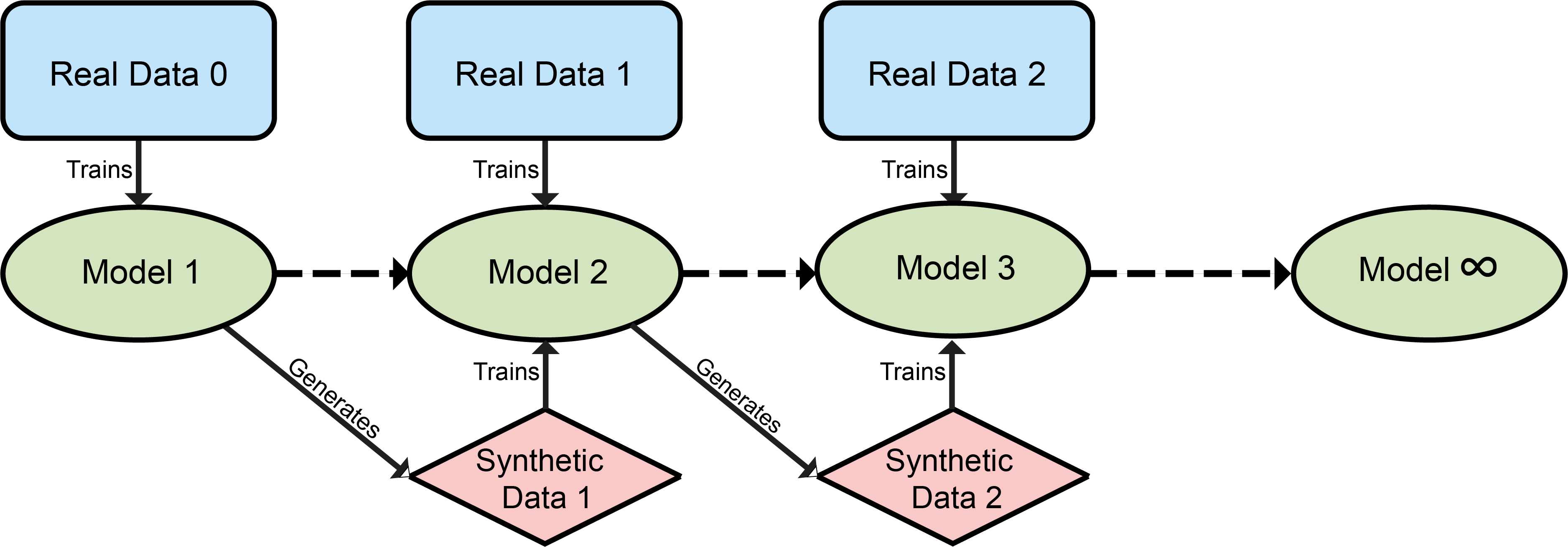}
            \caption{
            The Contaminated Recursive Training (CRT) framework is illustrated via a diagram. An initial dataset of real data (Real Data 0) trains the first model. After deployment, the model produces synthetic content (Synthetic Data 1) while new real data continue to appear online (Real Data 1). Subsequent models are trained on the accumulated mixture of real and synthetic data.}
        \label{fig:figure1}
    \end{figure}

    Recent work has begun to examine the behavior of models trained on synthetic data. Theoretical studies have characterized model collapse (where performance deteriorates and generated outputs become increasingly distorted or less diverse) when training relies entirely on synthetic data \citep{shumailov2023curse,shumailov2024ai,suresh2024rate}. Empirical evidence suggests that incorporating real data at each iteration can mitigate degradation \citep{zhang2026regurgitative}, and theoretical analyses support this stabilizing effect in parametric settings~\citep{gerstgrasser2024model, dohmatob2024model}. However, general theoretical conditions with realistic assumptions under which recursive training converges to the underlying data distribution remain unknown. Without such guarantees, the long-term behavior of models trained on mixtures of real and synthetic data remains fundamentally uncertain.

    We formalize this setting as contaminated recursive training (CRT). At each iteration, fresh samples from the target distribution and synthetic samples from the current generator are added to an accumulated training set. A learning procedure is then applied to the full accumulated dataset to produce the next generator. This formulation captures the case in which real data continue to arrive over time, but the training corpus is increasingly contaminated by synthetic data that may not be reliably separated. This framework is illustrated in \Cref{fig:figure1}.

    In addition to degradation caused by synthetic contamination, recursive training may also propagate systematic errors arising from bias in real data. Generative models trained on non-representative datasets are known to produce systematically distorted outputs \citep{mehrabi2021survey,zhou2024bias}, and such biases have been extensively documented in widely used training corpora \citep{lucy2021gender,sheng2019woman}. This raises the concern that when synthetic content is reused for training, biases introduced early in the process may persist or be amplified across successive generations, creating long-term feedback effects. Although numerous methods have been developed to mitigate sampling bias and distribution shift \citep{he2009learning,cortes2014domain,chen2023comprehensive}, the long-term behavior of bias in the context of recursive training has not been formally characterized. A central question is \textit{whether early distortions can be corrected over time, or whether they instead become permanently embedded in future models, gradually shaping what people see, read, and rely on online.}

     This paper addresses these questions by providing, to the best of our knowledge, the first  general theoretical analysis of recursive training that establishes convergence rates to the true data-generating distribution under data contamination. We show that generative models trained on mixtures of real and synthetic data converge to the target distribution without requiring parametric assumptions on the data-generating distribution, relying only on general convergence properties of the underlying learning procedure. Moreover, the convergence rate is determined by the minimum of the convergence rate of the base AI model and the fraction of real data available at each iteration, revealing a phase transition between data-limited and model-limited regimes. We further show that sufficiently accurate bias-correction prevents the persistence and amplification of initial sampling bias. We validate these theoretical findings through extensive experiments across simulations, real images and texts, using generative adversarial networks (GANs), diffusion models, and large language models (LLMs). These experiments confirm the convergence of the recursively trained models as well as the predicted phase transitions, demonstrating that the theoretical results hold in practical, complex settings. Section 2 reviews existing work; Section 3 and 4 contain main results for CRT and BCRT respectively; Section 5-7 presents experimental results. All proofs and additional experimental details are provided in the Appendix.
    

\section{Previous Work}

    Prior work on recursive training can be broadly divided into three settings: synthetic-only recursion, recursion with continued access to real data, and contaminated training with both real and synthetic data. Most theoretical and empirical work has focused on the synthetic-only setting, where various forms of model collapse have been established. Existing positive convergence results rely on growing the original real dataset and the parametric model setting. In contrast, the contaminated setting, in which new real and synthetic data are introduced simultaneously over time, has received comparatively little theoretical attention despite its relevance to modern generative AI systems.

    Theoretically in the synthetic-only setting, \cite{shumailov2024ai} analyze iterative training under both discrete and Gaussian models trained via maximum likelihood, proving that synthetic-only recursive training inevitably leads to model collapse. Their theoretical results are supported by empirical evidence demonstrating severe degradation. Building on this, \cite{suresh2024rate} provide explicit rates of collapse in similar settings, quantifying how quickly the learned distribution diverges from the target.
    
    Related work by \cite{shumailov2023curse} also considers synthetic-only training and establishes collapse in this regime. Notably, they introduce a “partial-refresh” setting in which a small fraction of the original real data is reintroduced at each iteration. While this slows degradation, it does not prevent eventual collapse. Furthermore, \cite{dohmatob2024model, dohmatob2025strong} derive explicit rates of performance degradation in recursive training loops that rely solely on synthetic data within parametric model classes. Complementing this line of work, \cite{dohmatob2024tale} characterize model collapse through changes in tail-diversity scaling laws in certain language models, while \cite{feng2025beyond} show that sufficiently effective verification of synthetic data can mitigate collapse. Related work by \cite{dey2024universality} studies an asymptotic universality phenomenon in recursive training under accumulated-data settings.
    
    Beyond synthetic-only regimes, several works study recursive training as a form of data contamination, where real data is progressively mixed with model-generated samples. In a two-generation setting, \cite{hataya2023will} show empirically that training a second-generation model on a mixture of real and synthetic data leads to degradation relative to the first-generation model. Extending this to multiple iterations, \cite{martinez2023combining} demonstrate that recursively adding synthetic data while keeping the original dataset fixed results in cumulative performance decline. \cite{bertrand2023stability} show that when a sufficiently large proportion of the original data is retained, collapse can be avoided entirely, though these analyses assume that no new data is introduced over time. Moreover, \cite{briesch2023large} study a training loop for LLMs where generated synthetic data as well as new real data are repeatedly incorporated into subsequent training sets. Their experiments suggest that semantic coherence is largely preserved while output diversity declines over iterations, though they do not theoretically establish convergence to the true data distribution.
    
    Additional theoretical support for these findings is provided by \cite{gerstgrasser2024model}, which analyzes linear models trained via least squares and shows that when maintaining access to the full original dataset, collapse is not inevitable. \cite{barzilai2026models} analyze iterative maximum-likelihood estimation for parametric models in an accumulating-data setting and establish conditions under which consistency is preserved, with stability depending on continued growth of the initial real-data sample.

    Taken together, existing theoretical analyses either focus on synthetic-only recursion or establish stability in restricted accumulating-data settings, typically under parametric assumptions and continued access to a growing corpus of real data. By contrast, our setting allows fresh real and synthetic samples to arrive simultaneously, permits accumulation of all historical data, and accommodates general target distributions and generators. The next section introduces this framework formally and develops corresponding convergence guarantees.


\section{Recursive Training under Data Contamination} \label{sec:CRT}
    
    In this section we begin by rigorously defining the generative models and the 
    recursive training under data contamination scheme, and present theorems for the theoretical convergence under such frameworks.  
    
    \begin{definition}[Generative model]\label{def:generator}
    Let $\PP_0$ be the target distribution. Given a model class $\mathcal{G}$ and a distance metric $d$ between distributions, the goal is to learn a generative model $\GG\in\mathcal{G}$ such that
    \[
        \argmin_{\GG\in\mathcal{G}} d(\GG,\PP_0).
    \]
    \end{definition}
    
    Typically, one will not have access to $\PP_0$ directly, but rather through $n$ i.i.d. samples from $\PP_0.$ In this case, while the goal of minimizing the distance remains the same, the algorithm for learning the generator will often not minimize this quantity directly as its objective but some empirical counterpart.


    \begin{definition}[Convergence rate]\label{def:conv_rate}
        Let $\widehat{\PP}_n$ be an estimated distribution from a generative model $\mathcal{G}$ trained from $n$ i.i.d. samples drawn from $\PP_0.$ We say that $\widehat{\PP}_n$ converges at a (polynomial) rate $p>0$  if $d(\widehat{\PP}_n,\PP_0) \lesssim n^{-p}$, i.e., there exists a constant $C>0$ such that $d(\widehat{\PP}_n,\PP_0) \leq Cn^{-p}$. 
    \end{definition}

    In this paper, when referring to setting where the model is trained with no contamination, we call the convergence rate the baseline rate. For generality, we leave the method of finding the $\widehat{\PP}_n$ unspecified, as our theory does not depend on the specific model but only on its convergence rate $p$. Moreover, if we change the notion to convergence in probability, all theorems hold in probability as well. Next, we introduce our setting of recursive training under data contamination.

    
    \begin{definition}[Contaminated Recursive Training (CRT)]\label{def:CRT}
    Let $X_0$ be an initial dataset with 
    $m_1$ i.i.d. samples drawn from $\PP_0$. Let the initial estimator $\widehat{\PP}_0$ be trained on $X_0$. For each step $t\geq 1$, perform the following operations:
    
    \begin{enumerate}
        \item \textbf{Recursive generation.}
        From the previous estimator $\widehat{\PP}_{t-1}$, draw an i.i.d.\ synthetic
        sample $Y_{t}$ of size $m_2$. Also draw a new i.i.d. real dataset $X_{t}$ of size $m_1$ from $\PP_0$. Let $\alpha\coloneqq\frac{m_1}{m_1+m_2}\in(0,1)$ be the real-data fraction.
    
        \item \textbf{Contaminated data accumulation.}
        Accumulate all real datasets $X_0,\ldots,X_{t}$ and all previously generated synthetic datasets $Y_1,\ldots,Y_{t}$ into one dataset. 
    
        \item \textbf{Recursive update.}
        Train the learner on the accumulated data, producing the new generator $\widehat{\PP}_t$.
    \end{enumerate}
    
    $\{\widehat{\PP}_t\}_{t\ge0}$ is called the
    \emph{contaminated recursively trained} (CRT) sequence of generators.
    \end{definition}
    
    \noindent\textbf{Comparison to prior recursive training settings.} Our setup differs from the synthetic-only recursive schemes studied in the model-collapse literature~\citep{shumailov2024ai,shumailov2023curse,suresh2024rate, dohmatob2025strong}, where each new model is trained primarily on synthetic samples and real data from earlier rounds is discarded. Such synthetic-only recursion drives the training distribution progressively away from the target distribution and leads to collapse. Our setting also differs from replacement-style contamination models~\citep{hataya2023will}, where each iteration trains on a mixture of real and synthetic data but without accumulating all past real samples. In contrast, both real and synthetic data are continuously uploaded to the internet. Additionally, internet archives retain essentially all data that has been uploaded, both real and synthetic. Thus, it is imperative to study the setting where each iteration's generator is trained on a new sample of real data, a new sample of synthetic data from the most recent generator, and the running data from all previous iterations. 
    
    We then state the following two standard assumptions for our theoretical results.

     \begin{assumption}[Polynomial rate for baseline generative models]\label{ass:poly}
 Let  $\mathcal{Q}$ be a convex class of distributions. Then 
 there exists a constant $0<M<\infty$ such that for any $\PP_0 \in \mathcal{Q}$ the baseline generative model based on a sample of size $n$ satisfies $d(\widehat{\PP}_n, \PP_0) \leq M n^{-p}$. 

    \end{assumption}

    \begin{assumption}[Convex distance metric]\label{ass:convex}
    The distance metric $d(\cdot,\cdot)$ between distributions is convex, i.e., for any distributions $P, Q_1,Q_2$, and any scalars $\lambda_1,\lambda_2>0$ with $\lambda_1 + \lambda_2 = 1$,
         \[    d\!\left(P,\,\lambda_1 Q_1 +\lambda_2 Q_2\right)    \leq \lambda_1\, d(P, Q_1) +  \lambda_2 d(P,Q_2).    \]
    \end{assumption}


    We emphasize that both assumptions are relatively weak in the literature and are satisfied under standard conditions commonly used in the theory of generative modeling. For example, \Cref{ass:poly} requires a uniform polynomial convergence rate over a distribution class $\mathcal{Q}$ that is closed under convex combinations. Such uniform guarantees are classical for estimators such as kernel density estimators (KDE,~\citealp{devroye1985nonparametric}) and Dirichlet process mixture models (DPMM,~\citealp{ghosal2017fundamentals}). Moreover, modern deep generative models, including Variational AutoEncoders~(VAE,~\citealp{cherief2022pac,mbacke2023statistical}), Generative Adversarial Networks~(GAN,~\citealp{uppal2019nonparametric,puchkin2024rates}), Diffusion Models~\citep{de2022convergence,oko2023diffusion,chen2023score,tang2024conditional}, and certain classes of Large Language Models~\citep{lotfi2024unlocking,rawat2024little}, are also known to satisfy such uniform convergence guarantees when the generator architecture is subject to standard regularity constraints. Finally, Assumption~\ref{ass:convex} is satisfied by most widely used statistical distances, including total variation, Kolmogorov--Smirnov, Wasserstein-$p$, energy distance, maximum mean discrepancy (MMD), and other integral probability metrics.


        
    
    
    With these two assumptions, we present our first main theorem about convergence rate of CRT.
    
    \begin{theorem}[Convergence rate under CRT]
    \label{thm:recursive_convergence}
    Suppose \Cref{ass:poly} and \ref{ass:convex} hold. Let $\{\widehat{\PP}_t\}_{t\ge0}$ be the sequence of CRT trained generators, then
    \[
    d(\widehat{\PP}_t, \PP_0)
    \lesssim
    \begin{cases}
    t^{-\alpha}, & p>\alpha,\\
    t^{-\alpha}\log t, & p=\alpha,\\
    t^{-p}, & p<\alpha,
    \end{cases}
    \]
    Equivalently, up to logarithms,
    \[
    d(\widehat \PP_t, \PP_0)
    \lesssim
    t^{-\min\{p,\,\alpha\}}.
    \]
    Additionally, if \Cref{ass:poly} is weakened to convergence in probability, the same rates hold in probability. 
    \end{theorem}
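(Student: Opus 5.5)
The plan is to view the accumulated training set at step $t$ as an i.i.d.-like sample from a mixture distribution, and to turn the estimation error into a recursion. At step $t$ the accumulated data contain $(t+1)m_1$ real samples and $t m_2$ synthetic ones, the latter drawn from $\widehat{\PP}_0,\dots,\widehat{\PP}_{t-1}$. Let $N_t=(t+1)m_1+tm_2$. The training data are then distributed according to the mixture
\[
\mathbb{Q}_t=\frac{(t+1)m_1}{N_t}\PP_0+\frac{m_2}{N_t}\sum_{s=0}^{t-1}\widehat{\PP}_s .
\]
Since $\mathcal{Q}$ is convex and contains $\PP_0$ and the generators, $\mathbb{Q}_t\in\mathcal{Q}$. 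Applying \Cref{ass:poly} to the "target" $\mathbb{Q}_t$ with sample size $N_t\asymp t$ gives
\[
d(\widehat{\PP}_t,\mathbb{Q}_t)\le M N_t^{-p}.
\]
This step treats the stacked samples as a single draw from $\mathbb{Q}_t$. I would justify this by conditioning on the past generators and noting that the uniform constant $M$ does not depend on the target. The strictly i.i.d.\ versus stratified-mixture subtlety is glossed as in the paper's model-agnostic framing.

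Next I would use the triangle inequality together with \Cref{ass:convex}, writing $e_t=d(\widehat{\PP}_t,\PP_0)$:
\[
e_t\le MN_t^{-p}+d(\mathbb{Q}_t,\PP_0)\le MN_t^{-p}+\frac{m_2}{N_t}\sum_{s=0}^{t-1}e_s ,
\]
since $d(\PP_0,\PP_0)=0$. This is a discrete Volterra-type inequality. Set $S_t=\sum_{s<t}e_s$. Then $S_{t+1}=S_t+e_t\le S_t\bigl(1+\tfrac{m_2}{N_t}\bigr)+MN_t^{-p}$. Because $N_t\approx t(m_1+m_2)$, we have $\tfrac{m_2}{N_t}\approx\tfrac{1-\alpha}{t}$. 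The product $\prod_{s\le t}(1+\tfrac{1-\alpha}{s})$ behaves like $t^{1-\alpha}$; I would prove this by comparing logarithms with the harmonic sum, keeping $O(1/t^2)$ corrections that are summable. Unrolling the recursion gives
\[
S_t\lesssim t^{1-\alpha}\Bigl(S_1+\sum_{s\le t}s^{-p}\,s^{-(1-\alpha)}\Bigr)=t^{1-\alpha}\Bigl(C+\sum_{s\le t}s^{\alpha-1-p}\Bigr).
\]
The sum splits into three regimes. If $p>\alpha$ it converges. If $p=\alpha$ it is $\asymp\log t$. If $p<\alpha$ it is $\asymp t^{\alpha-p}$. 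Hence $S_t\lesssim t^{1-\alpha}$, $t^{1-\alpha}\log t$, and $t^{1-p}$ respectively.

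Plugging back, $e_t\le Mt^{-p}+\tfrac{1-\alpha+o(1)}{t}S_t$ yields $t^{-\alpha}$, $t^{-\alpha}\log t$, and $t^{-p}$; in the first two cases the $t^{-p}$ term is dominated. The main obstacle is getting the exponent exactly $\alpha$ rather than a slightly worse one. This requires care with the discrete product: I need $\tfrac{m_2}{N_t}\le\tfrac{1-\alpha}{t}+O(t^{-2})$, which holds since $N_t=t(m_1+m_2)+m_1$. With that bound, $\prod(1+a_s)\le C t^{1-\alpha}$ follows from $\log(1+x)\le x$. For the in-probability version, I would run the same argument on the event where all the bounds $d(\widehat{\PP}_s,\mathbb{Q}_s)\le MN_s^{-p}$ hold up to a stochastic $O_P$ constant. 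More carefully, I would replace $M$ by $O_P(1)$ factors, and the linear recursion preserves $O_P$ rates.
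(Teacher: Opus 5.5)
Your proposal is correct and follows essentially the same route as the paper. Both arguments use the mixture $\mathbb{Q}_t$, the triangle inequality with Assumption~\ref{ass:poly} for the learning error, and convexity for the contamination term. Both then set up a linear recursion on the partial sums $S_t$, unroll it using $\prod_{k=j}^{t}\bigl(1+\frac{1-\alpha}{k}\bigr)\lesssim (t/j)^{1-\alpha}$, and finish with the three $p$-series regimes. The i.i.d.-from-$\mathbb{Q}_t$ idealization and the in-probability extension are treated at the same informal level as in the paper.

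The only differences are cosmetic and do not change the rates. You bound the product via $\log(1+x)\le x$ rather than Gautschi's inequality for Gamma ratios. Your indexing of $\mathbb{Q}_t$, with $(t+1)m_1$ real samples and synthetic draws from $\widehat{\PP}_0,\dots,\widehat{\PP}_{t-1}$, matches the CRT definition slightly more faithfully than the paper's.
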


    This theorem shows that should the real-data fraction $\alpha$ be greater than the baseline rate $p$, then we may achieve the same convergence rate as the baseline. However, if the real-data fraction $\alpha$ is smaller than the baseline rate $p$, the convergence rate is slowed down. In the case where they are equal ($p=\alpha$), we see a phase transition.

\section{Biased Recursive Training under Data Contamination}\label{sec:BCRT}

    In a closely related problem, we consider the case where the sampling distribution of the real data may be biased to begin with. Because generative models trained on biased datasets exhibit biased generation, this poses a significant risk in real-world settings. Furthermore, recursively contaminated training processes may further propagate errors present in the initial model stemming from this biased sampling for the training data, which subsequently poses two important questions: what happens when early model iterations are trained from biased data, and can such initially biased models be corrected in future rounds of training? We find that such questions can be studied by naturally extending our framework to the case we now refer to as biased contaminated recursive training (BCRT). In essence, one might imagine that while the original data may come from a biased source, over time these biases may be recognized and slowly corrected in the subsequent real samples. Below we study sufficient conditions on the sampling distribution in the recursive training paradigm to yield convergence as in the previous section under these circumstances.
    
    \begin{definition}[Biased contaminated recursive training (BCRT)]\label{def:CRTbias}
    Let $(\PP_t^{\textnormal{bias}})_{t\ge0}$ be a sequence of (possibly biased) distributions, and $\PP_0$ be our target distribution. Let $X_0$ be an initial sample drawn from $\PP^\textnormal{bias}_0$ and the initial estimator be $\widehat{\PP}_0$. For each step $t\geq1$, perform the following operations:
    
    \begin{enumerate}
        \item \textbf{Recursive generation.}
        From the previous estimator $\widehat{\PP}_{t-1}$, draw an i.i.d.\ synthetic
        sample $Y_{t}$ of size $m_2$. Also draw an i.i.d. real dataset $X_{t}$ of size $m_1$ from $\PP^\textnormal{bias}_{t}.$ Let $\alpha\coloneqq\frac{m_1}{m_1+m_2}\in(0,1)$ be the real-data fraction.
    
        \item \textbf{Biased contaminated data accumulation.}
        Accumulate all real datasets $X_0,\ldots,X_{t}$ and all previously
        generated synthetic datasets $Y_1,\ldots,Y_{t}$ into one dataset.
    
        \item \textbf{Recursive update.} Train the learner on the accumulated hybrid data, producing the new generator $\widehat{\PP}_t.$
    \end{enumerate}
    
    The sequence $\{\widehat{\PP}_t\}_{t\ge0}$ is called the
    \emph{biased contaminated recursively trained} (BCRT) sequence of generators.
    \end{definition}

    As a direct consequence of \Cref{thm:recursive_convergence}, we can characterize what happens when the real samples are drawn from a fixed biased distribution and no effort is made to improve the sampling procedure or correct the bias. In this case, the recursive contamination process simply converges to the biased distribution itself. That is, the procedure will indeed converge, but to the biased distribution rather than the true one.

    \begin{corollary}[Convergence to a biased distribution]\label{cor:bias}
    If the real datasets $X_t$ are drawn i.i.d.\ from a fixed biased distribution $\PP^{\textnormal{bias}}_t=\PP_0^\textnormal{bias}\neq \PP_0$ and \Cref{ass:poly} and \ref{ass:convex} are satisfied, then the recursive procedure converges to $\PP_0^\textnormal{bias}$ rather than $\PP_0$, with the same rates as in Theorem~\ref{thm:recursive_convergence}.
    \end{corollary}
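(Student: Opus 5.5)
The plan is to reduce the corollary directly to \Cref{thm:recursive_convergence} by relabelling the target. When $\PP_t^{\textnormal{bias}}=\PP_0^{\textnormal{bias}}$ for all $t\ge0$, the BCRT procedure of \Cref{def:CRTbias} coincides exactly with the CRT procedure of \Cref{def:CRT} with target $\PP_0^{\textnormal{bias}}$ in place of $\PP_0$. Indeed, $X_0,X_1,\dots$ are i.i.d.\ batches of size $m_1$ from $\PP_0^{\textnormal{bias}}$. Each $Y_t$ is an i.i.d.\ batch of size $m_2$ from $\widehat{\PP}_{t-1}$. The accumulation and update steps are identical, and the true $\PP_0$ never enters the construction. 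So the generator sequence $\{\widehat{\PP}_t\}$ is a CRT sequence with target $\PP_0^{\textnormal{bias}}$ and the same real-data fraction $\alpha=m_1/(m_1+m_2)$.

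Next, I will check that the hypotheses of \Cref{thm:recursive_convergence} hold for this new target.
\begin{itemize}
\item \Cref{ass:convex} concerns only the metric $d$, so it transfers unchanged.
\item \Cref{ass:poly} is uniform over the convex class $\mathcal{Q}$, with a constant $M$ independent of the target. I need $\PP_0^{\textnormal{bias}}\in\mathcal{Q}$, which I read as the content of assuming \Cref{ass:poly} in the corollary: the biased sampling law is one of the distributions for which the baseline learner has rate $p$. With that, the baseline learner trained on $n$ samples from $\PP_0^{\textnormal{bias}}$ satisfies $d(\widehat{\PP}_n,\PP_0^{\textnormal{bias}})\le Mn^{-p}$.
\item The proof of the theorem presumably applies the baseline guarantee to mixtures of the target with earlier generators. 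Those mixtures are exactly the objects for which convexity of $\mathcal{Q}$ was imposed, so nothing changes when the base point is $\PP_0^{\textnormal{bias}}$.
\end{itemize}

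Applying \Cref{thm:recursive_convergence} then gives
\[
d(\widehat{\PP}_t,\PP_0^{\textnormal{bias}})\lesssim t^{-\alpha},\quad t^{-\alpha}\log t,\quad t^{-p}
\]
in the three regimes $p>\alpha$, $p=\alpha$ and $p<\alpha$ respectively, and the in-probability version follows the same way. This is the claimed convergence to $\PP_0^{\textnormal{bias}}$ at the same rates.

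To justify ``rather than $\PP_0$'', I will use the triangle inequality, assuming $d$ is a metric as the paper does:
\[
d(\widehat{\PP}_t,\PP_0)\ge d(\PP_0^{\textnormal{bias}},\PP_0)-d(\widehat{\PP}_t,\PP_0^{\textnormal{bias}}).
\]
The subtracted term tends to $0$, so $\liminf_t d(\widehat{\PP}_t,\PP_0)\ge d(\PP_0^{\textnormal{bias}},\PP_0)>0$, and the sequence does not converge to $\PP_0$.

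The only delicate point is the membership $\PP_0^{\textnormal{bias}}\in\mathcal{Q}$, together with the convex hull used inside the theorem's proof. I will state this explicitly as part of the hypotheses; everything else is a verbatim relabelling.
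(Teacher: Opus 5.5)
Your proposal is correct and follows the paper's route: the paper gives no separate proof and simply presents the corollary as a direct consequence of Theorem~\ref{thm:recursive_convergence}, since with a fixed $\PP_0^{\textnormal{bias}}$ the BCRT procedure is exactly CRT with target $\PP_0^{\textnormal{bias}}$. Your explicit hypothesis $\PP_0^{\textnormal{bias}}\in\mathcal{Q}$ and your reverse-triangle-inequality bound $\liminf_t d(\widehat{\PP}_t,\PP_0)\ge d(\PP_0^{\textnormal{bias}},\PP_0)>0$ make precise two things the paper leaves implicit: when Assumption~\ref{ass:poly} applies to the biased target, and what ``rather than $\PP_0$'' means.
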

    

    
    In many realistic settings, practitioners apply bias-correction techniques or improved sampling strategies (see, e.g., \citep{he2009learning, cortes2014domain, chen2023comprehensive} so that the real-data distribution evolves over time and gradually approaches the desired target distribution. When such corrections are applied across iterations, it is not obvious whether the recursive contamination process still converges, and if so, at what rate and to which limit. The following result addresses this setting by analyzing a biased recursive contamination process in which the real-data distributions move toward the true distribution over time. To account for this, we add an assumption on the sequence of biased distributions.

    \begin{assumption}[Bias decay]\label{ass:biasdecay}
    The biased distributions $\PP^{\textnormal{bias}}_t \in \mathcal{Q}$ and the bias decays at a polynomial rate $q$, i.e.,
         \[    d\!\left(\PP_t^{\textnormal{bias}} , \PP_0\right)    \lesssim t^{-q} .    \]
    \end{assumption}


    This assumption captures the case where the practitioner performs any of, or any combination of: (1) improving their sampling procedures, (2) implementing bias-correction methods, or (3) working with subsequent distributions that are naturally unbiased relative to the target distribution. We note that the plausibility of this assumption in the first two cases is supported by the biased sampling and covariate shift literature, such as \citep{he2009learning, cortes2014domain, chen2023comprehensive}. We also emphasize that, in these, the rate is not necessarily determined by the model itself, but rather by the quality of the external adjustments made by the practitioner. The third case covers settings where the assumption is naturally satisfied, such as fine-tuning on a subpopulation of corpora. Additionally, the assumption that $\PP^{\textnormal{bias}}_t \in \mathcal{Q}$ simply ensures that \Cref{ass:poly} applies to each of the new distributions under study. Our next result shows that \Cref{thm:recursive_convergence} can be extended to the BCRT paradigm.

    \begin{theorem}[Convergence rate under BCRT]
    \label{thm:recursive_convergence_bias}
    Assume that Assumptions \ref{ass:poly}, \ref{ass:convex}, and \ref{ass:biasdecay} hold. Let $\{\widehat{\PP}_t\}_{t\geq 0}$ be the sequence of BCRT trained generators, then

    \[
    d(\widehat{\PP}_t,\PP_{0})
    \lesssim
    \begin{cases}
    t^{-\alpha},         & \min(p,q) > \alpha,\\
    t^{-\alpha}\log t,   & \min(p,q) = \alpha,\\
    t^{-\min(p,q)},       & \min(p,q) < \alpha.
    \end{cases}
    \]
    Equivalently, up to logarithms,
    \[
    d(\widehat{\PP}_t, \PP_{0})
    \lesssim
    t^{-\min\{p,\,q,\,\alpha\}}.
    \]
    Like before, if either \Cref{ass:poly} or \ref{ass:biasdecay} are weakened to convergence in probability, the above rates hold in probability.
    \end{theorem}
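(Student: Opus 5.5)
The plan is to mirror the argument behind \Cref{thm:recursive_convergence}: derive a one-step recursion for the error $e_t \coloneqq d(\widehat{\PP}_t,\PP_0)$ and then solve it. At step $t$ the accumulated dataset has $N_t = m_1(t+1) + m_2 t$ points. Conditionally on the past, it is distributed (after a uniform random permutation) as an i.i.d.\ sample of size $N_t$ from the mixture
\[
\QQ_t \coloneqq \frac{m_1}{N_t}\sum_{s=0}^{t}\PP^{\textnormal{bias}}_s + \frac{m_2}{N_t}\sum_{s=1}^{t}\widehat{\PP}_{s-1}.
\]
This is exactly the modeling device used for CRT. Since every $\PP^{\textnormal{bias}}_s\in\mathcal{Q}$, and the generators can be taken to lie in $\mathcal{Q}$ as in the CRT proof, convexity of $\mathcal{Q}$ gives $\QQ_t\in\mathcal{Q}$. \Cref{ass:poly} then applies with target $\QQ_t$:
\[
d(\widehat{\PP}_t,\QQ_t)\le M N_t^{-p}\lesssim t^{-p}.
\]

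By the triangle inequality, $e_t \le d(\widehat{\PP}_t,\QQ_t) + d(\QQ_t,\PP_0)$. For the second term, \Cref{ass:convex} (extended by induction to finite mixtures) gives
\[
d(\QQ_t,\PP_0) \le \frac{m_1}{N_t}\sum_{s=0}^{t} d(\PP^{\textnormal{bias}}_s,\PP_0) + \frac{m_2}{N_t}\sum_{s=0}^{t-1} e_s .
\]
By \Cref{ass:biasdecay} the first sum is $\lesssim \sum_{s\le t}s^{-q}$, and dividing by $N_t\asymp t$ makes this contribution $\lesssim t^{-\min(q,1)}$ (with a log at $q=1$). Because $\alpha<1$, this is at most $t^{-\min(q,\alpha)}$ up to constants. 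With $\beta\coloneqq 1-\alpha = m_2/(m_1+m_2)$, we obtain the recursion
\[
e_t \le C t^{-r} + \frac{\beta}{t}\Bigl(1+O(t^{-1})\Bigr)\sum_{s=0}^{t-1} e_s, \qquad r\coloneqq\min(p,q),
\]
which is the same recursion as in the CRT case with $p$ replaced by $r$. The initial term $e_0$ is a constant, controlled by $d(\PP^{\textnormal{bias}}_0,\PP_0)$ plus the baseline error.

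The main step, and the expected obstacle, is solving this discrete Volterra-type inequality. I would set $S_t=\sum_{s<t}e_s$, so that
\[
S_{t+1}\le \Bigl(1+\tfrac{\beta}{t}\Bigr)S_t + C t^{-r}.
\]
Unrolling with the products $\prod_{u=s}^{t}(1+\beta/u)\asymp (t/s)^{\beta}$ gives
\[
S_t\lesssim t^{\beta}\Bigl(S_1+\sum_{s\le t}s^{-r-\beta}\Bigr).
\]
The sum $\sum s^{-r-\beta}$ is bounded if $r+\beta>1$ (i.e.\ $r>\alpha$), is $\log t$ if $r=\alpha$, and is $\asymp t^{1-r-\beta}$ if $r<\alpha$. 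Hence $S_t\lesssim t^{1-\alpha}$, $t^{1-\alpha}\log t$, or $t^{1-r}$ in the three cases. Substituting back into $e_t\le Ct^{-r}+\beta S_t/t$ yields $t^{-\alpha}$, $t^{-\alpha}\log t$, and $t^{-r}$ respectively, which are exactly the claimed three regimes. If necessary, the $O(t^{-1})$ correction in $m_2/N_t$ is absorbed by adjusting $\beta$ to $\beta+\epsilon$ for large $t$, or more carefully by noting that $m_2 t/N_t\le\beta$, which holds exactly since $N_t\ge (m_1+m_2)t$. The care needed is in tracking constants uniformly, and in the boundary case where the bias contribution with $q\ge1$ only gives $t^{-1}\log t$; this is still $o(t^{-\alpha})$, so it does not matter.

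For the in-probability version, I would run the same argument on an event where the baseline bounds hold with constant $M_\epsilon$. If the per-step bounds are not simultaneous, I would instead use stochastic-order arithmetic ($O_P$ terms closed under the finite weighted averages and the deterministic recursion above), giving the same rates in probability.
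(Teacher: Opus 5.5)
Your proposal is essentially the paper's proof. The shared steps are:
\begin{itemize}
\item the intermediate mixture $\QQ_t$ and the triangle inequality;
\item convexity to split off the averaged bias term, which the paper packages as a Ces\`aro lemma giving $t^{-\min\{q,1\}}$;
\item the partial-sum recursion unrolled with products $\asymp (t/s)^{1-\alpha}$;
\item the same $p$-series trichotomy.
\end{itemize}
Your indexing $N_t=m_1(t+1)+m_2t$ and the exact bound $m_2/N_t\le(1-\alpha)/t$ are slightly cleaner than the paper's.

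One correction is needed. The randomly permuted accumulated dataset is not an i.i.d.\ sample from $\QQ_t$: the block sizes $m_1,m_2$ are fixed rather than multinomial, and the synthetic blocks depend on earlier data through $\widehat{\PP}_{s-1}$. So applying Assumption~\ref{ass:poly} with target $\QQ_t$ should be stated as the same modeling step the paper makes, namely treating $\QQ_t$ as the distribution learned at rate $N_t^{-p}$. It should not be derived as a distributional identity.
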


    This theorem shows that the biased setting mirrors the unbiased case, with an additional limitation on the rate determined by how quickly the bias in the real data distribution decays. 
    

\section{Simulations}\label{sec:sims}\label{sec:sim-CRT}

    We first conduct simulation studies to validate our theoretical findings. We examined both the CRT \Cref{sec:CRT} and BCRT \Cref{sec:BCRT} settings under controlled conditions in which the target distribution (\Cref{fig:figure2}) and thus the baseline rate are known. This design allowed a direct comparison between the convergence rates in \Cref{thm:recursive_convergence} and \Cref{thm:recursive_convergence_bias}, and those observed empirically. We consider three generative models, Empirical Cumulative Distribution Function (ECDF), Kernel Density Estimator (KDE), and Wasserstein Generative Adversarial Network (WGAN). Convergence was evaluated using two discrepancy metrics: Wasserstein--1 ($W_1$) and maximum mean discrepancy (MMD). For BCRT, we introduce bias into the real-data distribution and allow the bias to decrease over iterations. This design enabled independent control of all three quantities determining the convergence rate: the baseline rate, the real-data fraction, and the bias-correction rate. For further details about the theoretical rates and the simulations, see Appendix \Cref{apdx:CRT} and \Cref{apdx:sims}.

    We first consider the CRT setting from \Cref{sec:CRT} using KDEs with varying bandwidths. The density used is a mixture of two univariate Gaussians with known parameters,
    \[\PP_0=w_1\mathcal{N}(\mu_1,\sigma_1^2)+(1-w_1)\mathcal{N}(\mu_2,\sigma_2^2),\]
    with a density function shown in \Cref{fig:sim1-true-density}.

        \begin{figure}[!h]
            \centering            \includegraphics[width=0.5\textwidth]{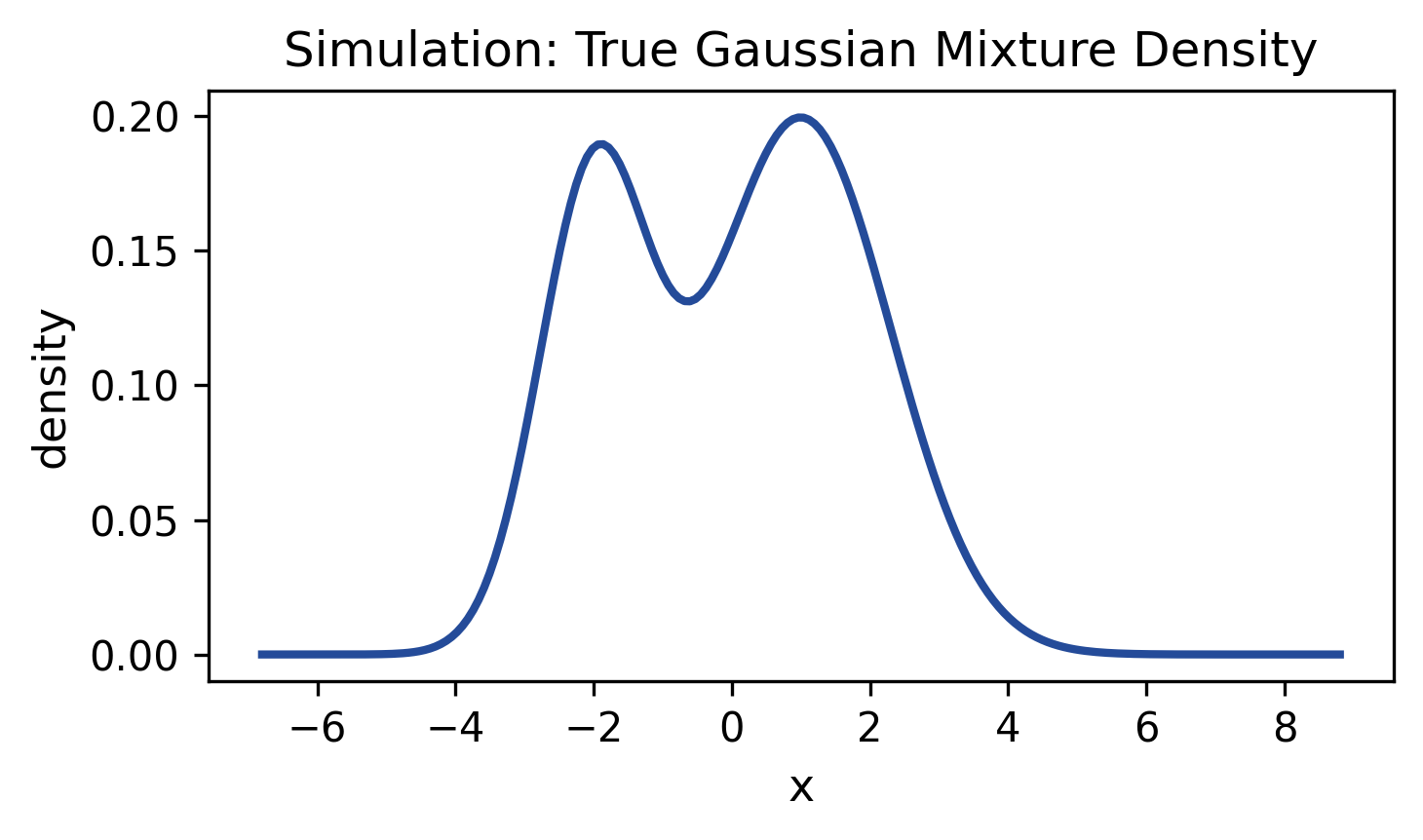}
            \caption{Density function of a two-Gaussian mixture  with parameters \(w_1=0.35\), \((\mu_1,\sigma_1)=(-2.0,0.8)\), and \((\mu_2,\sigma_2)=(1.0,1.3)\) used in Simulation of Section 5.1.}
            \label{fig:sim1-true-density}
        \end{figure}

    The density for the BCRT setting from $\Cref{sec:BCRT}$ is chosen to be the same but with a small biasing perturbation, whose rate of decay we may control:

    \[\PP_t^{\text{bias}}=(1-\text{bias}_t)\PP_0 + \text{bias}_t \PP_\text{bias}\mathcal{N}(\mu_3,\sigma_3^2),\]

    where $\PP_0$ is defined above and $\text{bias}_t = 0.2 (t+5)^{-q}$.
    
    For both the CRT and BCRT settings, we present three generators: the empirical cumulative distribution function~(ECDF), the standard Gaussian KDE with a bandwidth decaying on a deterministic schedule, and the WGAN. At each iteration $t$, we draw a new true sample of 50 real data from $\PP_0$ and a synthetic sample of $\frac{1-\alpha}{\alpha}50$ from the previous model, and apply the CRT setting as in Definition \ref{def:CRT}. Then, either an ECDF, KDE, or WGAN is used to form the next model in the sequence and the $W_1$ and $MMD$ losses from the ground truth are estimated by generating a sample and comparing the quantiles of the generated data to the quantiles of the true distribution on a uniform grid of $200$ points. A simple linear fit to the log-log transformed data is then used to estimate the observed order of convergence. Each experiment is performed for 2000 iterations. Each experiment is repeated for 50 replicates and the mean is reported for each distance metric. The results for the CRT setting are summarized in \Cref{fig:figure2}A and the results for the BCRT setting are summarized in \Cref{fig:figure2}B, where for each $\alpha=\{0.1,0.2,\dots ,1\}$ and metric, the rate predicted by our theory and the rate observed in the simulation are compared.


    Despite practical factors such as optimization dynamics and finite-sample effects, the empirical results are broadly consistent with the theoretical results. In particular, the observed convergence rates were limited by $\alpha$ in data-sparse regimes and by $q$ when bias improved slowly, reproducing the predicted phase-transition behavior between model-limited, data-limited, and bias-limited regimes. Additional details may be found in Appendix \Cref{apdx:CRT} and \ref{apdx:BCRT}.

        \begin{figure}[ht!]
        \centering
        \includegraphics[width=0.95\linewidth]{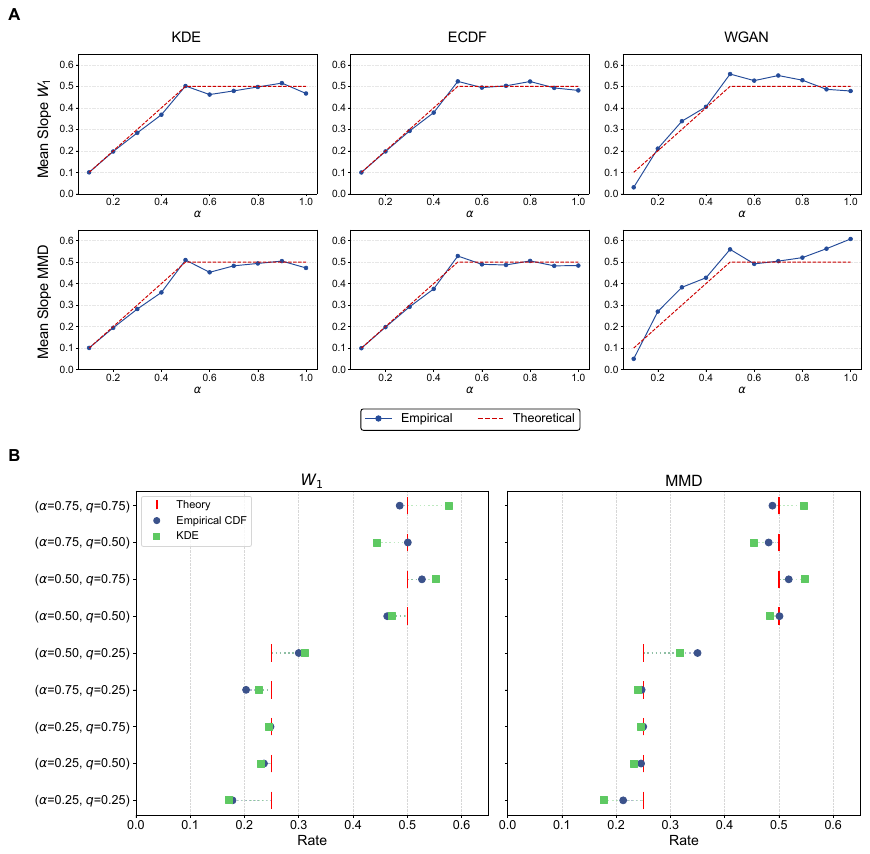}
        \caption{Simulation results.
            (A) Rates of KDE (left), ECDF (middle), and WGAN (right) measured using $W_1$ (top) and MMD (bottom) under CRT with varying $\alpha$, with empirical rates in blue and theoretical rates in red, where the target density is a 1-dimensional mixture of two Gaussians.  
            (B) Rates for ECDF and KDE under BCRT with varying $\alpha$ and $q$ measured by $W_1$ and MMD. For both CRT and BCRT, the empirical rates match the theoretical rates.}
        \label{fig:figure2}
    \end{figure}

\section{Image Experiments}\label{sec:mnist}

    Thus far, we have demonstrated the empirical rates of convergence under CRT and BCRT in simulated settings where the true distribution $\PP_0$ is known and where exact empirical formulations of $W_1$ and MMD are available in one dimension.

    We now consider a more realistic setting by training a state-of-the-art diffusion model on MNIST images. Although Diffusion Models are known to converge under $W_1$ and MMD loss and therefore fit within the CRT framework of \Cref{sec:CRT}, observing the precise theoretical rates in practice is challenging due to complex neural network optimization dynamics, numerous architectural and training hyperparameters, and the lack of a computationally tractable exact $W_1$ metric in high-dimensional spaces. Moreover, unlike the synthetic experiments, the true data distribution $\PP_0$ is not explicitly known.

     
     Consequently, we evaluate model quality using the Fr\'echet Inception Distance (FID), a standard metric for assessing the similarity between generated and real images. While FID is not itself a convergence metric, sustained improvements in FID indicate that generated samples are becoming progressively closer to the real-data distribution. The resulting FID trajectories are shown in \Cref{fig:figure3}B, while representative generated samples are displayed in \Cref{fig:figure3}C.
     

     Hence, our objective in this experiment is not to verify exact convergence rates, but rather to investigate whether the qualitative convergence behavior predicted by \Cref{thm:recursive_convergence} persists in a realistic high-dimensional setting. In particular, we examine whether Diffusion Models trained under CRT continue to improve even when the real-data fraction at each iteration is substantially less than $0.5$, corresponding to a setting with significant recursive contamination.
     

    At each iteration, a training set of 300 images is constructed by sampling a fraction $\alpha$ of images from the MNIST dataset without replacement and a fraction $1-\alpha$ of synthetic images generated by the model trained in the previous iteration. The diffusion model is then trained for 150 epochs on the resulting dataset. This procedure is repeated for 300 iterations, with the model parameters carried forward between iterations rather than being reinitialized. Consequently, data accumulates through the optimizer, and each real training sample is used an equal number of times throughout the recursive training process, consistent with the assumptions of \Cref{thm:recursive_convergence}. Due to the high dimensionality of the image space, $W_1$ distances are not evaluated. Additional experimental details may be found in \Cref{apdx:mnist}.
    


    \begin{figure}[htbp!]
        \centering
        \includegraphics[width=0.80\linewidth]{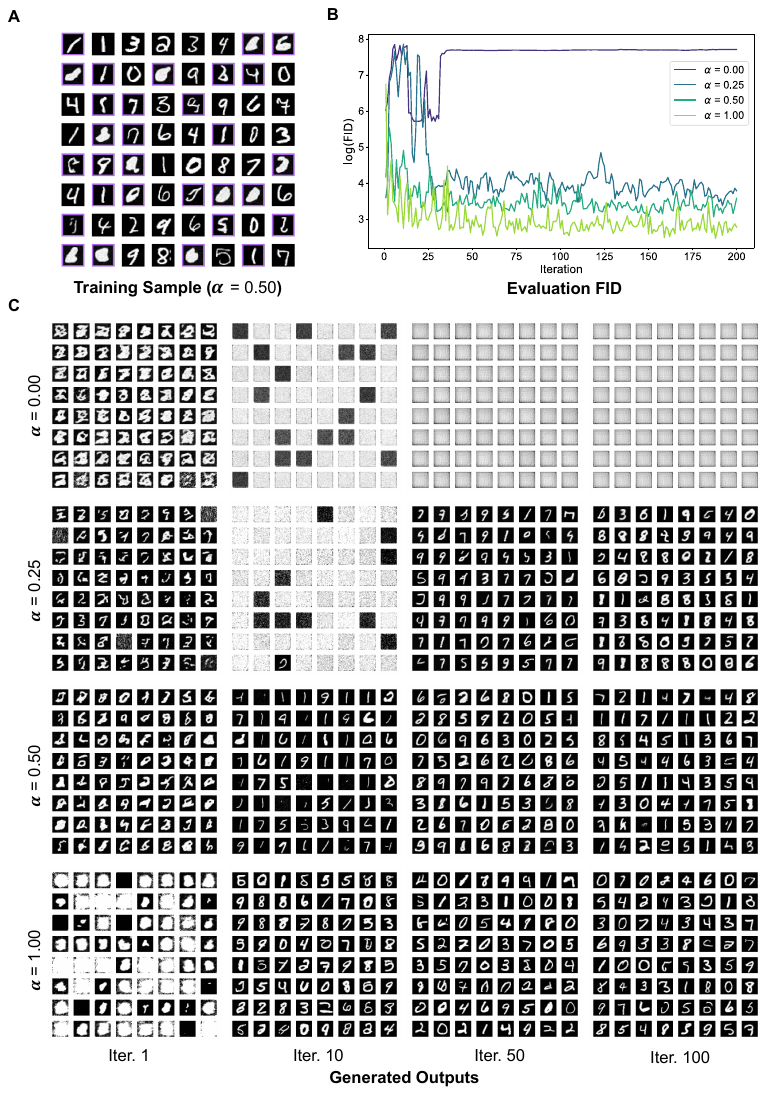}
            \caption{
            Image experiments under CRT using Diffusion Models. 
            (A) Sample input data with $\alpha=0.5$, where synthetically generated samples are highlighted.
            (B) FID at each training iteration for different values of $\alpha$. FID decreases steadily across iterations only when $\alpha>0$.
            (C) Samples generated by successive models in the CRT sequence for different values of $\alpha$. When $\alpha>0$, image quality improves across iterations, whereas training with $\alpha=0$ leads to collapse, consistent with theoretical results.}
        \label{fig:figure3}
        \end{figure}
    
    Overall, the results demonstrate that even when $\alpha < 1$, the diffusion model trained under CRT continues to improve according to FID and produces increasingly realistic samples. These findings provide empirical evidence that the convergence behavior predicted by \Cref{thm:recursive_convergence} extends beyond the idealized settings considered earlier and remains observable in practical image-generation tasks.

\section{Large Language Model Experiments}
\label{sec:llm}

We next investigate whether the qualitative behavior predicted by CRT persists in Large Language Models. Starting from a pretrained GPT-125M model, we perform recursive fine-tuning on WikiText-103 following the CRT procedure.


At iteration $t$, a total of $m_{\mathrm{total}}=1024$ token blocks of length 1024 are selected. A fraction $\alpha \in \{0,0.1,1\}$ is drawn from the real dataset without replacement, while the remaining fraction $1-\alpha$ consists of synthetic blocks generated by the model obtained from the previous iteration. The model is then fine-tuned for exactly one epoch on the resulting dataset. This procedure is repeated for $T=50$ iterations. As in the image experiments of \Cref{sec:mnist}, the model parameters are carried forward between iterations rather than reinitialized, so data accumulates through the optimizer. Consequently, each real data block is utilized an equal number of times throughout training, consistent with the assumptions of \Cref{thm:recursive_convergence}. The results are summarized in Figure~\ref{fig:figure4}.

\begin{figure}[H]
        \centering
        \includegraphics[width=1.0\linewidth]{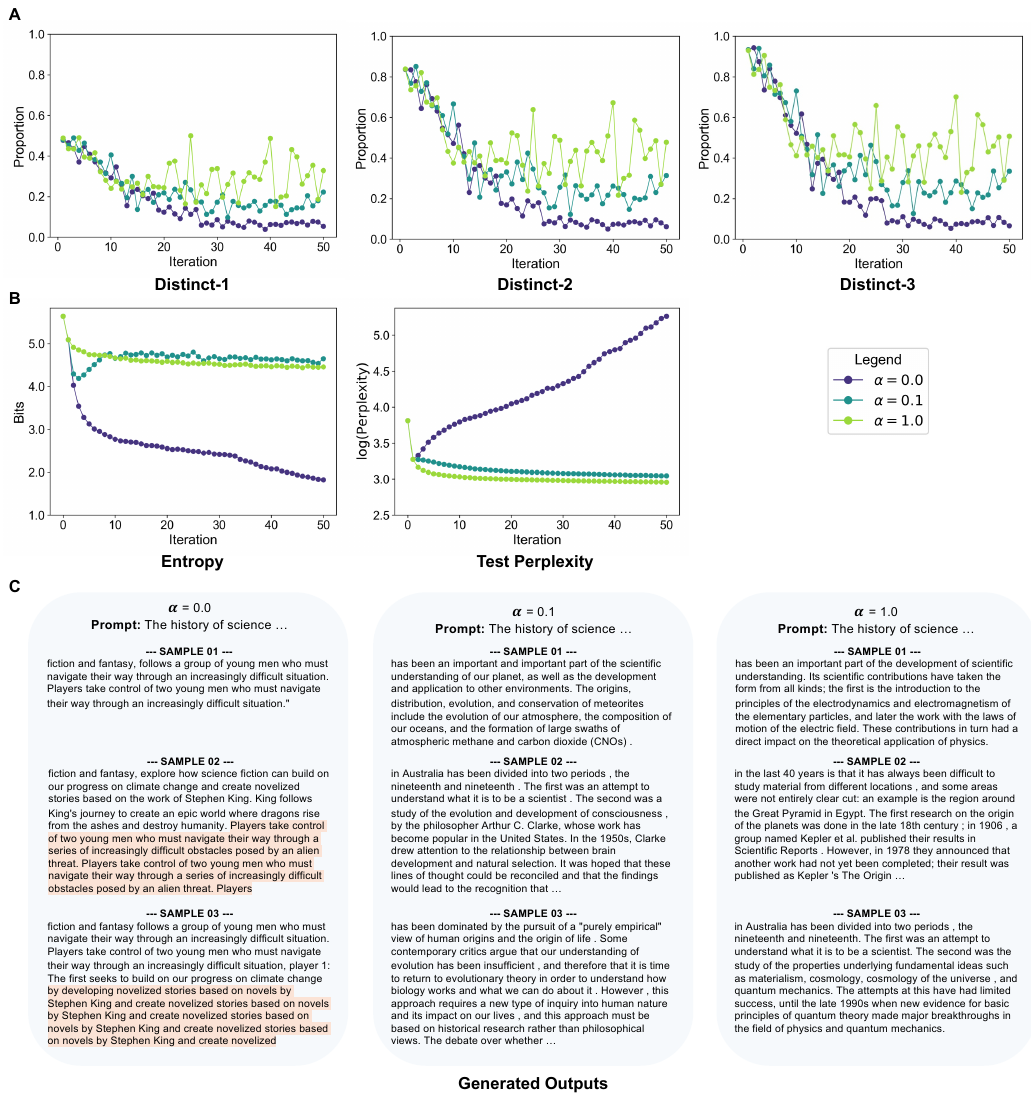}
        
        \caption{Text experiments under CRT using LLMs. (A)  Output diversity of the fine-tuned LLMs over iterations with varying $\alpha$'s, measured using number of distinct $k$-grams to assess potential model collapse for $k=1,2,3$. Diversity collapses rapidly when $\alpha = 0$, but is maintained when $\alpha = 0.1,1$ cases. 
                (B) Output entropy (left) and log prediction perplexity (middle) on a held-out WikiText test set measured at each iteration, with varying $\alpha$'s. Even with $\alpha=0.1,$ the test perplexity continues to decrease.
                (C) Variety of generated outputs of different generations of the model with different $\alpha,$ displaying model collapsed outputs for $\alpha=0$ (highlighted in orange) and stabilized outputs for $\alpha>0.$ 
                }
        \label{fig:figure4}
    \end{figure}

To assess potential model collapse more directly, Figure~\ref{fig:figure4} reports several complementary metrics. Figure~\ref{fig:figure4}A shows output diversity measured by the proportion of distinct $k$-grams. Figure~\ref{fig:figure4}B reports output entropy and test perplexity on a held-out WikiText test set. Because the underlying data distribution is unavailable, direct distributional distances cannot be computed. We therefore evaluate model quality using perplexity on a held-out WikiText test set. Although perplexity is only an indirect measure of distributional alignment, decreasing test perplexity indicates improved predictive performance and closer agreement with the test distribution. When $\alpha=0$, diversity collapses nearly to zero, output entropy decreases substantially, and test perplexity steadily worsens, consistent with model collapse under purely synthetic recursion. In contrast, for both $\alpha=0.1$ and $\alpha=1$, diversity remains substantially preserved, entropy remains stable at levels comparable to the fully real-data setting, and test perplexity continues to improve across iterations. Together, these results suggest that even a modest influx of real data is sufficient to prevent the degenerative behavior observed in the synthetic-only setting and maintain stable recursive training.


We also present the generated outputs in \Cref{fig:figure4}C after 50 iterations of CRT. When $\alpha = 0$, the outputs exhibit severe degeneration, both in terms of textual quality and relevance to the prompt. In contrast, when $\alpha = 1$, the generated outputs remain coherent and closely aligned with the prompt. Notably, even when $\alpha = 0.1$, incorporating only $10\%$ real data at each iteration is sufficient to prevent collapse and maintain reasonable output quality.

These results indicate that recursive training can remain stable and continue improving at scale. They suggest that the dynamics predicted by the CRT framework may extend to large-scale language models operating in increasingly synthetic data environments.

\section{Discussion and Future Work}
    In this article, we studied the long-term behavior of generative AI recursively trained on mixtures of real and synthetic data. Both our theoretical and empirical analysis shows that recursive training can survive data contamination as long as real data is included in each training iteration. Moreover, the asymptotic behavior is governed by three factors: the intrinsic rate of the AI model, real-data fraction in each iteration, and the rate at which bias in the data is corrected, with phase transitions, leading to model-limited, data-limited, and bias-limited regimes. 

    
    Our results suggest that the spread of AI-generated content across the internet does not necessarily cause AI models to deteriorate over time. What matters most is whether models continue to learn from new information about the real-world. In practice, this means that human-created material, such as journalism, photography, scientific articles, blog posts, and everyday online discussion, are central for shaping future AI models. As long as this stream of original content remains sufficient in both quality and quantity, AI models can continue to improve even in an environment where synthetic content is common.


    The results also provide perspective on concerns about bias in AI models. When real datasets contain systematic distortions, for example, when certain groups, perspectives, or experiences are under or over represented, models trained on those data can reproduce those patterns in their outputs. Under recursive training, it is feared that such distortions will reinforce themselves across successive generations of models, both perpetuating those biases onto internet users as well as solidifying their own presence in the next generation of AI. However, our analysis also shows that this process is not irreversible. Improvements in data collection and bias-correction can reduce these distortions over time, provided that the quality of incoming data improves sufficiently quickly. In practical terms, this suggests that efforts to broaden dataset representation, such as including more diverse sources, viewpoints, and communities, can still meaningfully improve future AI models even if earlier models were trained on imperfect data. 
    
    While our work establishes the foundations for recursively trained generative AI, several emerging aspects warrant future investigation. First, many important objectives used in modern generative modeling, including likelihood, cross-entropy, and Kullback-Leibler based methods, do not directly satisfy the assumptions of our theory. Future work may proceed in two complementary directions: extending the CRT framework to accommodate broader classes of discrepancies, or establishing conditions under which training procedures based on these objectives yield convergence in a discrepancy compatible with the current theory. Whether such connections exist, and under what assumptions, remains an open question.
    

    Second, the real-world path from model generation to future training data is considerably more complex than the CRT framework. Model outputs are not transferred directly into the internet, but instead pass through multiple stages of selection, editing, publication, and amplification by human users and automated systems. Modern training pipelines further apply procedures such as dataset curation, annotation, supervised fine tuning, reinforcement learning from human feedback, and learned reward models. Consequently, future models are shaped not only by the content that enters the training corpus but also by the mechanisms used to select and learn from that content. Extending recursive-training theory to account for these processes remains an important direction for future work, as they may either amplify or mitigate the effects of recursive contamination.

    
    Third, the distinction between real and synthetic data is becoming increasingly blurred. Many forms of online content, including research articles, software, blog posts, and digital media, are now produced through iterative interactions between humans and AI systems. In practice, content is often drafted, edited, refined, or curated through multiple rounds of human-AI collaboration before it is ultimately published. Such data cannot be naturally categorized as either purely human-generated or purely synthetic. As these forms of hybrid intelligence become increasingly prevalent, it will be important to develop recursive-training frameworks that explicitly account for data generated through human-AI interaction rather than assuming a clean separation between real and synthetic sources.


    Finally, we note that a substantial literature leverages the intentional use of synthetic data for training. In data-scarce settings, training on synthetic data has enabled models to achieve strong performance on real data, both after subsequent retraining and, in some cases, even without further adaptation. In some settings, models are trained entirely on data generated from computer simulation and then deployed in the physical world with little or no additional real-world training, such as in reinforcement learning with low tolerance for negative outcomes, such as controlling vehicles \cite{sadeghi2016cad2rl}. An important direction for future work is to extend our analysis to settings where synthetic data is deliberately introduced due to limited real data availability, and where one can control where, when, and how much synthetic data is incorporated.

    

\bibliography{ref}

\appendix{}

\section*{{\LARGE \bf Appendix}}
\addcontentsline{toc}{section}{Appendix}

This appendix contains the supplementary material for the paper ``Can Generative Artificial Intelligence Survive Data Contamination? Theoretical Guarantees under Contaminated Recursive Training." In \Cref{apdx:code} we provide a link to all code necessary to produce the simulations and experiments of the paper.
In \Cref{apdx:CRT}, we provide a more detailed and technical discussion of the \Cref{sec:CRT}, including the assumption, theorem, and proof. In \Cref{apdx:BCRT}, we provide the corresponding technical details for \Cref{sec:BCRT}. In \Cref{apdx:sims}, we provide theoretical justification for the baseline rates assumed in our \Cref{sec:sims}, as well as additional details for the simulations. In \Cref{apdx:mnist}, we provide additional experimental details for the image experiments in \Cref{sec:mnist}. Finally in \Cref{apdx:llm}, we provide additional experimental details for the LLM experiments in \Cref{sec:llm}.

\section{Code Availability}\label{apdx:code}

    All code to run simulations, experiments, and to generate all figures are available at: \url{https://github.com/hong-niu/generative-recursive-training}.

\section{CRT: Theoretical Details and Proof}\label{apdx:CRT}

This section provides the formal proof of \textbf{\Cref{thm:recursive_convergence} (Convergence under CRT)}. We follow the notation introduced in the main text.

\begin{proof}
Define
\[
\QQ_t
= \frac{1}{M_t}
\left[t m_1 \PP_0 + \sum_{j=1}^{t-1} m_2 \widehat{\PP}_j\right],
\qquad
M_t = t m_1 + (t-1)m_2.
\]
Using Assumptions \ref{ass:poly} and \ref{ass:convex}, we first use the triangle inequality to write
\[
d(\widehat{\PP}_t,\PP_0)
\le d(\widehat{\PP}_t, \QQ_t) + d(\QQ_t,\PP_0).
\]

To treat $d(\widehat{\PP}_t, \QQ_t)$, note that $\widehat{\PP}_t$ is learned from the accumulated data whose distribution is $\QQ_t.$ Using Assumption \ref{ass:poly}, which states that the baseline generative model has a uniform polynomial rate $p,$  we treat $\QQ_t$ as the distribution to be learned by $\widehat{\PP}_t$ at iteration $t,$ yielding $d(\widehat{\PP}_t, \QQ_t) \lesssim M_t^{-p}.$ We write

\[
d(\widehat{\PP}_t,\PP_0)
\le d(\widehat{\PP}_t, \QQ_t) + d(\QQ_t,\PP_0)
\lesssim M_t^{-p} + d(\QQ_t,\PP_0).
\]
From Assumption \ref{ass:convex}, by convexity of $d$ in its second argument,
\begin{align*}
d(\QQ_t,\PP_0)
&= d\!\left(\PP_0, \frac{t m_1}{M_t}\PP_0
  + \frac{m_2}{M_t}\sum_{j=1}^{t-1}\widehat{\PP}_j\right) \\
&\le \frac{t m_1}{M_t} d(\PP_0,\PP_0)
   + \frac{m_2}{M_t}\sum_{j=1}^{t-1} d(\PP_0,\widehat{\PP}_j) \\
&= \frac{m_2}{M_t}\sum_{j=1}^{t-1} d(\PP_0,\widehat{\PP}_j).
\end{align*}
Thus
\begin{equation}
\label{eq:main-upper}
d(\widehat{\PP}_t,\PP_0)
\lesssim
M_t^{-p} + \frac{m_2}{M_t}
\sum_{j=1}^{t-1}d(\PP_0,\widehat{\PP}_j).
\end{equation}
Let $S_{t-1} := \sum_{j=1}^{t-1} d(\PP_0,\widehat{\PP}_j)$.
Applying \eqref{eq:main-upper} at step $t-1$ gives
\[
d(\widehat{\PP}_{t-1},\PP_0)
\lesssim
M_{t-1}^{-p}
+ \frac{m_2}{M_{t-1}} S_{t-2},
\]
so
\begin{align*}
S_{t-1}
&= d(\widehat{\PP}_{t-1},\PP_0) + S_{t-2} \\
&\lesssim M_{t-1}^{-p}
 + \left(1+\frac{m_2}{M_{t-1}}\right) S_{t-2}.
\end{align*}
Iterating this recursion yields
\[
S_{t-1}
\lesssim
M_{t-1}^{-p}
+ \sum_{j=1}^{t-2} M_j^{-p}
\prod_{k=j}^{t-2}\left(1+\frac{m_2}{M_k}\right).
\]
Substituting back into \eqref{eq:main-upper} gives
\begin{align*}
d(\widehat{\PP}_t,\PP_0)
&\lesssim
M_t^{-p}
+ \frac{m_2}{M_t}M_{t-1}^{-p}
+ \frac{m_2}{M_t}\sum_{j=1}^{t-2} M_j^{-p}
\prod_{k=j}^{t-2}\left(1+\frac{m_2}{M_k}\right).
\end{align*}
Since
\[
M_t = t m_1 + (t-1)m_2 = (m_1+m_2)t - m_2,
\]
we have $M_t \asymp (m_1+m_2)t$, hence there exist constants
$c_1,c_2>0$ such that
\[
c_1 (m_1+m_2)t \le M_t \le c_2 (m_1+m_2)t \quad\text{for all }t\geq 1.
\]
Similarly,
\[
M_t^{-p} \lesssim (m_1+m_2)^{-p} t^{-p},
\qquad
\frac{m_2}{M_t} \lesssim \frac{1-\alpha}{t},
\]
with $\alpha = m_1/(m_1+m_2)$, and similarly $M_j^{-p} \lesssim
(m_1+m_2)^{-p} j^{-p}$.
Thus
\begin{align}
d(\widehat{\PP}_t,\PP_0)
&\lesssim
(m_1+m_2)^{-p} t^{-p}
+ m_2 (m_1+m_2)^{-p-1}(t-1)^{-p-1} \nonumber\\
&\quad
+ (1-\alpha) t^{-1}(m_1+m_2)^{-p}\sum_{j=1}^{t-2} j^{-p}
\prod_{k=j}^{t-2}\left(1+\frac{1-\alpha}{k}\right)\\
&\lesssim t^{-p} + t^{-p-1} + t^{-1}\sum_{j=1}^{t-2} j^{-p}
\prod_{k=j}^{t-2}\left(1+\frac{1-\alpha}{k}\right).\label{eqn:big_ineqn}
\end{align}
Up to multiplicative constants, we analyze the summation in \Cref{eqn:big_ineqn}:

\begin{align*}
t^{-1}\sum_{j=1}^{t-2}  j^{-p}
\prod_{k=j}^{t-2}\left(1+\frac{1-\alpha}{k}\right) 
&=  \sum_{j=1}^{t-2}  j^{-p}t^{-1}
\left[\prod_{k=j}^{t-2}\left(\frac{k+1-\alpha}{k}\right)
\right] \\
&=  \sum_{j=1}^{t-2}  j^{-p}t^{-1}
\frac{t-1}{j}\left[\prod_{k=j}^{t-2}\left(\frac{k+1-\alpha}{k}\right)
\left(\frac{k}{k+1}\right)\right] \\
&= \sum_{j=1}^{t-2}  j^{-p}
\left(\frac{t-1}{t}\right)
j^{-1}\left[\prod_{k=j}^{t-2}\left(\frac{k+1-\alpha}{k+1}\right)\right],
\end{align*}
where we use the identity  $ \prod_{k=j}^{t-2}\frac{k}{k+1}=\frac{j}{t-1}$.
Using the Gamma function, we have the exact identity
\[
\prod_{k=j}^{t-2}\left(\frac{k+1-\alpha}{k+1}\right)
= \frac{\Gamma(t+1-\alpha-1)}{\Gamma(j+1-\alpha)}
  \frac{\Gamma(j+1)}{\Gamma(t)}=\frac{\Gamma(t-\alpha)}{\Gamma(j+1-\alpha)}
  \frac{\Gamma(j+1)}{\Gamma(t)}.
\]
We recall the following inequalities for the Gamma function.
For $x>0$ and $0<s<1$, Gautschi's inequality \citep{gautschi1959some} states that
\[
x^{1-s}
< \frac{\Gamma(x+1)}{\Gamma(x+s)}
< (x+1)^{1-s}.
\]
Combined with standard Stirling-type bounds for $\Gamma$ \citep{rudin1976principles}, this implies that for any
 $\alpha\in(0,1)$ there exist constants $c_1,c_2,c_3,c_4>0$ such that
for all integers $t,j\ge1$,
\begin{equation}
\label{eq:gamma-ratios}
c_1 t^{-\alpha} \le \frac{\Gamma(t-\alpha)}{\Gamma(t)} \le c_2 t^{-\alpha},
\qquad
c_3 j^{\alpha} \le \frac{\Gamma(j+1)}{\Gamma(j+1-\alpha)} \le c_4 j^{\alpha}.
\end{equation}
Using \eqref{eq:gamma-ratios}, the contribution of the last term (without constants) in \Cref{eqn:big_ineqn} is
bounded by

\[
\left(\frac{t-1}{t}\right)
\frac{\Gamma(t-\alpha)}{\Gamma(t)}
\sum_{j=1}^{t-2}
    j^{-p-1}
    \frac{\Gamma(j+1)}{\Gamma(j+1-\alpha)} \lesssim
t^{-\alpha}
\sum_{j=1}^{t-2} j^{-p-1+\alpha}.\]
Putting all terms together, we have shown that

\begin{equation}
\label{eq:pre-final}
d(\widehat{\PP}_t, \PP_0)
\lesssim
t^{-p}
+ (t-1)^{-p-1}
+ 
t^{-\alpha}
\sum_{j=1}^{t-2} j^{-p-1+\alpha}.
\end{equation}
Standard bounds for p-series yield

\[
\sum_{j=1}^{t-2} j^{-1-(p-\alpha)}
\lesssim
\begin{cases}
1, & p > \alpha, \\
\log t, & p = \alpha, \\
t^{\alpha-p}, & p<\alpha.
\end{cases}
\]
We treat the three regimes separately, absorbing the first two
terms of \eqref{eq:pre-final} into the dominant term by adjusting the
constant if needed.

\smallskip\noindent\textbf{Case 1: $p>\alpha$.}
Then $\sum_{j=1}^{t-2} j^{-p-(1-\alpha)} \lesssim 1$, so the third term in
\eqref{eq:pre-final} is

\[\lesssim t^{-\alpha}.\]
Moreover, since $p>\alpha$ we have $t^{-p} \le t^{-\alpha}$ for all $t\ge1$, and
\[
(t-1)^{-p-1} \lesssim t^{-p-1} \le t^{-p} \le t^{-\alpha}.
\]
Hence, all three terms in \eqref{eq:pre-final} are asymptotically bounded by
$t^{-\alpha}$, and therefore
\[
d(\widehat{\PP}_t,\PP_0)
\lesssim
t^{-\alpha}.
\]

\smallskip\noindent\textbf{Case 2: $p=\alpha$.}
Then $\sum_{j=1}^{t-2} j^{-p-(1-\alpha)} = \sum_{j=1}^{t-2} j^{-1} \lesssim \log t$,
and $t^{-p} = t^{-\alpha}$.  Thus the third term in \eqref{eq:pre-final} is

\[
\lesssim t^{-\alpha}\log t.
\]
The first term in \eqref{eq:pre-final} is $t^{-p}$, which is
dominated by $t^{-p}\log t$, and the second term
$(t-1)^{-p-1}$ is of strictly smaller order.  Therefore

\[
d(\widehat{\PP}_t,\PP_0)
\lesssim
t^{-\alpha}\log t.
\]

\smallskip\noindent\textbf{Case 3: $p<\alpha$.}
Then
\[
\sum_{j=1}^{t-2} j^{-1-(p-\alpha)}
\lesssim t^{1-1-(p-\alpha)} = t^{\alpha-p}.
\]
Hence the third term in \eqref{eq:pre-final} is

\[
t^{-\alpha} t^{\alpha-p}
= t^{-p}.
\]
The first term $t^{-p}$ and the second term
$(t-1)^{-p-1}$ are dominated by the third term, so all three terms are
$O\!\left(t^{-p}\right)$ up to multiplicative constants.
Therefore
          
\[
d(\widehat{\PP}_t,\PP_0)
\lesssim
t^{-p}.
\]
Combining the three cases, we have proved that

\[
d(\widehat{\PP}_t, \PP_0)
\lesssim
\begin{cases}
t^{-\alpha}, & p>\alpha,\\
t^{-\alpha}\log t, & p=\alpha,\\
t^{-p}, & p<\alpha.
\end{cases}
\]
If \Cref{ass:poly} holds in probability, then all asymptotic inequalities above also hold in probability, which completes the proof.

\end{proof}

\section{BCRT: Theoretical Details and Proof}\label{apdx:BCRT}
        
    This section provides a formal proof of \textbf{\Cref{thm:recursive_convergence_bias} (Convergence under BCRT)}. We again follow the notation and framework introduced in the main text. The following elementary lemma will be useful in the proof of \Cref{thm:recursive_convergence_bias}.

    \paragraph{Lemma (Ces\`aro rate for drifting distributions).}
    \label{lem:cesaro_drift}
    Let $d(\cdot,\cdot)$ be a metric on distributions that satisfies Assumption~\ref{ass:convex}. Let $\{\PP_t\}_{t\ge 1}$ be a sequence of distributions and
    let $\PP_{0}$ be a target distribution such that for some
    $q>0$,
    \[
    d(\PP_t,\PP_{0}) \lesssim t^{-q}.
    \]
    Define the Ces\`aro averages
    \[
    \overline{\PP}_t := \frac{1}{t}\sum_{j=1}^{t} \PP_j.
    \]
    Then
    \[
    d(\overline{\PP}_t,\PP_{0})
    \lesssim
    t^{-\min\{q,1\}},
    \]
    up to a $\log t$ factor in the boundary case $q=1$.

    \begin{proof}
    By convexity of $d$ in its second argument,
    \[
    d(\overline{\PP}_t,\PP_{0})
    = d\!\left(\PP_{0},\frac{1}{t}\sum_{j=1}^t \PP_j\right)
    \le \frac{1}{t}\sum_{j=1}^t d(\PP_{0},\PP_j)
    \lesssim
    \frac{1}{t}\sum_{j=1}^t j^{-q}.
    \]
    The standard $p$-series asymptotics yield
    \[
    \frac{1}{t}\sum_{j=1}^t j^{-q}
    \asymp
    \begin{cases}
    t^{-q}, & 0<q<1,\\
    t^{-1}\log t, & q=1,\\
    t^{-1}, & q>1,
    \end{cases}
    \]
    which is $t^{-\min\{q,1\}}$ up to a logarithmic factor for $q=1$.
    \end{proof}

    We now prove \Cref{thm:recursive_convergence_bias}.

    
    \begin{proof}[Proof of \Cref{thm:recursive_convergence_bias}]
    At step $t$, define the real data average
    \[
    \overline{\PP}^{\textnormal{bias}}_t := \frac{1}{t}\sum_{j=1}^t \PP^{\textnormal{bias}}_j.
    \]
    Then
    \[
    \QQ_t =
    \frac{t m_1}{M_t}\,\overline{\PP}^{\textnormal{bias}}_t
    + \frac{m_2}{M_t}\sum_{j=1}^{t-1}\widehat \PP_j.
    \]
    Using the triangle inequality,
    \[
    d(\widehat \PP_t, \PP_{0})
    \le d(\widehat \PP_t, \QQ_t) + d(\QQ_t, \PP_{0}).
    \]
    \noindent\textbf{Step 1: Learning error and bias term.}
    By \Cref{ass:poly},
    \[
    d(\widehat \PP_t, \QQ_t) \lesssim M_t^{-p}.
    \]
    For the bias term, convexity of $d$ in its second argument gives
    \begin{align*}
    d(\QQ_t, \PP_{0})
    &=
    d\!\left(
        \PP_{0},
        \frac{t m_1}{M_t}\,\overline{\PP}^{\textnormal{bias}}_t
      + \frac{m_2}{M_t}\sum_{j=1}^{t-1}\widehat \PP_j
    \right)
    \\
    &\le
    \frac{t m_1}{M_t}\, d(\PP_{0}, \overline{\PP}^{\textnormal{bias}}_t)
    +
    \frac{m_2}{M_t} \sum_{j=1}^{t-1} d(\PP_{0}, \widehat \PP_j).
    \end{align*}
    By the preceding lemma (\Cref{lem:cesaro_drift}),
    \[
    d(\overline{\PP}_t^{\textnormal{bias}}, \PP_{0}) \lesssim t^{-\min\{q,1\}}
    \quad
    (\text{up to }\log t \text{ if } q=1).
    \]
    Since $M_t \asymp t(m_1+m_2)$,
    \[
    \frac{t m_1}{M_t} d(\overline{\PP}_t^{\textnormal{bias}}, \PP_{0})
    \lesssim t^{-\min\{q,1\}}.
    \]
    Define
    \[
    d_t := d(\widehat \PP_t, \PP_{0}),
    \qquad
    S_{t-1} := \sum_{j=1}^{t-1} d_j.
    \]
    We obtain the basic recursion
    
    \begin{equation}
    \label{eq:basic_recursion_bias_new}
    d_t
    \lesssim
    M_t^{-p}
    + t^{-\min\{q,1\}}
    + \frac{m_2}{M_t}\, S_{t-1}.
    \end{equation}

    \noindent\textbf{Step 2: Recursion for the partial sums.}
    Repeating the argument at step $t-1$ gives
    \[
    d_{t-1}
    \lesssim
    M_{t-1}^{-p}
    + (t-1)^{-\min\{q,1\}}
    + \frac{m_2}{M_{t-1}} S_{t-2}.
    \]
    Hence
    \[
    S_{t-1}
    = d_{t-1} + S_{t-2}
    \lesssim
    M_{t-1}^{-p} + (t-1)^{-\min\{q,1\}}
    + \left(1 + \frac{m_2}{M_{t-1}}\right) S_{t-2}.
    \]
    Iterating yields
    \begin{align}
    S_{t-1}
    &\lesssim
    \sum_{j=1}^{t-1}
    \bigl( M_j^{-p} + j^{-\min\{q,1\}} \bigr)
    \left[
        \prod_{k=j}^{t-2}
            \left(1 + \frac{m_2}{M_k}\right)
    \right].
    \label{eq:Sn_bias_expansion_new}
    \end{align}
    
    \noindent\textbf{Step 3: Substituting back.}
    Substituting \eqref{eq:Sn_bias_expansion_new} into
    \eqref{eq:basic_recursion_bias_new} gives
    \begin{align*}
    d_t&\lesssim
    M_t^{-p}
    + t^{-\min\{q,1\}}
    + \frac{m_2}{M_t}
    \sum_{j=1}^{t-2}
    \bigl( M_j^{-p} + j^{-\min\{q,1\}} \bigr)
    \left[
        \prod_{k=j}^{t-2}
            \left(1 + \frac{m_2}{M_k}\right)
    \right] \\
    &\lesssim
    M_t^{-p}
    + t^{-\min\{q,1\}}
    + (1-\alpha)
    \sum_{j=1}^{t-2}
    \bigl( M_j^{-p} + j^{-\min\{q,1\}} \bigr)
    \left[
        \frac{1}{M_t}\prod_{k=j}^{t-2}
            \left(1 + \frac{m_2}{M_k}\right)
    \right] .
    \label{eq:dn_bias_preMn_new}
    \end{align*}
    Using $M_t \asymp t(m_1+m_2)$ and
    \[
    \alpha = \frac{m_1}{m_1+m_2},
    \qquad
    \frac{m_2}{M_t} \asymp \frac{1-\alpha}{t},
    \]
    the remainder of the proof follows the same argument as in the unbiased case:
    \[
    \frac{1}{t}
    \prod_{k=j}^{t-2}\!\left(1 + \frac{1-\alpha}{k}\right)
    \asymp
    t^{-\alpha} j^{-(1-\alpha)}.
    \]
    In this form, the three effects are additive:

    \begin{align*}
    d_t 
    &\;\lesssim\;
    M_t^{-p}
    + t^{-\min\{q,1\}}
    + (1-\alpha)\, t^{-\alpha}
    \sum_{j=1}^{t-2}
    \bigl(M_j^{-p} + j^{-\min\{q,1\}}\bigr)
    j^{-(1-\alpha)}
    \\
    &\;\lesssim\;
    t^{-p}
    + t^{-\min\{q,1\}}
    + t^{-\alpha}
    \sum_{j=1}^{t-2}
    \left(
    j^{-1-(p-\alpha)}
    +
    j^{-1-(\min\{q,1\}-\alpha)}
    \right)
    \\
    &\;\lesssim\;
    t^{-p}
    + t^{-\min\{q,1\}}
    + 
    \begin{cases}
    t^{-\alpha}, & p > \alpha, \\
    t^{-\alpha}\log t, & p = \alpha, \\
    t^{\alpha-p}, & p < \alpha,
    \end{cases}
    +
    \begin{cases}
    t^{-\alpha}, & \min\{q,1\} > \alpha, \\
    t^{-\alpha}\log t, & \min\{q,1\} = \alpha, \\
    t^{-\min\{q,1\}}, & \min\{q,1\} < \alpha,
    \end{cases}\\
    &\lesssim t^{-p}
    + t^{-\min\{q,1\}}+t^{-\min\{p,\alpha\}}+t^{-\min\{q,1\}}\\
    &\lesssim t^{-\min\{p,q,\alpha\}},
    \end{align*}
    up to a log factor. All asymptotic inequalities may be replaced by their equivalents in probability.

    \end{proof}

\section{Additional Simulation Details}\label{apdx:sims}

This section provides a detailed description of the pipeline for all simulations, including the justification for the baseline rates of convergence, underlying data-generating process, numerical grids, estimators, metrics, and the recursive procedure used to combine real and synthetic data, as well as additional figures and tables.

\subsection{Baseline Rates Used in the Experiments}

The theoretical predictions of CRT and BCRT depend on the baseline
convergence rate \(d(\hat \PP_n,\PP_0)=O_p(n^{-p})\). For all estimators
considered in our experiments we take \(p=1/2\), corresponding to the
standard root-\(n\) rate. For empirical distribution estimation this
follows from classical results in \(W_1\) \citep{fournier2015rate} and
for MMD under bounded kernels \citep{gretton2012kernel}. For the KDE
experiments, choosing \(h_n\asymp n^{-1/2}\) preserves the same rate
by a straightforward smoothing argument.

\subsection{Additional CRT Simulation details: ECDF and KDE}

In the CRT simulations, the target distribution $\PP_0$ is a 1-dimensional mixture of two Gaussian components. This smooth distribution serves as the ground truth throughout both experiments. A numerical grid covering the effective support of $\PP_0$ is constructed once and used for all deterministic evaluations of distributional discrepancies. The true density and CDF of $\PP_0$ are available in closed form on this grid.

Both simulations follow the CRT procedure in the section \textbf{Contaminated Recursive Training}. At iteration $t$, a batch of $m_1$ real samples drawn from $\PP_0$ is added to the accumulated dataset, along with $m_2 = ((1-\alpha)/\alpha)m_1$ synthetic samples drawn from the previous generator $\widehat{\PP}_{t-1}$.

In the CRT simulations, we use two estimators: the ECDF as the estimator $\widehat{\PP}_t$, and a KDE whose bandwidth is chosen to correspond to a smoothness parameter $p$; the KDE thus serves as a concrete model with known uncontaminated convergence rate. In both settings, the estimator at iteration $t$ defines a density and distribution function $(\widehat f_t, \widehat F_t)$ evaluated on the numerical grid.

The primary metric is the $W_1$ distance,
\[
W_1(\widehat{\PP}_t, \PP_0)
  \approx \int \bigl| F_0(x) - \widehat F_t(x) \bigr|\,dx,
\]
computed using the trapezoidal rule on the fixed grid. MMD is evaluated using the same plug-in approach applied to the grid-based density estimates.

To estimate convergence rates, we record the sequence of losses
$\{d(\widehat{\PP}_t,\PP_0)\}$ over iterations and fit a power law of the form
\[
\log d(\widehat{\PP}_t,\PP_0)
     = a + b \log M_t.
\]
After discarding an initial burn-in period, the fitted slope $b$ yields the empirical rate. When $\alpha = p$, CRT theory predicts a logarithmic phase transition; in this case, losses are pre-normalized by $\log t$ before regression. The resulting values of $b$ are reported as the observed convergence rates under recursive contamination. All experimental parameters for model training and sampling are summarized in \Cref{tab:sim12-params}.

    \begin{table}[h]
        \centering
        \small
        \begin{tabular}{lcl}
            \toprule
            \textbf{Parameter} & \textbf{Value} & \textbf{Description} \\
            \midrule
            $m_1$ & $50$ & Real samples per iteration \\
            $\alpha$ & $\{0.1,\dots,0.9,1.0\}$ & Real-data fraction \\
            $T$ & $2000$ & Total CRT iterations \\
            $n_{\mathrm{reps}}$ & $100$ & Number of repetitions \\
            $m_{\mathrm{grid}}$ & $200$ & Grid size for deterministic evaluation \\
            $[x_{\min},x_{\max}]$ & Mixture-based & Grid interval for density/CDF evaluation \\
            $w_1$ & $0.35$ & Mixture weight \\
            $\mu_1,\sigma_1$ & $-2.0,\; 0.8$ & First Gaussian component \\
            $\mu_2,\sigma_2$ & $1.0,\; 1.3$ & Second Gaussian component \\
            $h_0$ & $0.5$ & Base KDE bandwidth \\
            \bottomrule
            \end{tabular}
        \caption{CRT Simulation parameters for KDE and ECDF estimators.}
        \label{tab:sim12-params}
    \end{table}

    Finally, we show the estimated densities in each case, for every $\alpha$ in \Cref{fig:sim-crt-ecdf-progress} and \Cref{fig:sim-crt-kde-progress}. Additionally, examples of distributional loss curves and fitted slopes are provided for each alpha in \Cref{fig:sim-crt-ecdf-loss-W1} and \Cref{fig:sim-crt-kde-loss-W1}.
    
\begin{figure}[ht!]
    \centering
    \begin{minipage}{0.18\textwidth}
        \centering
        \includegraphics[width=\linewidth]{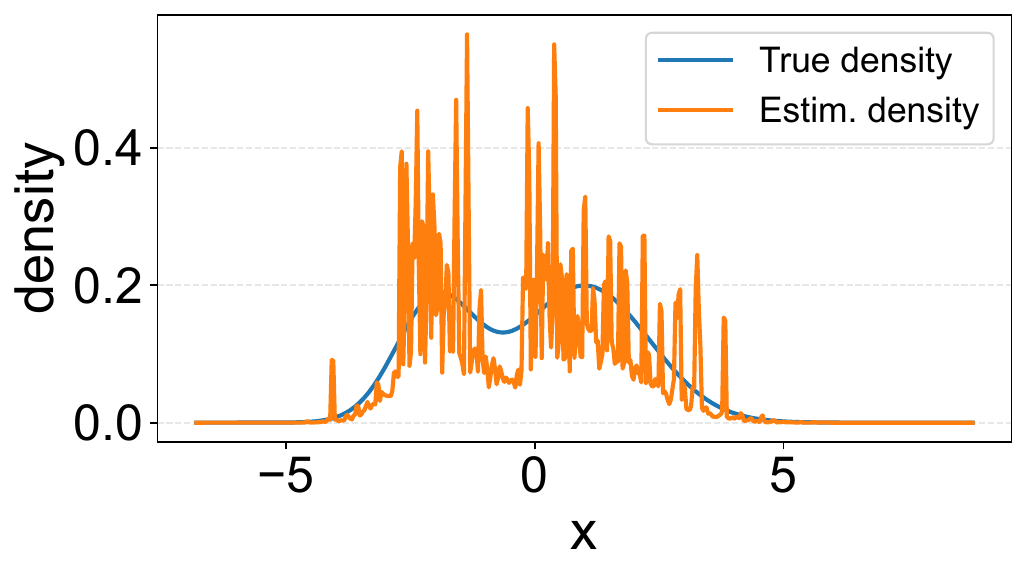}
        \caption*{$\alpha=0.1$}
    \end{minipage}
    \hfill
    \begin{minipage}{0.18\textwidth}
        \centering
        \includegraphics[width=\linewidth]{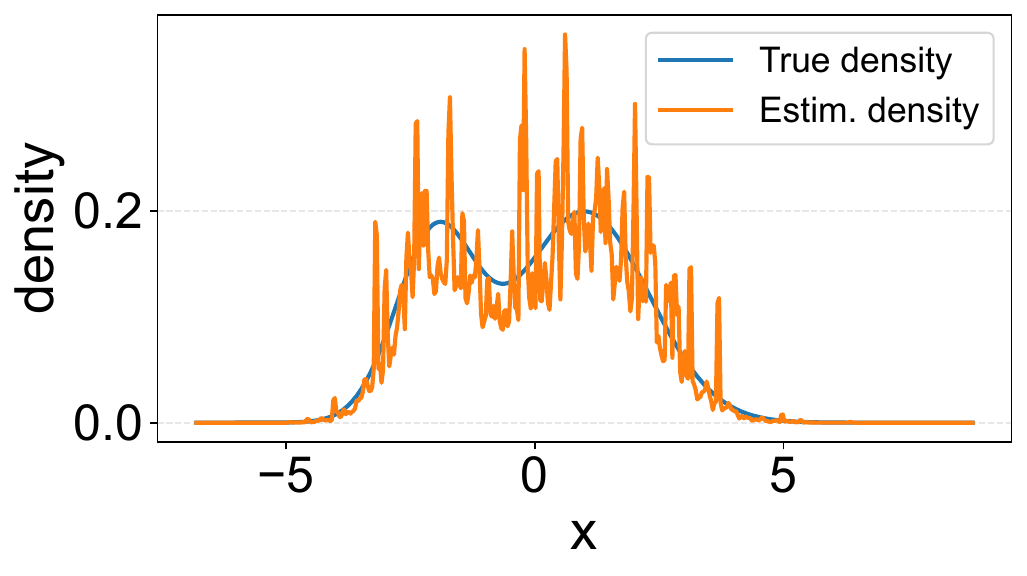}
        \caption*{$\alpha=0.2$}
    \end{minipage}
    \hfill
    \begin{minipage}{0.18\textwidth}
        \centering
        \includegraphics[width=\linewidth]{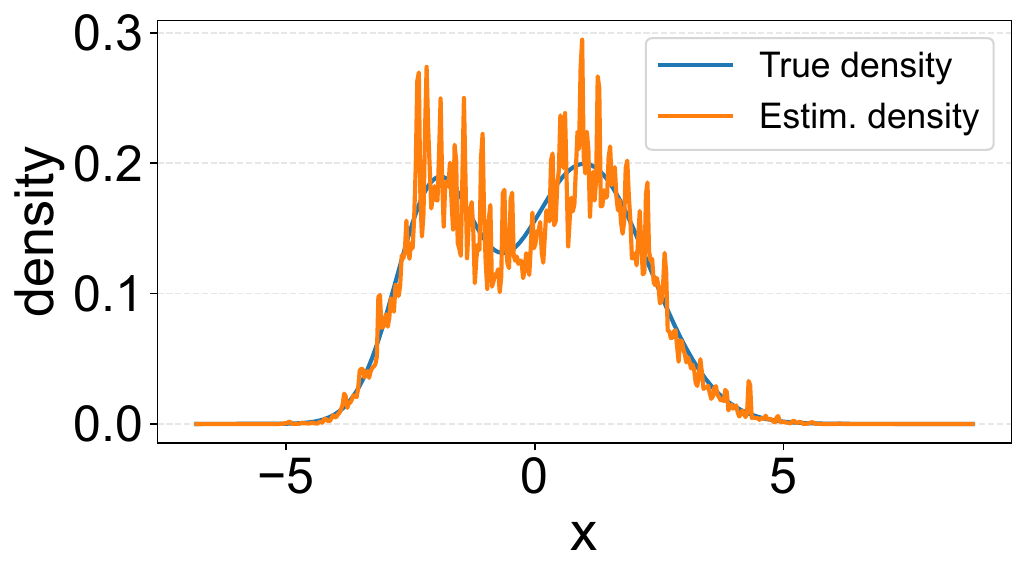}
        \caption*{$\alpha=0.3$}
    \end{minipage}
    \hfill
    \begin{minipage}{0.18\textwidth}
        \centering
        \includegraphics[width=\linewidth]{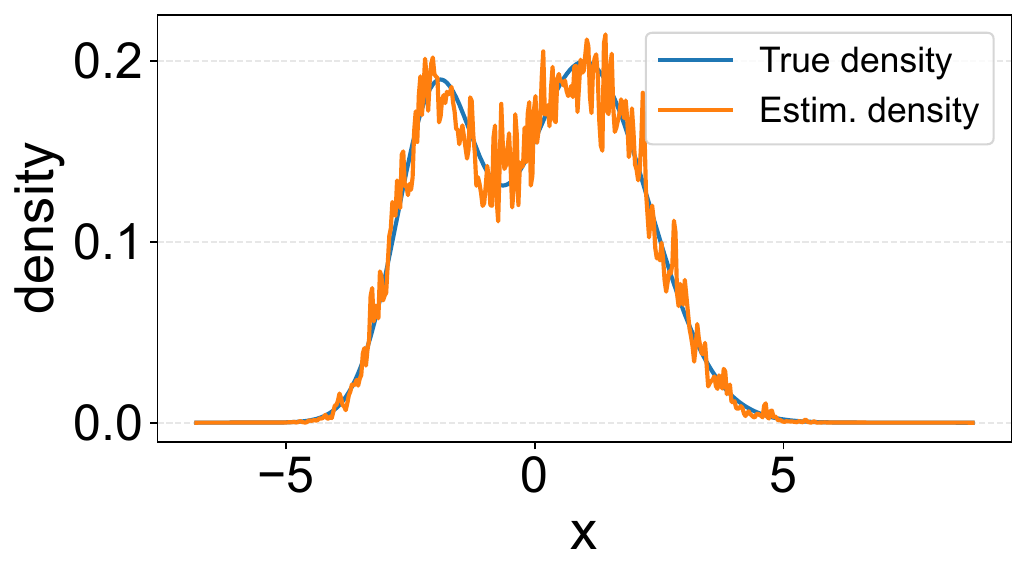}
        \caption*{$\alpha=0.4$}
    \end{minipage}
    \hfill
    \begin{minipage}{0.18\textwidth}
        \centering
        \includegraphics[width=\linewidth]{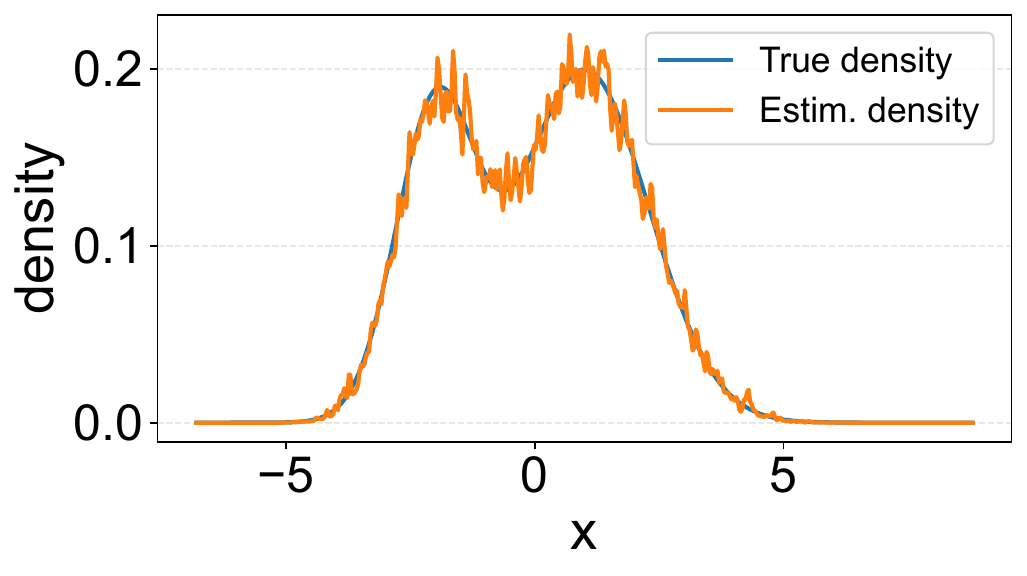}
        \caption*{$\alpha=0.5$}
    \end{minipage}

    \vspace{1em}

    \begin{minipage}{0.18\textwidth}
        \centering
        \includegraphics[width=\linewidth]{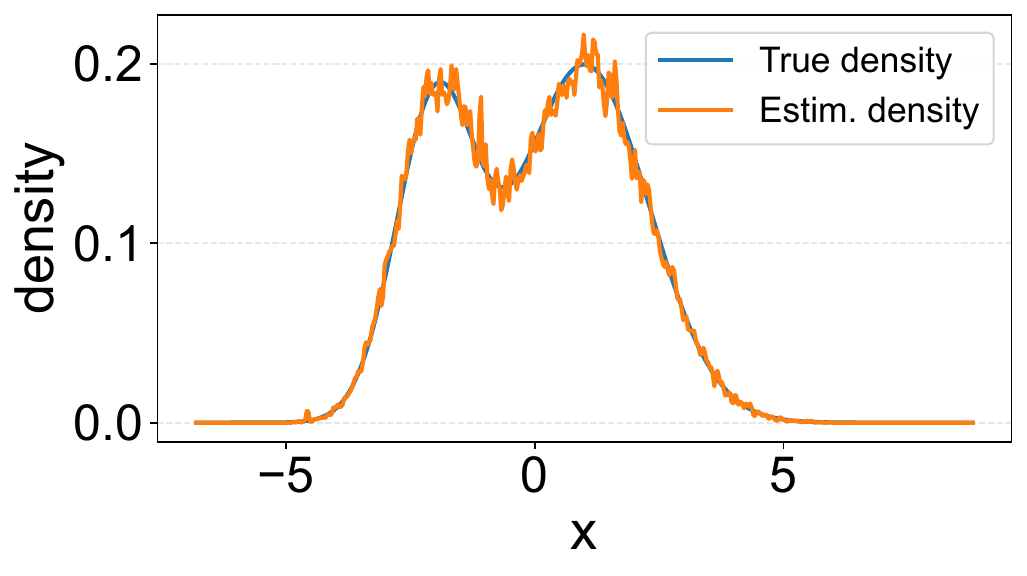}
        \caption*{$\alpha=0.6$}
    \end{minipage}
    \hfill
    \begin{minipage}{0.18\textwidth}
        \centering
        \includegraphics[width=\linewidth]{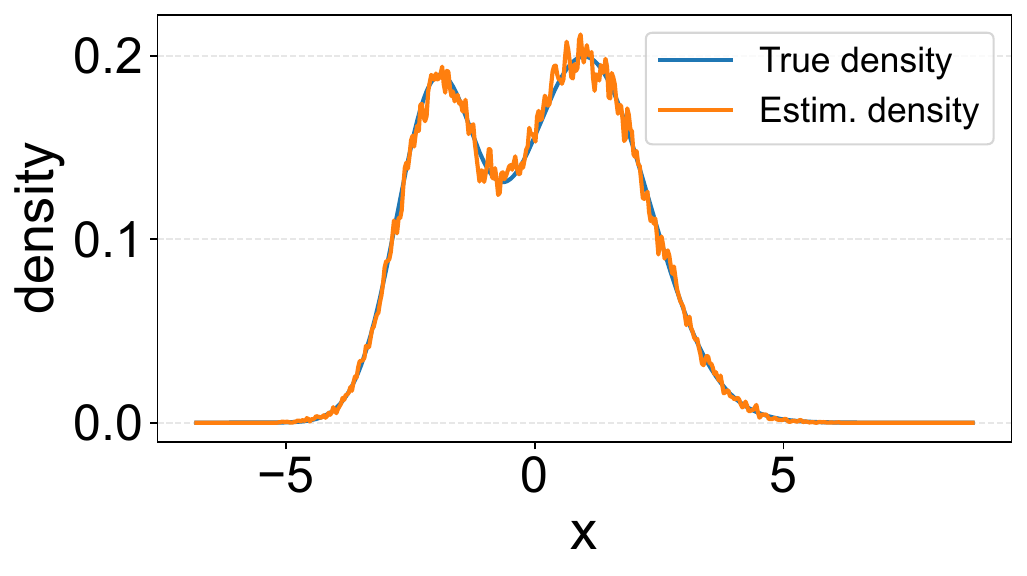}
        \caption*{$\alpha=0.7$}
    \end{minipage}
    \hfill
    \begin{minipage}{0.18\textwidth}
        \centering
        \includegraphics[width=\linewidth]{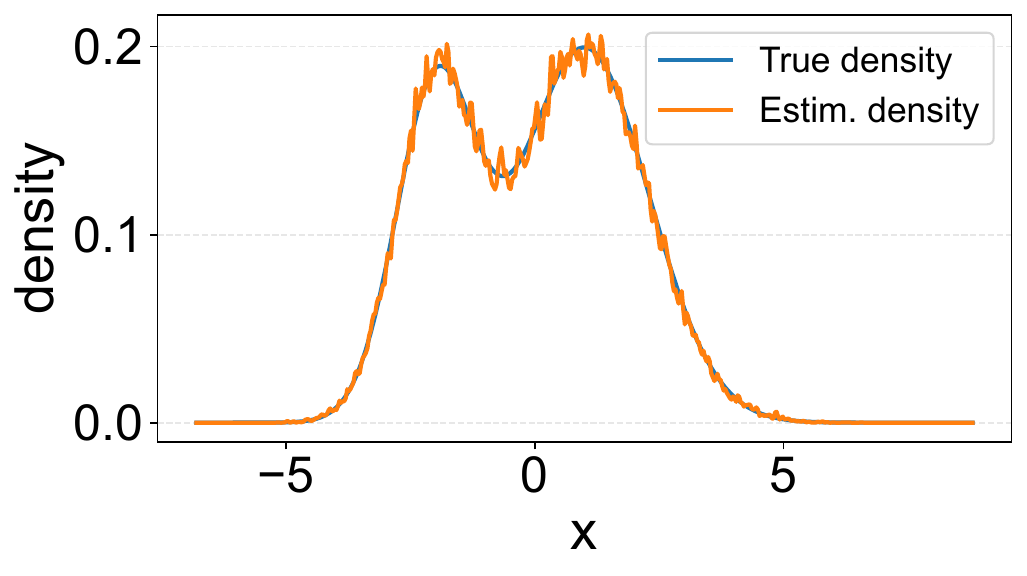}
        \caption*{$\alpha=0.8$}
    \end{minipage}
    \hfill
    \begin{minipage}{0.18\textwidth}
        \centering
        \includegraphics[width=\linewidth]{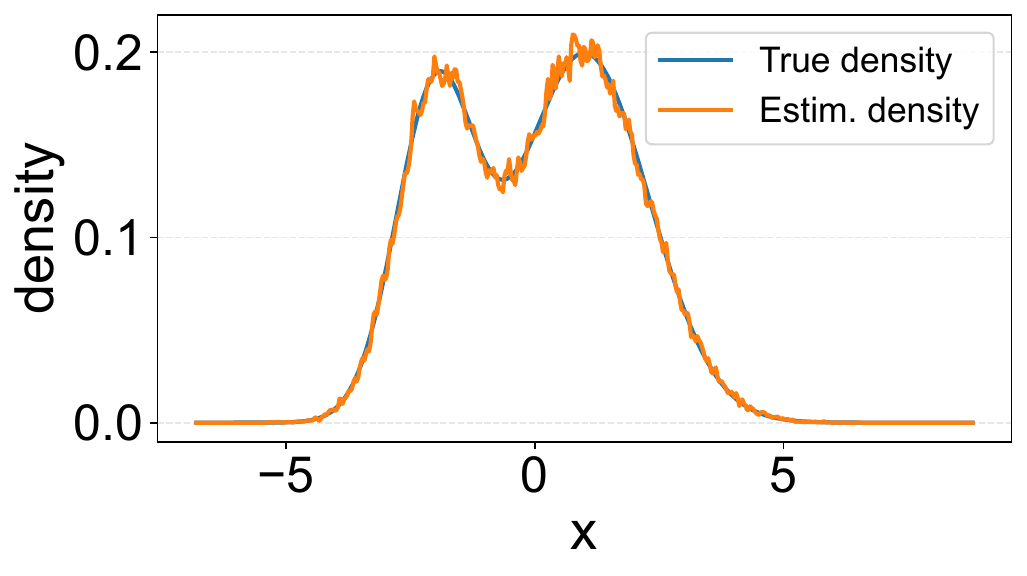}
        \caption*{$\alpha=0.9$}
    \end{minipage}
    \hfill
    \begin{minipage}{0.18\textwidth}
        \centering
        \includegraphics[width=\linewidth]{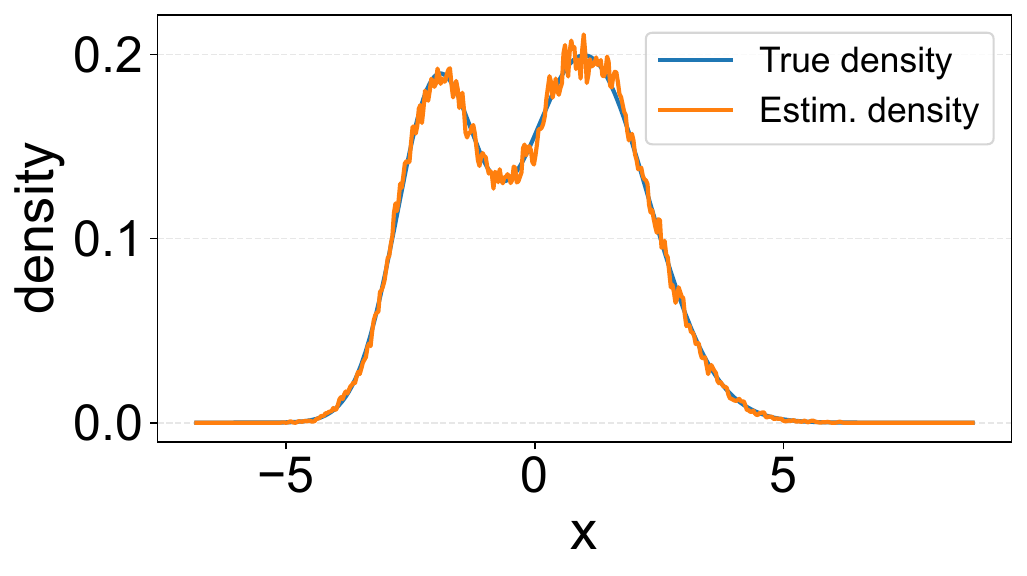}
        \caption*{$\alpha=1.0$}
    \end{minipage}

    \caption{CRT Simulation (ECDF Estimator). Final output distributions $\widehat{\PP}_T$ for varying values of $\alpha$ (real-data fraction) where all models are run for the same number of CRT iterations. This fixed compute budget for all models was chosen to minimize the effect of numerical plateaus in the loss when computing optimal transport distances after models have fully converged. Under this fixed compute budget, convergence improves as $\alpha$ increases; however, for all $\alpha$ we observe the predicted rate of convergence.}
    \label{fig:sim-crt-ecdf-progress}

\end{figure}

    \begin{figure}[ht!]
        \centering
    
        \begin{minipage}{0.18\textwidth}
            \centering
            \includegraphics[width=\linewidth]{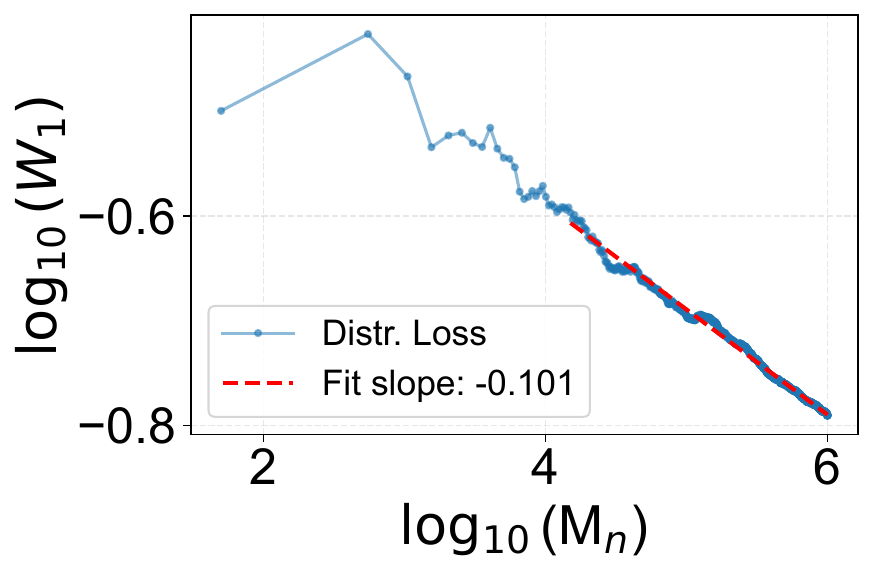}
        \end{minipage}
        \hfill
        \begin{minipage}{0.18\textwidth}
            \centering
            \includegraphics[width=\linewidth]{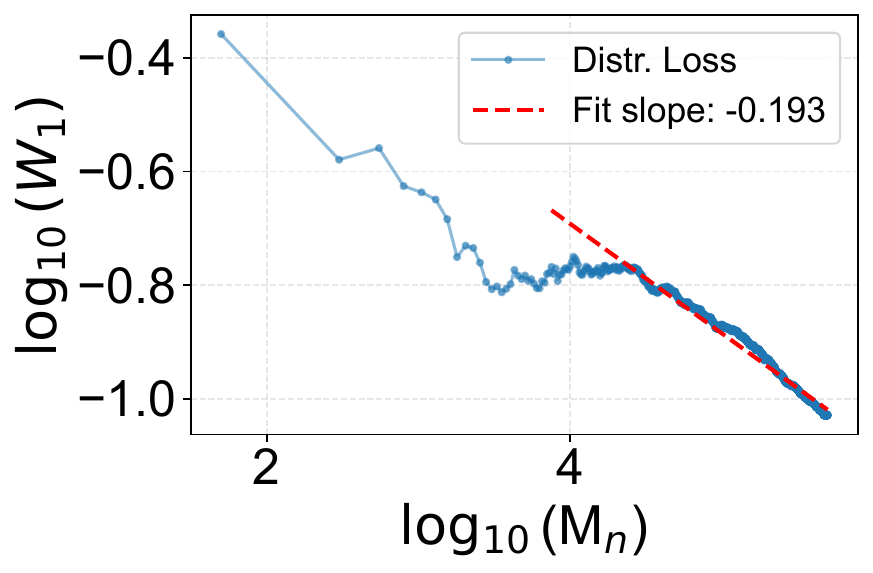}
        \end{minipage}
        \hfill
        \begin{minipage}{0.18\textwidth}
            \centering
            \includegraphics[width=\linewidth]{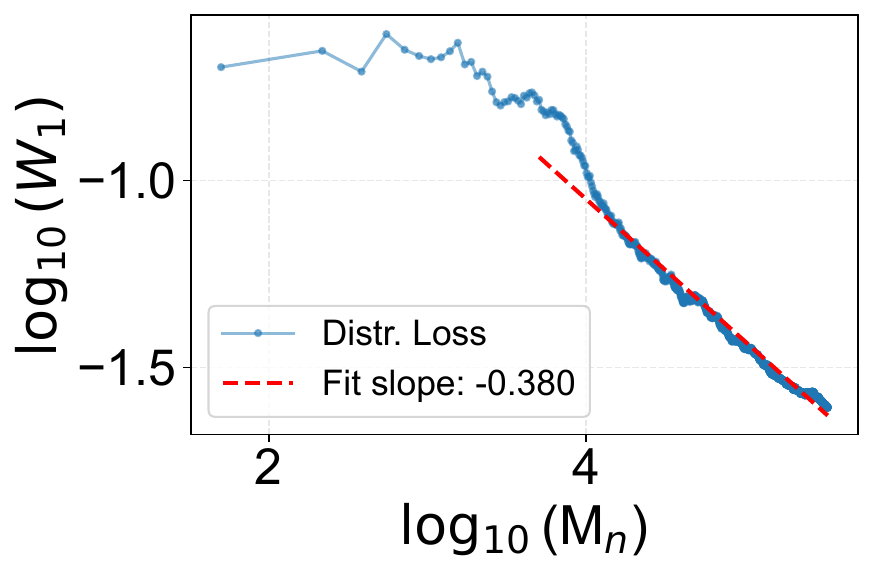}
        \end{minipage}
        \hfill
        \begin{minipage}{0.18\textwidth}
            \centering
            \includegraphics[width=\linewidth]{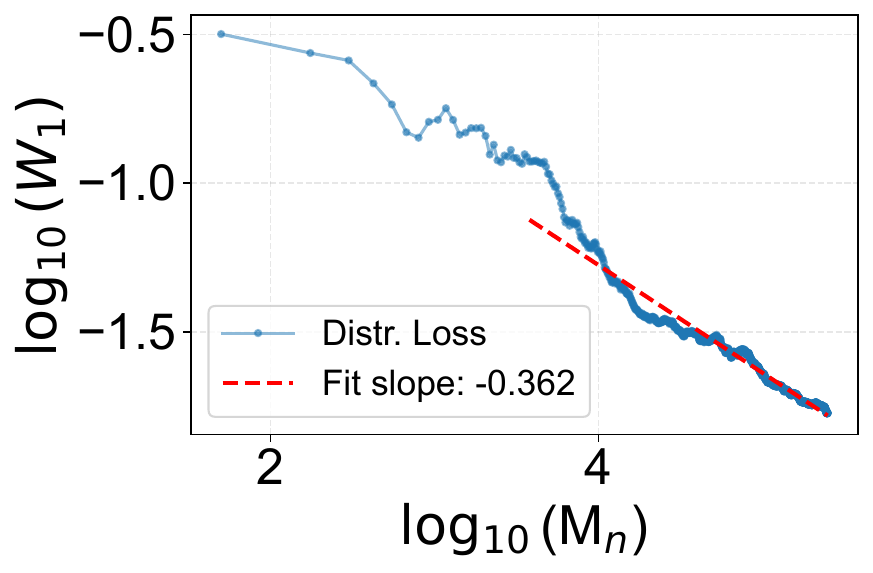}
        \end{minipage}
        \hfill
        \begin{minipage}{0.18\textwidth}
            \centering
            \includegraphics[width=\linewidth]{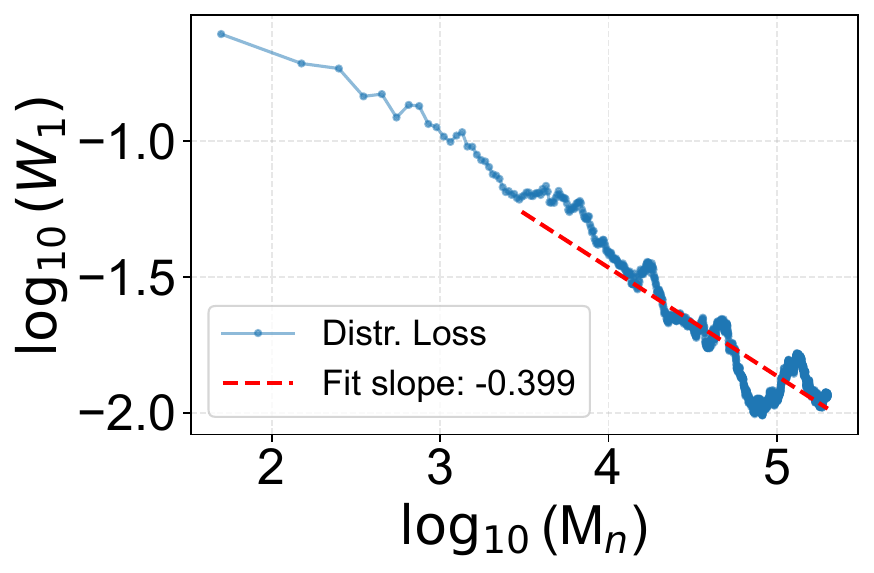}
        \end{minipage}
    
    
        \begin{minipage}{0.18\textwidth}
            \centering
            \includegraphics[width=\linewidth]{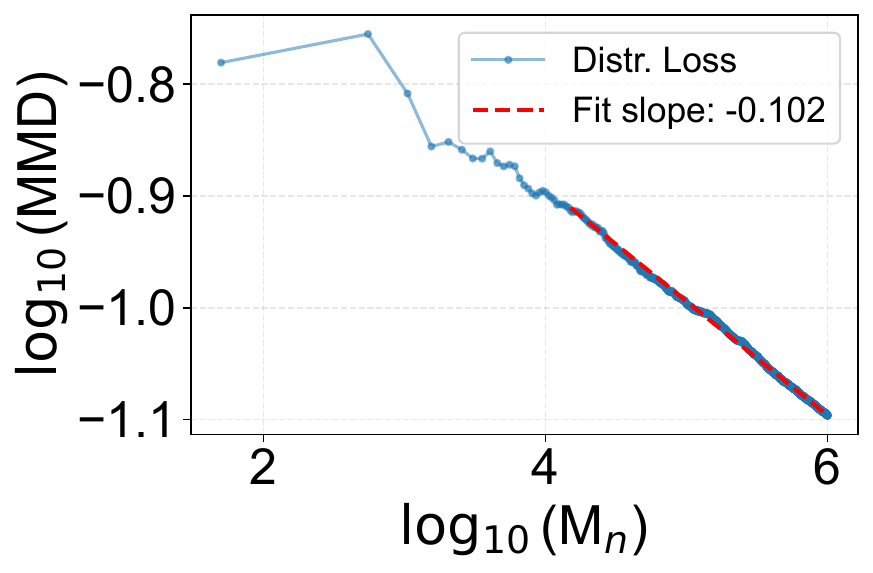}
            \caption*{\small $\alpha=0.10$}
        \end{minipage}
        \hfill
        \begin{minipage}{0.18\textwidth}
            \centering
            \includegraphics[width=\linewidth]{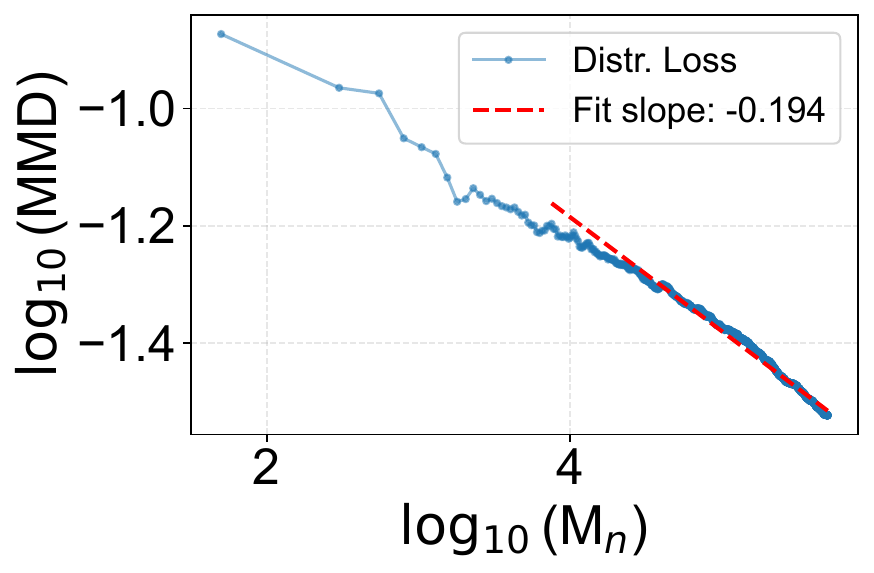}
            \caption*{\small $\alpha=0.20$}
        \end{minipage}
        \hfill
        \begin{minipage}{0.18\textwidth}
            \centering
            \includegraphics[width=\linewidth]{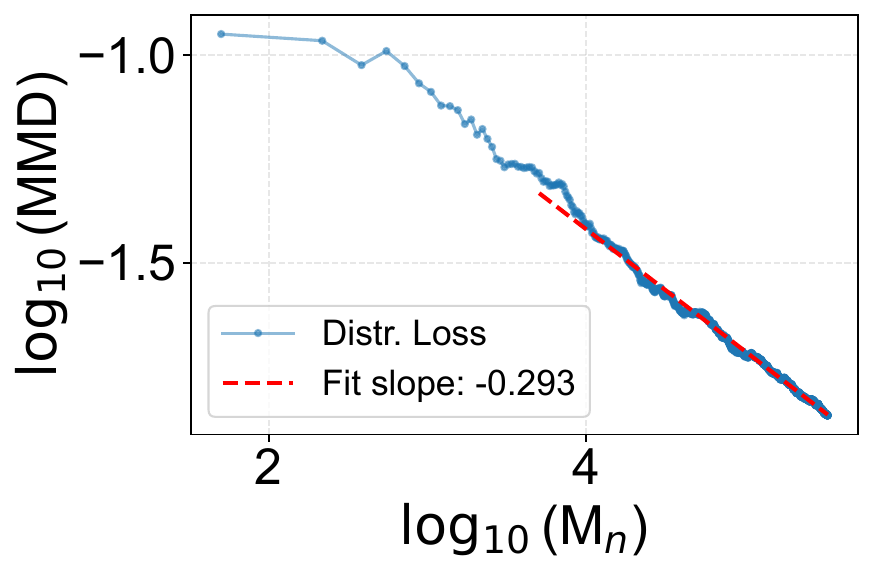}
            \caption*{\small $\alpha=0.30$}
        \end{minipage}
        \hfill
        \begin{minipage}{0.18\textwidth}
            \centering
            \includegraphics[width=\linewidth]{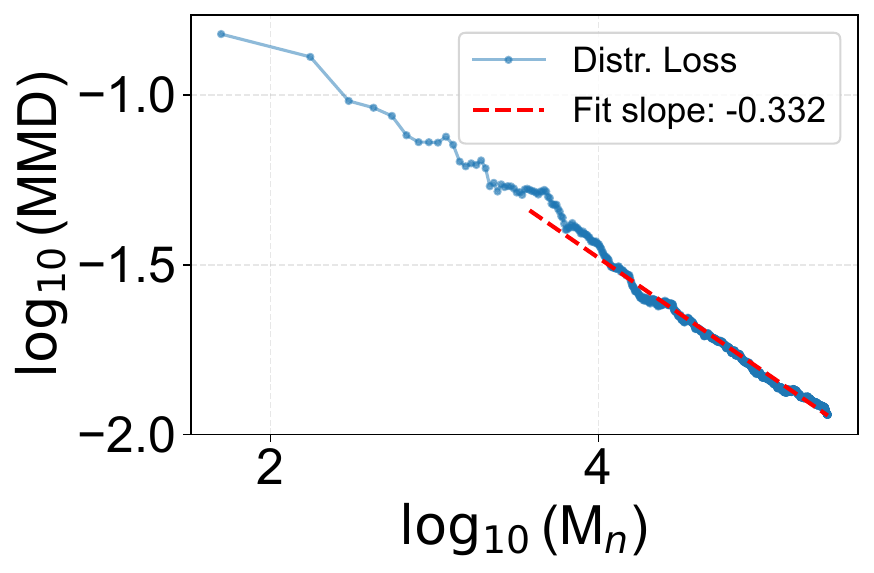}
            \caption*{\small $\alpha=0.40$}
        \end{minipage}
        \hfill
        \begin{minipage}{0.18\textwidth}
            \centering
            \includegraphics[width=\linewidth]{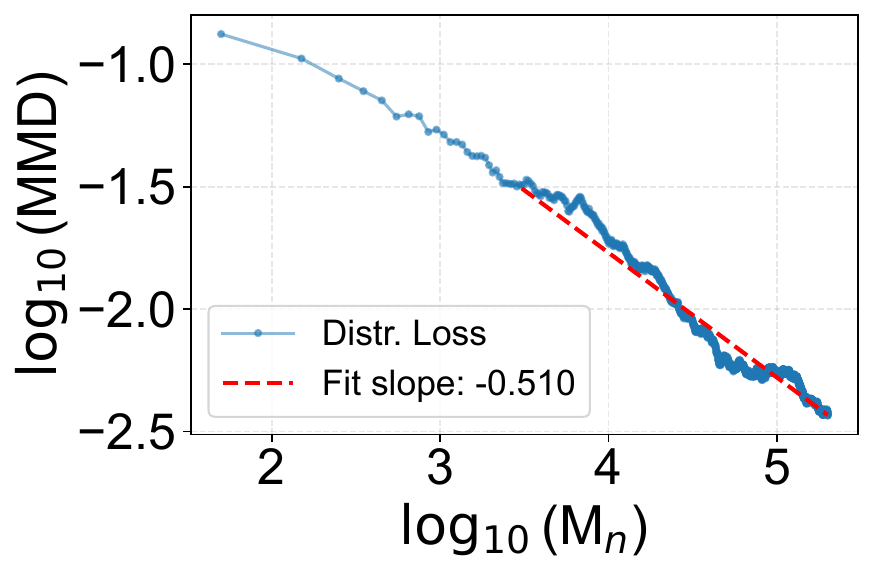}
            \caption*{\small $\alpha=0.50$}
        \end{minipage}

        \vspace{1em}

        \begin{minipage}{0.18\textwidth}
            \centering
            \includegraphics[width=\linewidth]{Figures/sim1-ecdf-progress/p=0.50_alpha=0.10_W1.pdf}
        \end{minipage}
        \hfill
        \begin{minipage}{0.18\textwidth}
            \centering
            \includegraphics[width=\linewidth]{Figures/sim1-ecdf-progress/p=0.50_alpha=0.20_W1.pdf}
        \end{minipage}
        \hfill
        \begin{minipage}{0.18\textwidth}
            \centering
            \includegraphics[width=\linewidth]{Figures/sim1-ecdf-progress/p=0.50_alpha=0.30_W1.pdf}
        \end{minipage}
        \hfill
        \begin{minipage}{0.18\textwidth}
            \centering
            \includegraphics[width=\linewidth]{Figures/sim1-ecdf-progress/p=0.50_alpha=0.40_W1.pdf}
        \end{minipage}
        \hfill
        \begin{minipage}{0.18\textwidth}
            \centering
            \includegraphics[width=\linewidth]{Figures/sim1-ecdf-progress/p=0.50_alpha=0.50_W1.pdf}
        \end{minipage}

    
        \begin{minipage}{0.18\textwidth}
            \centering
            \includegraphics[width=\linewidth]{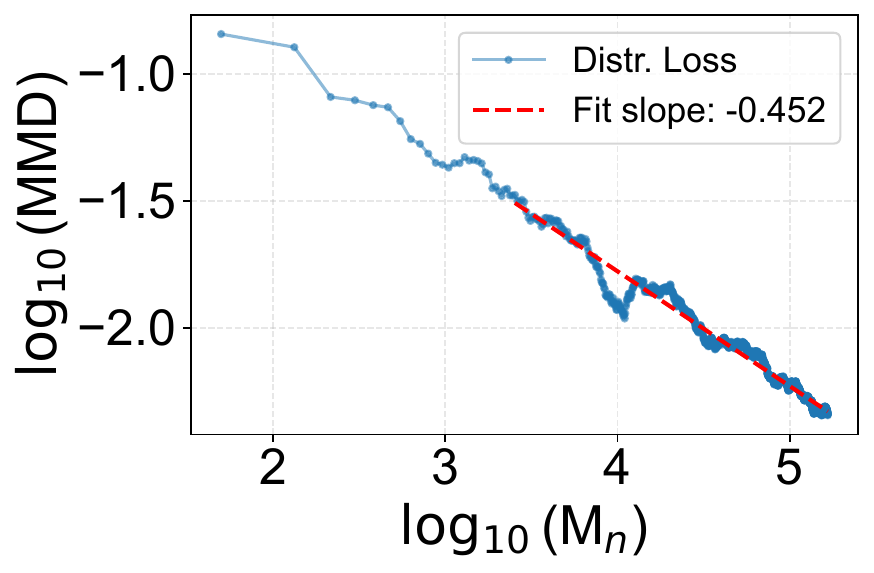}
            \caption*{\small $\alpha=0.60$}
        \end{minipage}
        \hfill
        \begin{minipage}{0.18\textwidth}
            \centering
            \includegraphics[width=\linewidth]{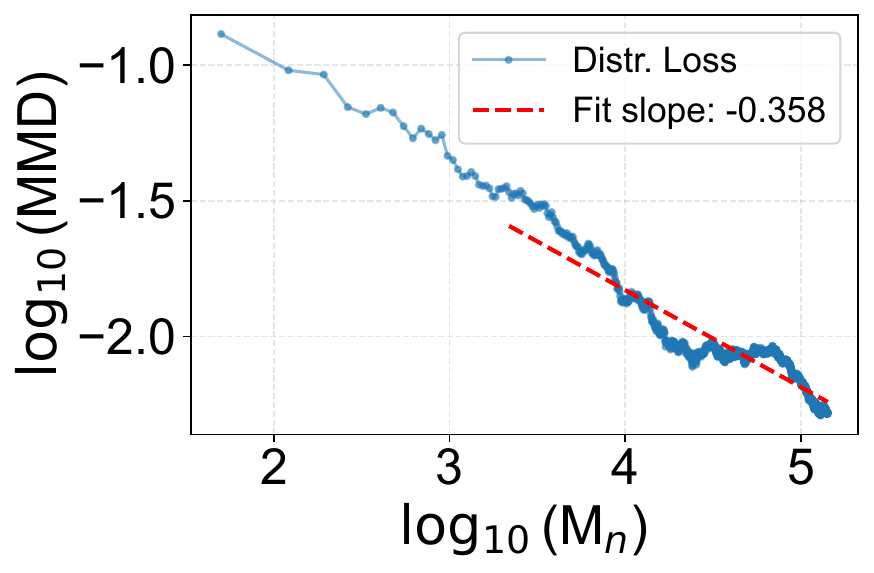}
            \caption*{\small $\alpha=0.70$}
        \end{minipage}
        \hfill
        \begin{minipage}{0.18\textwidth}
            \centering
            \includegraphics[width=\linewidth]{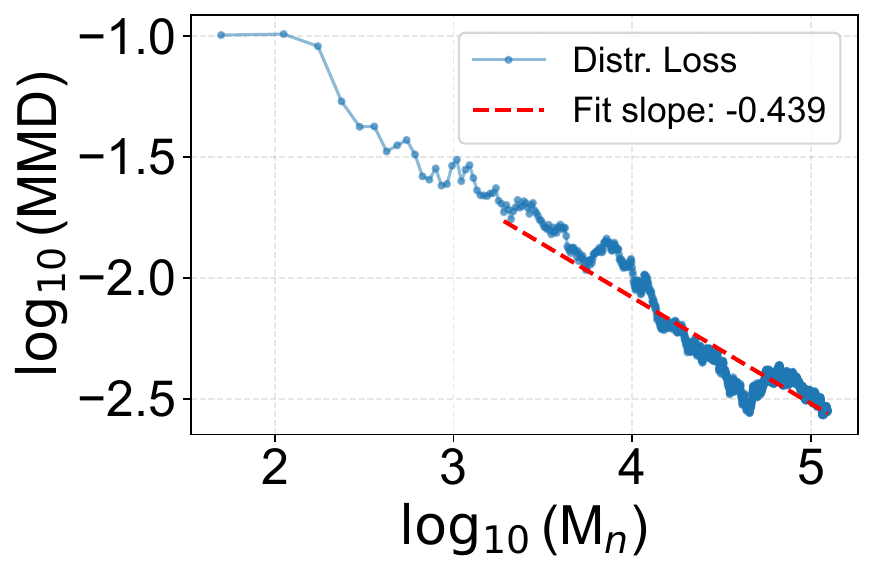}
            \caption*{\small $\alpha=0.80$}
        \end{minipage}
        \hfill
        \begin{minipage}{0.18\textwidth}
            \centering
            \includegraphics[width=\linewidth]{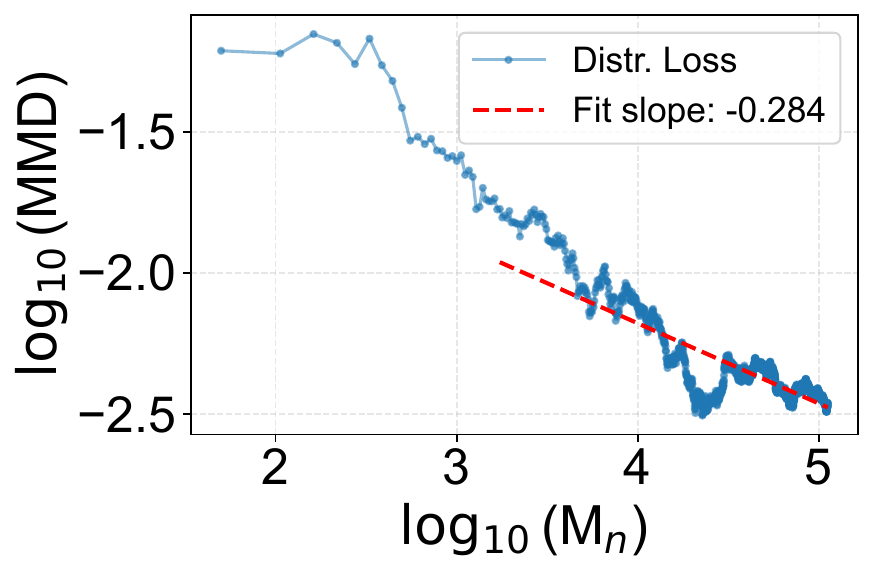}
            \caption*{\small $\alpha=0.90$}
        \end{minipage}
        \hfill
        \begin{minipage}{0.18\textwidth}
            \centering
            \includegraphics[width=\linewidth]{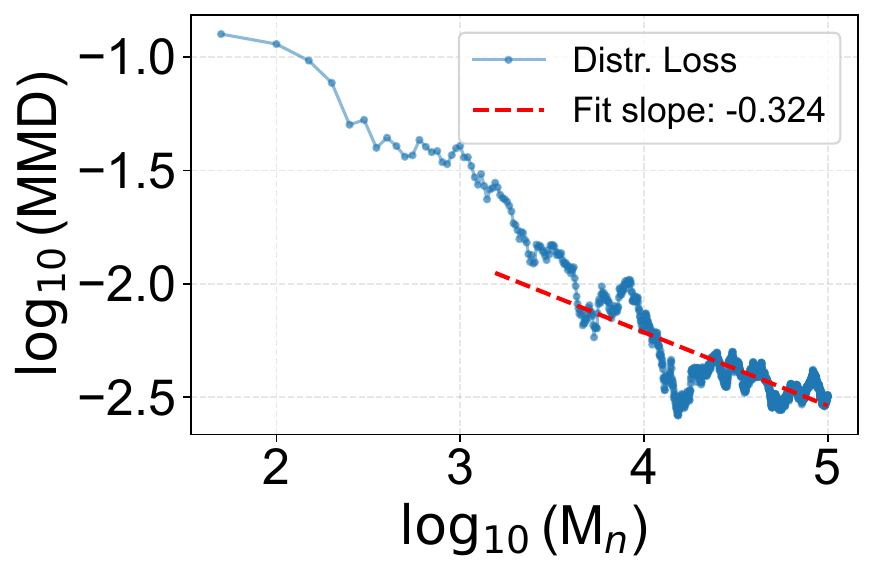}
            \caption*{\small $\alpha=1.00$}
        \end{minipage}

        \caption{CRT Simulation (ECDF Estimator). $W_1$ and MMD distributional losses and fitted slopes for varying values of $\alpha$ (real-data fraction) for a single run.
        }
        \label{fig:sim-crt-ecdf-loss-W1}
    \end{figure}

\begin{figure}[ht!]
    \centering

    \begin{minipage}{0.18\textwidth}
        \centering
        \includegraphics[width=\linewidth]{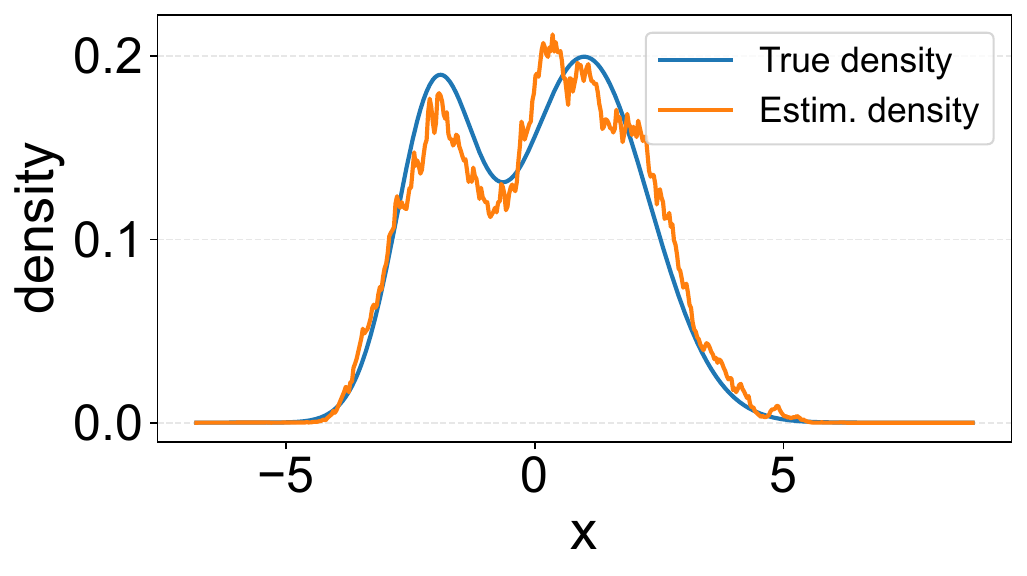}
        \caption*{$\alpha=0.1$}
    \end{minipage}
    \hfill
    \begin{minipage}{0.18\textwidth}
        \centering
        \includegraphics[width=\linewidth]{Figures/sim1-kde-progress/density_saved_p=0.50_alpha=0.10.pdf}
        \caption*{$\alpha=0.2$}
    \end{minipage}
    \hfill
    \begin{minipage}{0.18\textwidth}
        \centering
        \includegraphics[width=\linewidth]{Figures/sim1-kde-progress/density_saved_p=0.50_alpha=0.10.pdf}
        \caption*{$\alpha=0.3$}
    \end{minipage}
    \hfill
    \begin{minipage}{0.18\textwidth}
        \centering
        \includegraphics[width=\linewidth]{Figures/sim1-kde-progress/density_saved_p=0.50_alpha=0.10.pdf}
        \caption*{$\alpha=0.4$}
    \end{minipage}
    \hfill
    \begin{minipage}{0.18\textwidth}
        \centering
        \includegraphics[width=\linewidth]{Figures/sim1-kde-progress/density_saved_p=0.50_alpha=0.10.pdf}
        \caption*{$\alpha=0.5$}
    \end{minipage}

    \vspace{1em}

    \begin{minipage}{0.18\textwidth}
        \centering
        \includegraphics[width=\linewidth]{Figures/sim1-kde-progress/density_saved_p=0.50_alpha=0.10.pdf}
        \caption*{$\alpha=0.6$}
    \end{minipage}
    \hfill
    \begin{minipage}{0.18\textwidth}
        \centering
        \includegraphics[width=\linewidth]{Figures/sim1-kde-progress/density_saved_p=0.50_alpha=0.10.pdf}
        \caption*{$\alpha=0.7$}
    \end{minipage}
    \hfill
    \begin{minipage}{0.18\textwidth}
        \centering
        \includegraphics[width=\linewidth]{Figures/sim1-kde-progress/density_saved_p=0.50_alpha=0.10.pdf}
        \caption*{$\alpha=0.8$}
    \end{minipage}
    \hfill
    \begin{minipage}{0.18\textwidth}
        \centering
        \includegraphics[width=\linewidth]{Figures/sim1-kde-progress/density_saved_p=0.50_alpha=0.10.pdf}
        \caption*{$\alpha=0.9$}
    \end{minipage}
    \hfill
    \begin{minipage}{0.18\textwidth}
        \centering
        \includegraphics[width=\linewidth]{Figures/sim1-kde-progress/density_saved_p=0.50_alpha=0.10.pdf}
        \caption*{$\alpha=1.0$}
    \end{minipage}

    \caption{CRT Simulation (KDE Estimator). Final output distributions $\widehat{\PP}_T$ for varying values of $\alpha$ (real-data fraction) where all models are run for the same number of CRT iterations. This fixed compute budget for all models was chosen to minimize the effect of numerical plateaus in the loss when computing optimal transport distances after models have fully converged. Under this fixed compute budget, convergence improves as $\alpha$ increases; however, for all $\alpha$ we observe the predicted rate of convergence.}
    \label{fig:sim-crt-kde-progress}
\end{figure}

    \begin{figure}[ht!]
        \centering
    
        \begin{minipage}{0.18\textwidth}
            \centering
            \includegraphics[width=\linewidth]{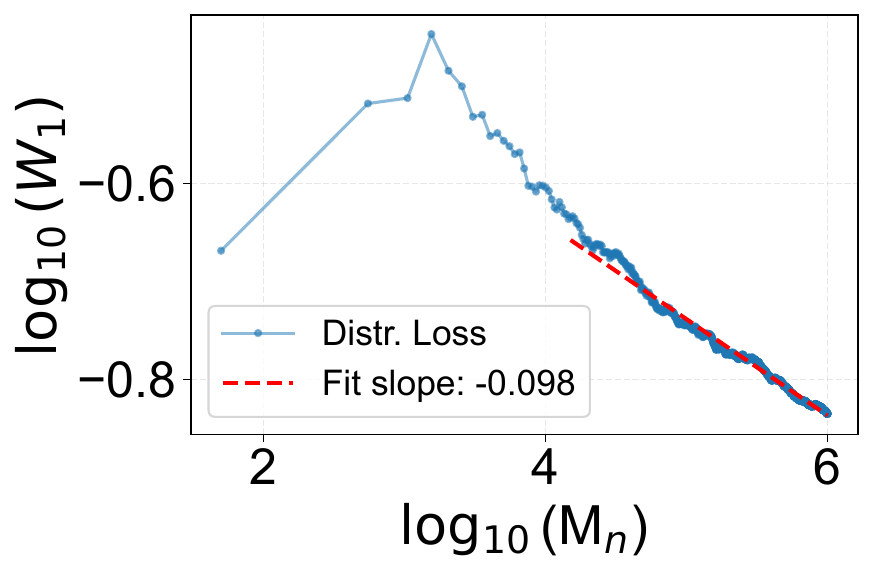}
        \end{minipage}
        \hfill
        \begin{minipage}{0.18\textwidth}
            \centering
            \includegraphics[width=\linewidth]{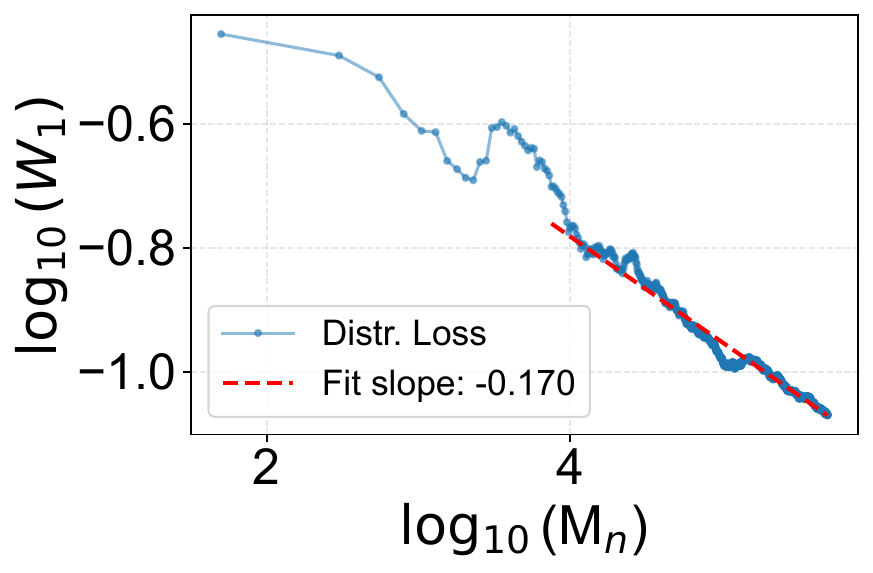}
        \end{minipage}
        \hfill
        \begin{minipage}{0.18\textwidth}
            \centering
            \includegraphics[width=\linewidth]{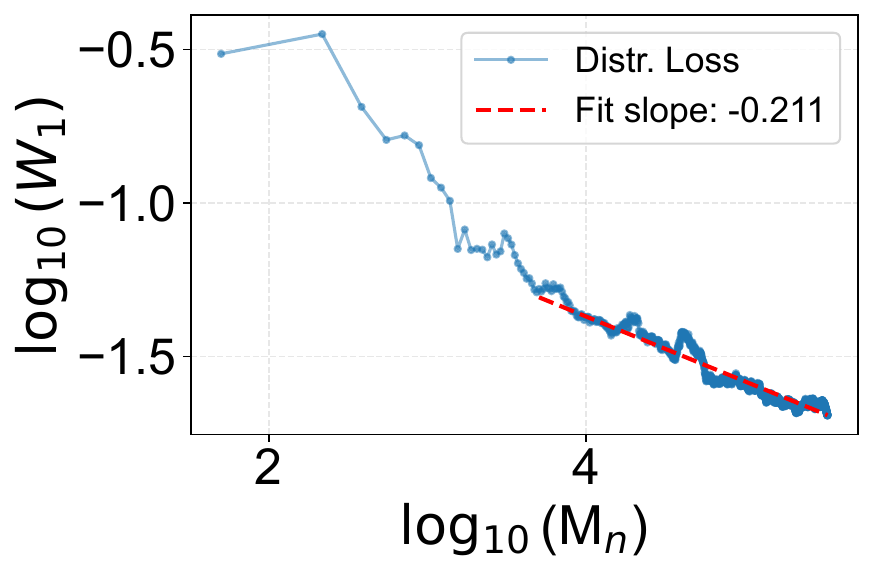}
        \end{minipage}
        \hfill
        \begin{minipage}{0.18\textwidth}
            \centering
            \includegraphics[width=\linewidth]{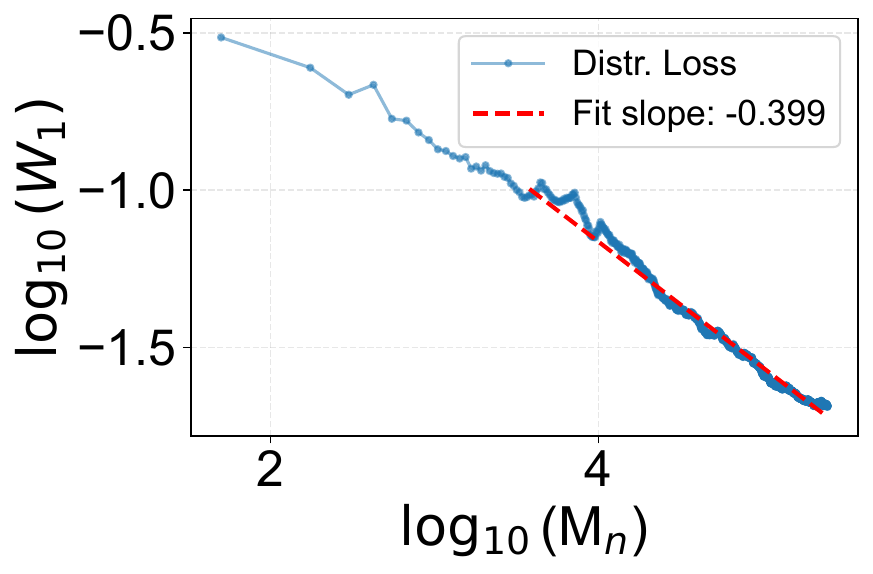}
        \end{minipage}
        \hfill
        \begin{minipage}{0.18\textwidth}
            \centering
            \includegraphics[width=\linewidth]{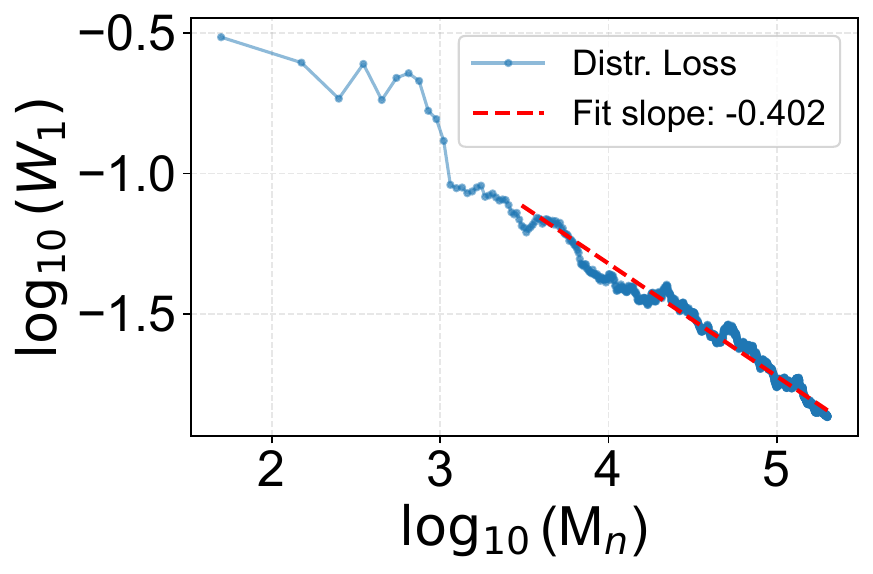}
        \end{minipage}
    
    
        \begin{minipage}{0.18\textwidth}
            \centering
            \includegraphics[width=\linewidth]{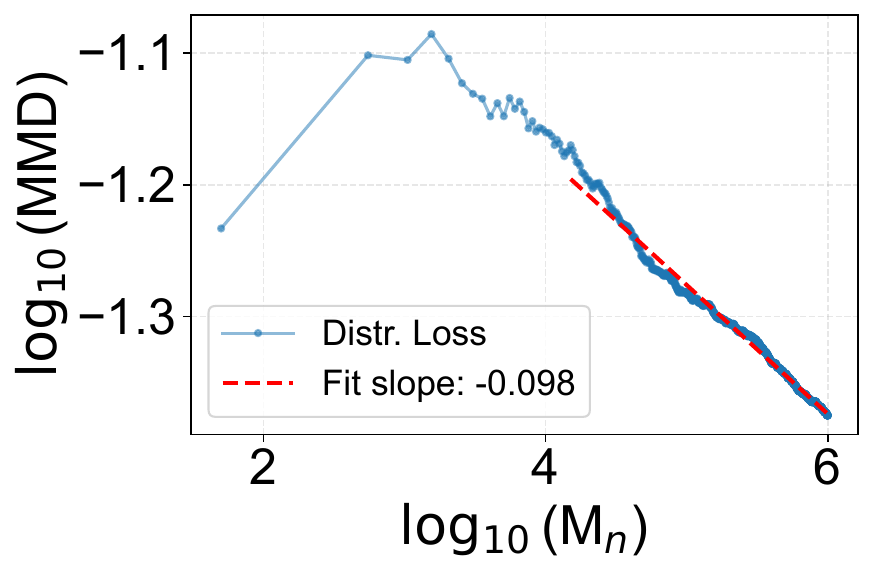}
            \caption*{\small $\alpha=0.10$}
        \end{minipage}
        \hfill
        \begin{minipage}{0.18\textwidth}
            \centering
            \includegraphics[width=\linewidth]{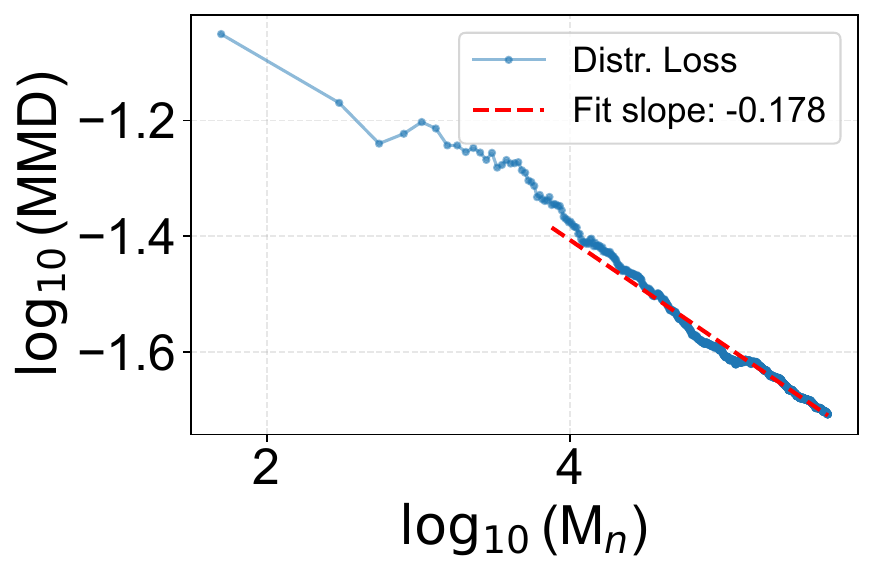}
            \caption*{\small $\alpha=0.20$}
        \end{minipage}
        \hfill
        \begin{minipage}{0.18\textwidth}
            \centering
            \includegraphics[width=\linewidth]{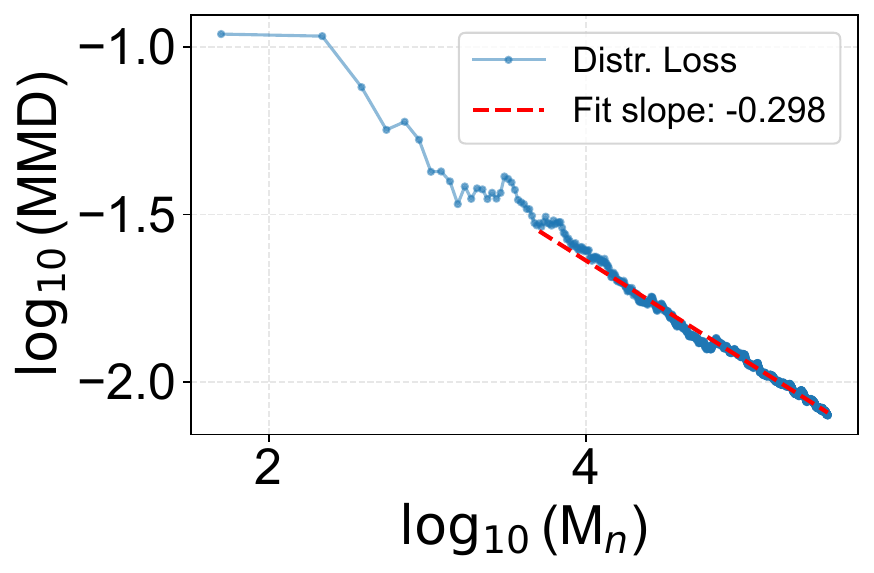}
            \caption*{\small $\alpha=0.30$}
        \end{minipage}
        \hfill
        \begin{minipage}{0.18\textwidth}
            \centering
            \includegraphics[width=\linewidth]{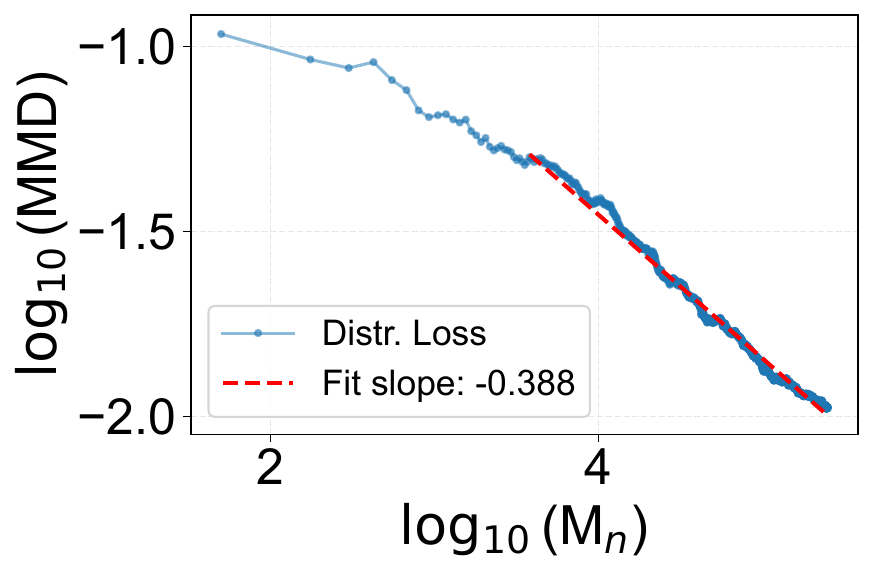}
            \caption*{\small $\alpha=0.40$}
        \end{minipage}
        \hfill
        \begin{minipage}{0.18\textwidth}
            \centering
            \includegraphics[width=\linewidth]{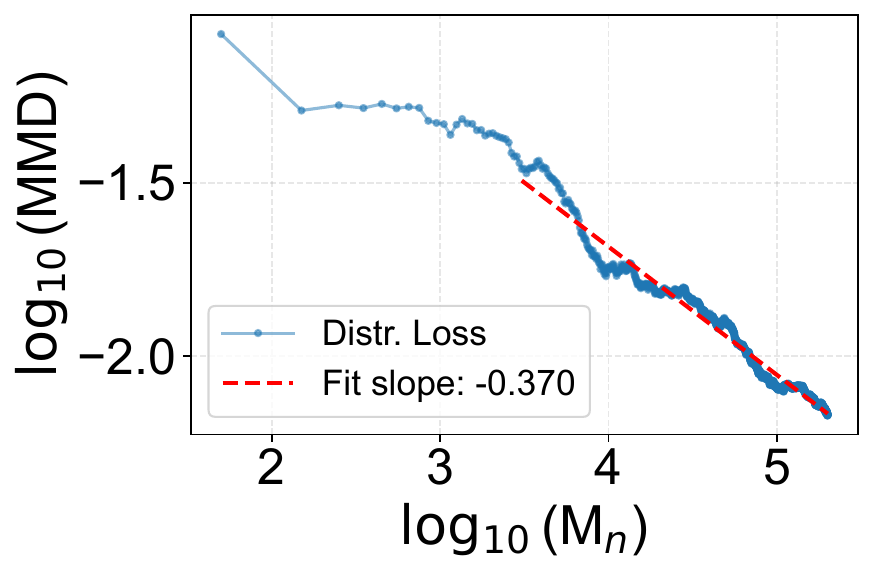}
            \caption*{\small $\alpha=0.50$}
        \end{minipage}
        
        \vspace{1em}
        
        \begin{minipage}{0.18\textwidth}
            \centering
            \includegraphics[width=\linewidth]{Figures/sim1-kde-progress/p=0.50_alpha=0.10_W1.pdf}
        \end{minipage}
        \hfill
        \begin{minipage}{0.18\textwidth}
            \centering
            \includegraphics[width=\linewidth]{Figures/sim1-kde-progress/p=0.50_alpha=0.20_W1.pdf}
        \end{minipage}
        \hfill
        \begin{minipage}{0.18\textwidth}
            \centering
            \includegraphics[width=\linewidth]{Figures/sim1-kde-progress/p=0.50_alpha=0.30_W1.pdf}
        \end{minipage}
        \hfill
        \begin{minipage}{0.18\textwidth}
            \centering
            \includegraphics[width=\linewidth]{Figures/sim1-kde-progress/p=0.50_alpha=0.40_W1.pdf}
        \end{minipage}
        \hfill
        \begin{minipage}{0.18\textwidth}
            \centering
            \includegraphics[width=\linewidth]{Figures/sim1-kde-progress/p=0.50_alpha=0.50_W1.pdf}
        \end{minipage}
    
    
        \begin{minipage}{0.18\textwidth}
            \centering
            \includegraphics[width=\linewidth]{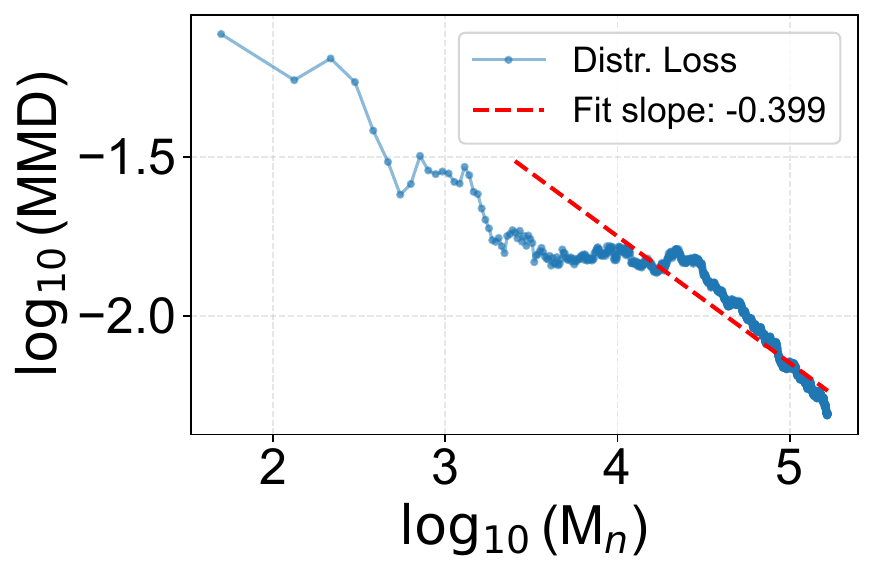}
            \caption*{\small $\alpha=0.60$}
        \end{minipage}
        \hfill
        \begin{minipage}{0.18\textwidth}
            \centering
            \includegraphics[width=\linewidth]{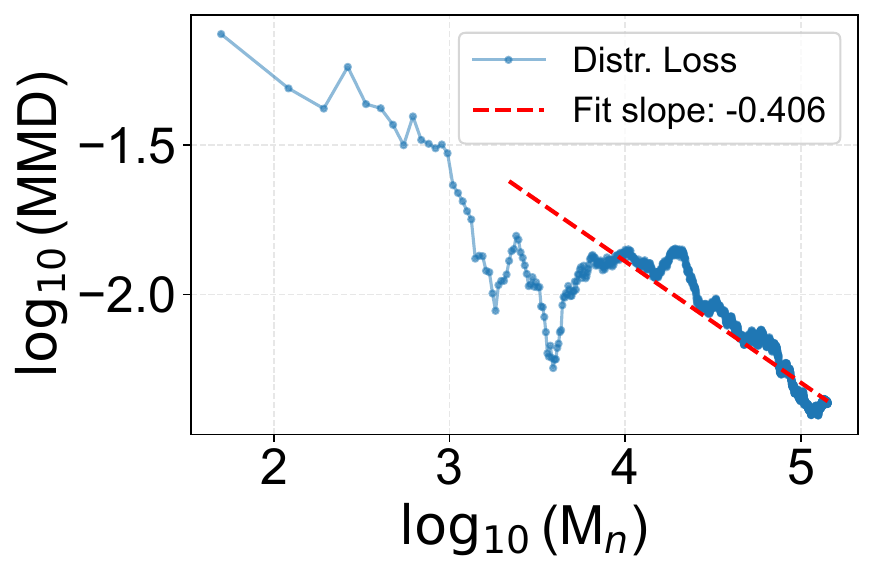}
            \caption*{\small $\alpha=0.70$}
        \end{minipage}
        \hfill
        \begin{minipage}{0.18\textwidth}
            \centering
            \includegraphics[width=\linewidth]{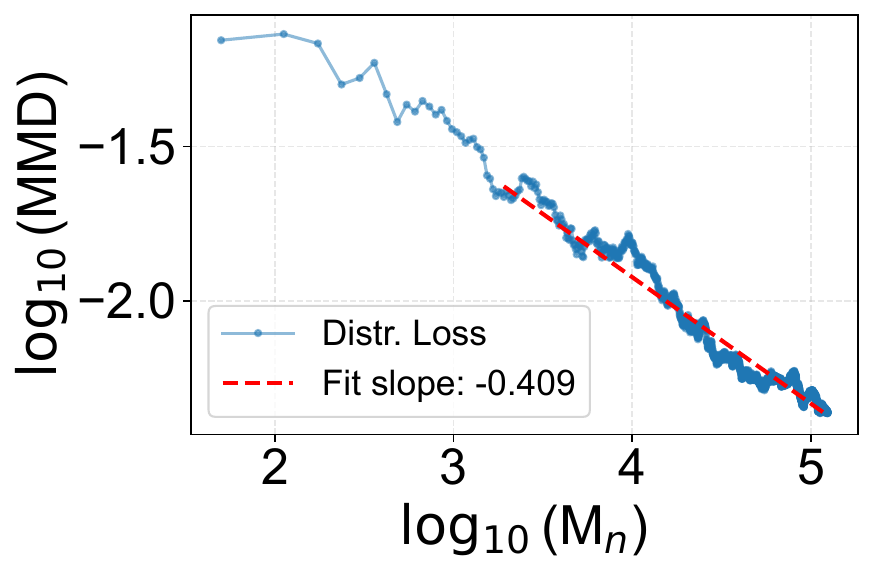}
            \caption*{\small $\alpha=0.80$}
        \end{minipage}
        \hfill
        \begin{minipage}{0.18\textwidth}
            \centering
            \includegraphics[width=\linewidth]{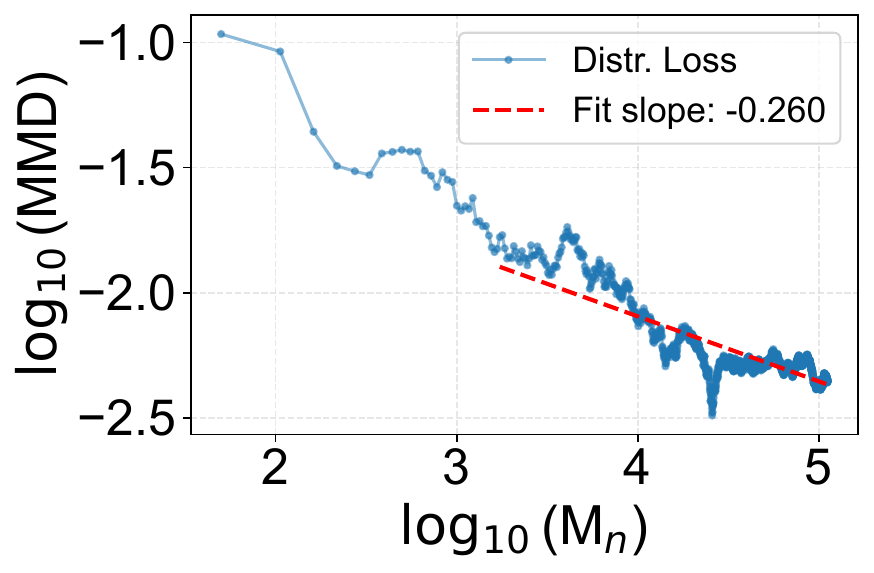}
            \caption*{\small $\alpha=0.90$}
        \end{minipage}
        \hfill
        \begin{minipage}{0.18\textwidth}
            \centering
            \includegraphics[width=\linewidth]{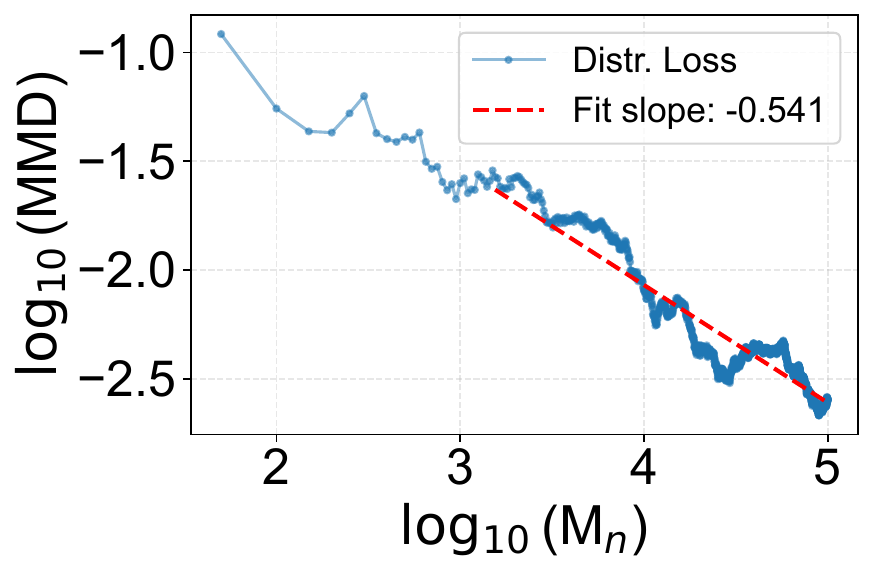}
            \caption*{\small $\alpha=1.00$}
        \end{minipage}
    
        \caption{CRT Simulation (KDE Estimator). $W_1$ and MMD distributional losses and fitted slopes for varying values of $\alpha$ (real-data fraction) for a single run.
        }
        \label{fig:sim-crt-kde-loss-W1}
    \end{figure}

\clearpage

\subsection{Additional CRT Simulation Details: WGAN}
    
    In this simulation, the target distribution $\PP_0$ is the same as in the previous experiment. The generator $\widehat{\PP}_t$ is implemented as a fully connected feedforward network. A latent vector $z \in \text{Unif}(0,1)$ is mapped through three 64-neuron hidden layers, LeakyReLU activations, and a final linear layer producing points in $\RR^1$.
    
    At iteration $t$, a batch of $m_1$ new samples from $\PP_0$ is appended to the dataset, together with $m_2 = ((1-\alpha)/\alpha)m_1$ synthetic samples generated from the previous distribution $\widehat{\PP}_{t-1}$. At each CRT iteration, the generator is completely re-initialized, and is then trained for $k$ epochs using minibatch stochastic gradient descent on the current accumulated dataset of real and synthetic samples. Optimization uses Adam with a fixed learning rate and weight decay. 
    
    The training loss is a differentiable optimal-transport discrepancy. We employ an empirical $W_1$ loss computed via quantile (inverse CDF) matching from the Python OT package~\citep{flamary2021pot}. At each iteration, convergence is assessed by computing an empirical $W_1$ distance between $\widehat{\PP}_t$ and $\PP_0$, using fresh evaluation batches of synthetic samples against a fixed large sample of the target distribution. This produces a sequence  $W_1(\widehat{\PP}_t,\PP_0)$ indexed by $M_t$.
    
    To estimate convergence rates, we fit a power law of the form
    \[
        \log W_1(\widehat{\PP}_t,\PP_0)
            = a + b \log M_t,
    \]
    after discarding an initial burn-in period. When $\alpha = p$, CRT theory predicts a logarithmic phase transition; in this regime, the losses are normalized by $\log(t)$ prior to regression. The fitted slope $b$ is reported as the observed convergence rate of the recursive neural estimator. Examples of final densities and loss curves are shown in \Cref{fig:sim-crt-wgan-progress}, \ref{fig:sim-crt-wgan-loss}. All experimental parameters for model training and sampling are summarized in \Cref{tab:sim12-wgan-params}.
    
            \begin{table}[H]
            \centering
            \small
            \begin{tabular}{lcl}
            \toprule
            \textbf{Parameter} & \textbf{Value} & \textbf{Description} \\
            \midrule
            $m_1+m_2$ & $500$ & Total samples per iteration \\
            $\alpha$ & $\{0.1,0.2,\dots,1.0\}$ & real-data fraction \\
            $T$ & $500$ & Total CRT (outer training loop) iterations \\
            $k$ & $25$ & Training epochs per CRT iteration (inner training loop) \\
            Latent dimension & $1$ & Dimension of $z$ \\
            Latent distribution & Uniform(0,1) & Input to generator \\
            Network width & $64$ & Hidden layer width \\
            Network layers & $3$ & Number of hidden layers \\
            Activation & LeakyReLU$(0.02)$ & Nonlinearity \\
            Optimizer & Adam & Generator optimization \\
            Learning rate & $2\times 10^{-4}$ &  Step size \\
            Weight decay & $1\times 10^{-3}$ &  Optimizer weight decay \\
            Batch size & $1024$ & Training minibatch size \\
            Loss type & Quantile $W_1$ & $W_1$ loss using POT package \\
            Evaluation Samples & 200000 & Number of samples (real, synthetic) for evaluation $W_1$ \\
            \bottomrule
            \end{tabular}
            \caption{CRT Simulation parameters for WGAN estimator.}
            \label{tab:sim12-wgan-params}
            \end{table}

    \begin{figure}[ht!]
        \centering
        \begin{minipage}{0.18\textwidth}
            \centering
            \includegraphics[width=\linewidth]{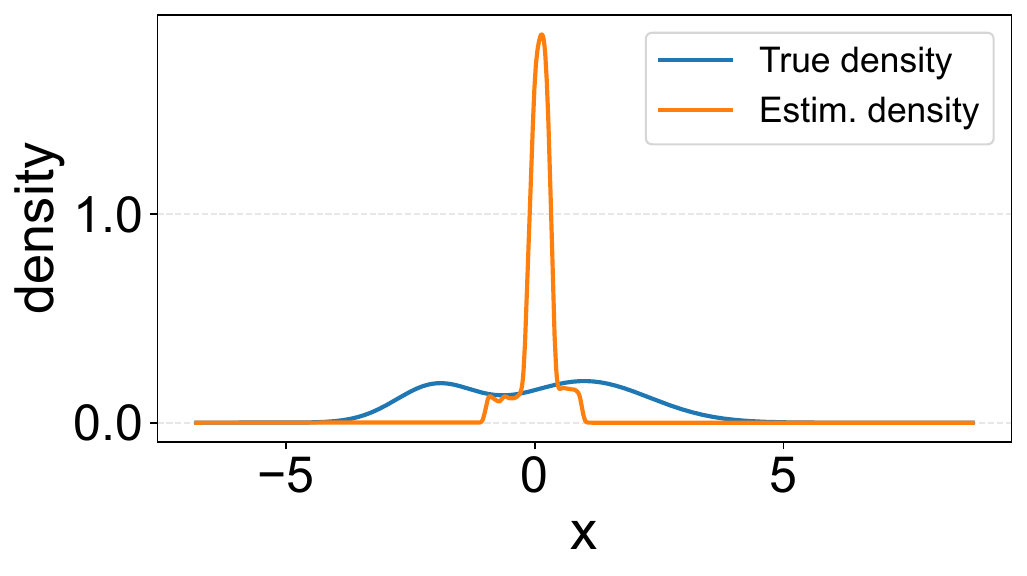}
            \caption*{\small$\alpha=0.1$}
        \end{minipage}
        \hfill
        \begin{minipage}{0.18\textwidth}
            \centering
            \includegraphics[width=\linewidth]{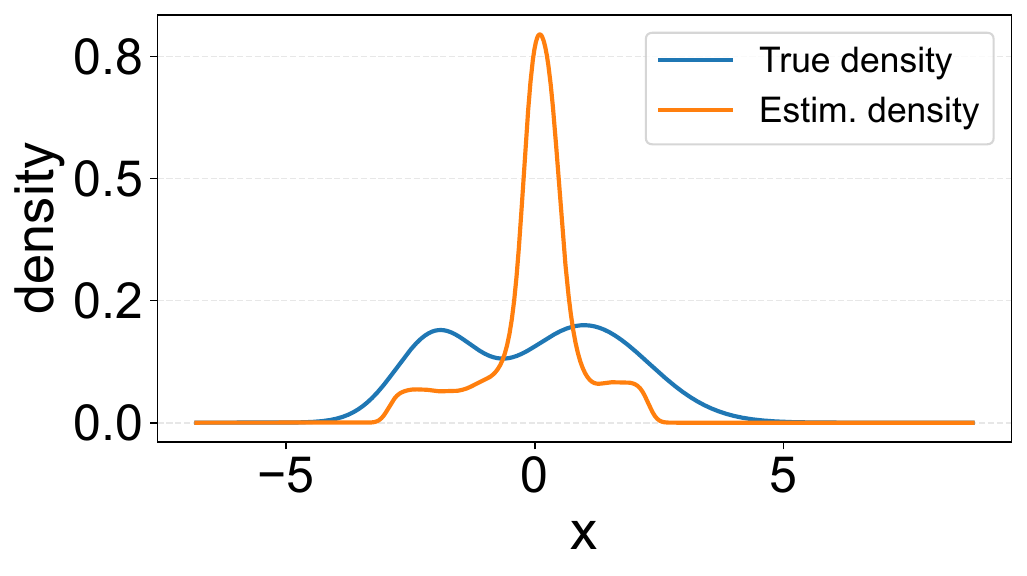}
            \caption*{\small $\alpha=0.2$}
        \end{minipage}
        \hfill
        \begin{minipage}{0.18\textwidth}
            \centering
            \includegraphics[width=\linewidth]{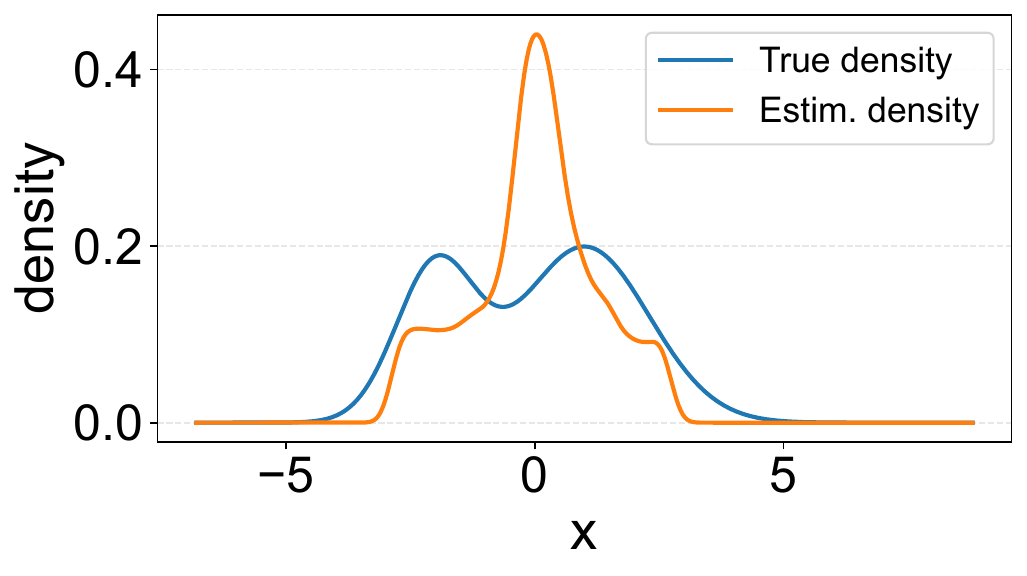}
            \caption*{\small $\alpha=0.3$}
        \end{minipage}
        \hfill
        \begin{minipage}{0.18\textwidth}
            \centering
            \includegraphics[width=\linewidth]{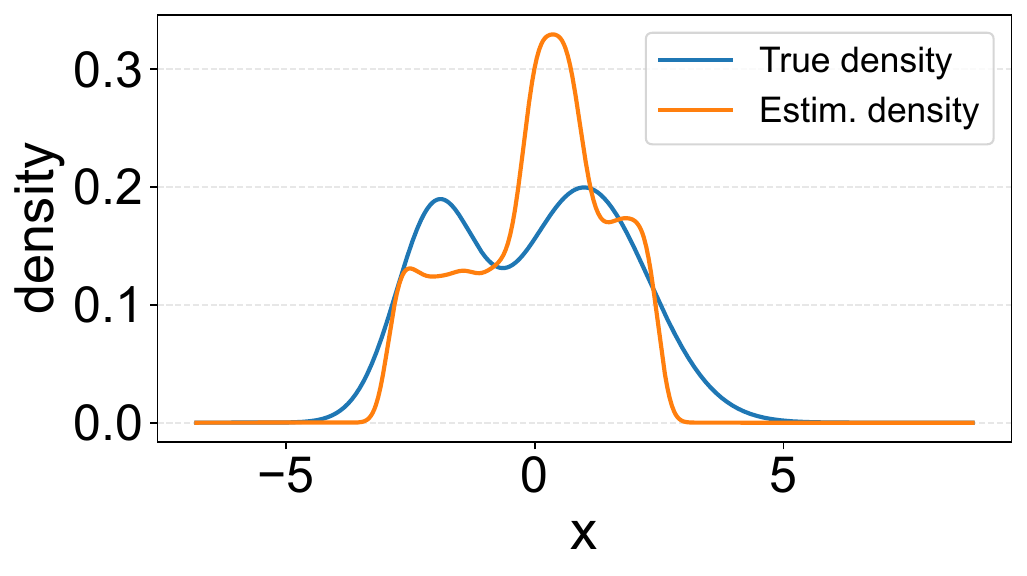}
            \caption*{\small $\alpha=0.4$}
        \end{minipage}
        \hfill
        \begin{minipage}{0.18\textwidth}
            \centering
            \includegraphics[width=\linewidth]{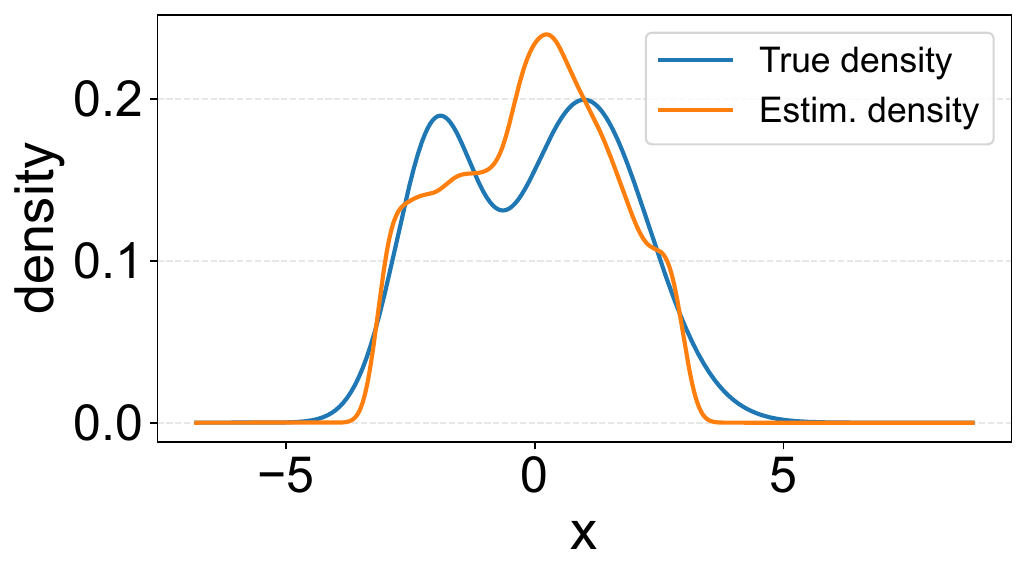}
            \caption*{\small $\alpha=0.5$}
        \end{minipage}

        \vspace{1em}
    
        \begin{minipage}{0.18\textwidth}
            \centering
            \includegraphics[width=\linewidth]{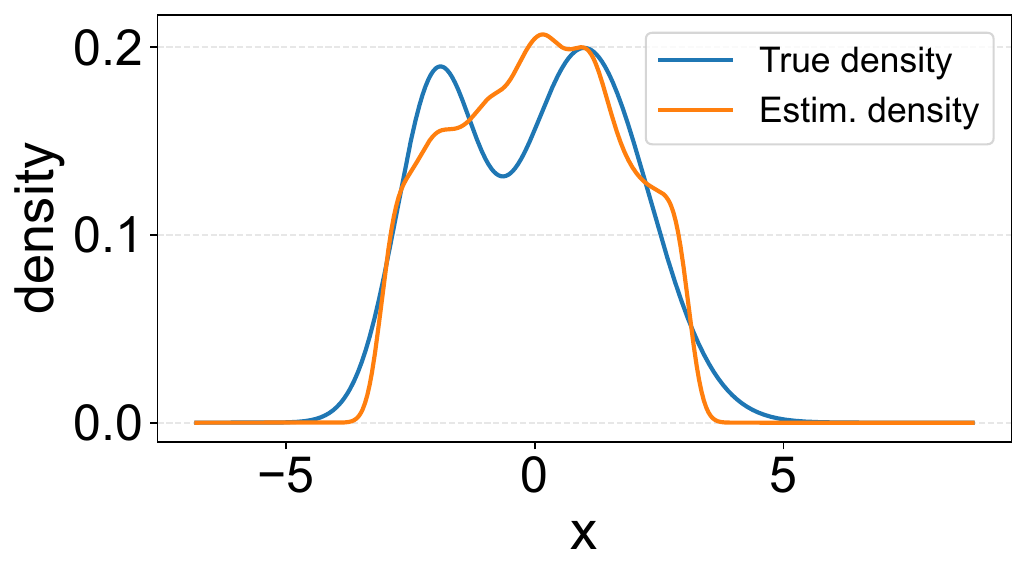}
            \caption*{\small $\alpha=0.6$}
        \end{minipage}
        \hfill
        \begin{minipage}{0.18\textwidth}
            \centering
            \includegraphics[width=\linewidth]{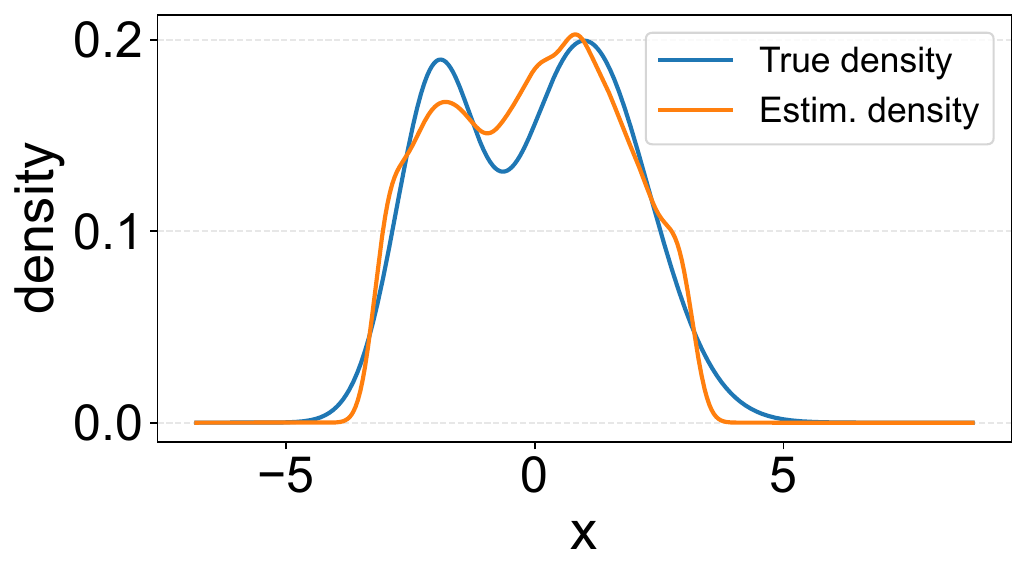}
            \caption*{\small $\alpha=0.7$}
        \end{minipage}
        \hfill
        \begin{minipage}{0.18\textwidth}
            \centering
            \includegraphics[width=\linewidth]{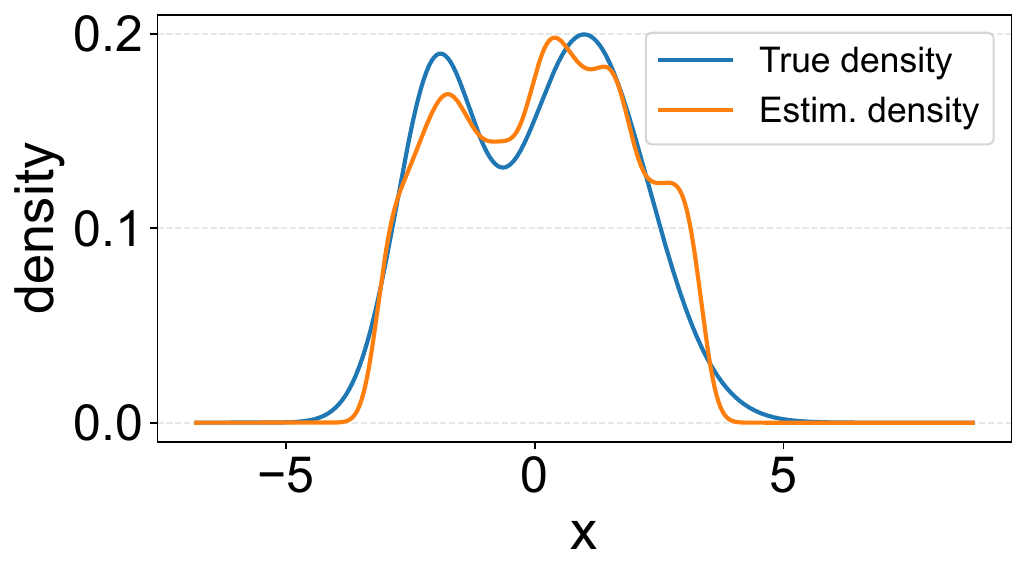}
            \caption*{\small $\alpha=0.8$}
        \end{minipage}
        \hfill
        \begin{minipage}{0.18\textwidth}
            \centering
            \includegraphics[width=\linewidth]{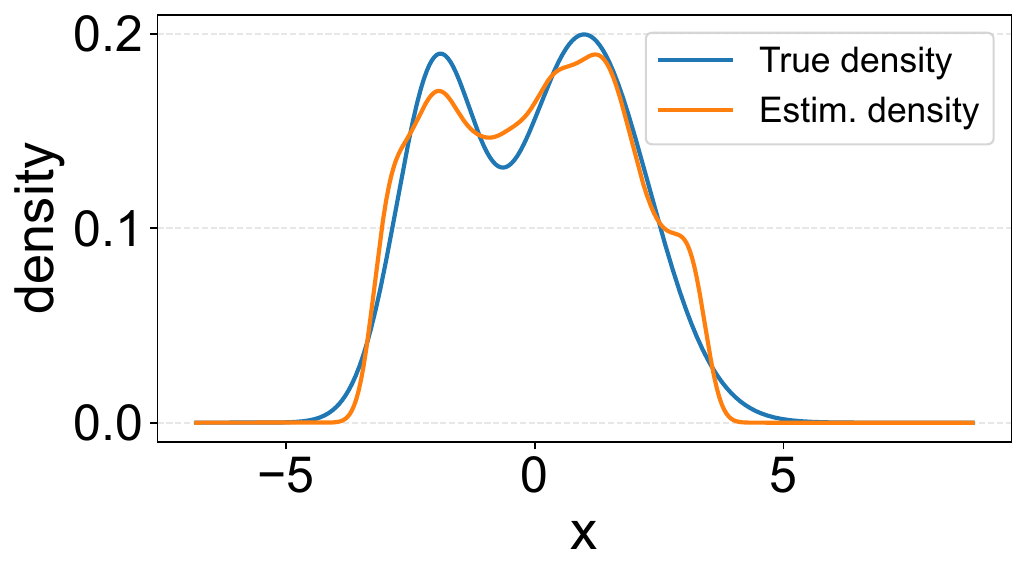}
            \caption*{\small $\alpha=0.9$}
        \end{minipage}
        \hfill
        \begin{minipage}{0.18\textwidth}
            \centering
            \includegraphics[width=\linewidth]{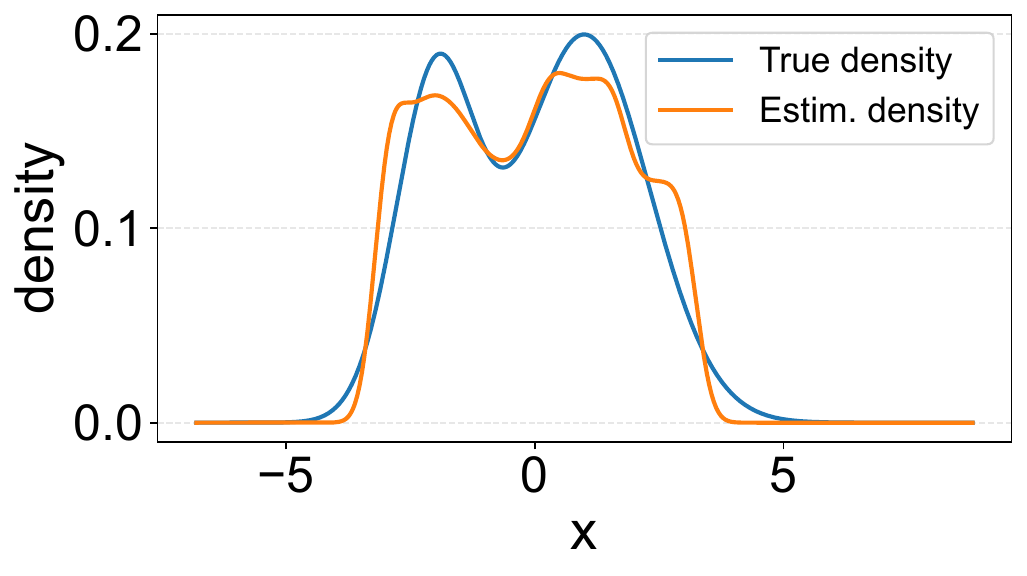}
            \caption*{\small $\alpha=1.0$}
        \end{minipage}
        \caption{CRT Simulation (WGAN Estimator). Final output distributions $\widehat{\PP}_T$ for varying values of $\alpha$ (real-data fraction) visualized using KDE, where all models are run for the same number of CRT iterations.  This fixed compute budget for all models was chosen to minimize the effect of numerical plateaus in the loss when computing optimal transport distances. Under this fixed compute budget, models at greater $\alpha$ are shown to converge more easily, however, for all $\alpha$ we observe the predicted rate of convergence.}
        \label{fig:sim-crt-wgan-progress}
    \end{figure}

    \begin{figure}[ht!]
        \centering
        \begin{minipage}{0.18\textwidth}
            \centering
            \includegraphics[width=\linewidth]{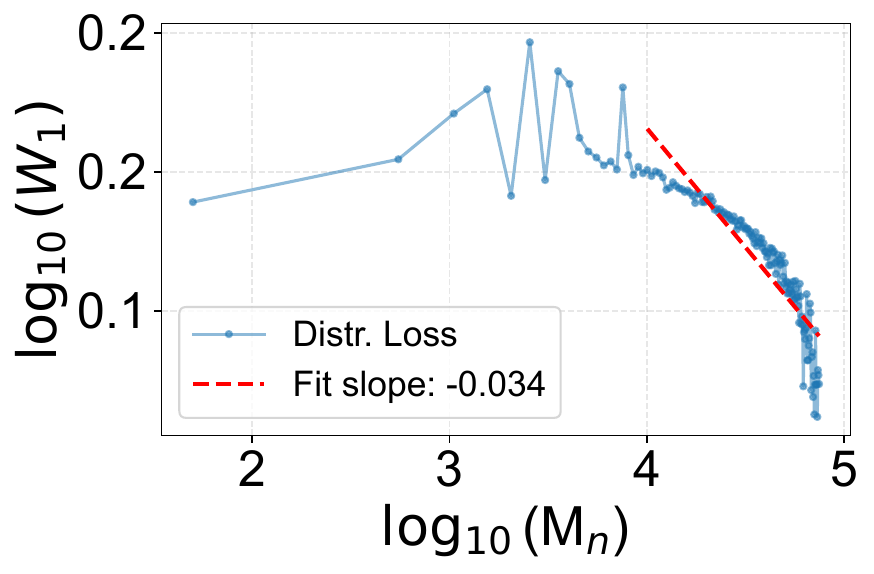}
        \end{minipage}
        \hfill
        \begin{minipage}{0.18\textwidth}
            \centering
            \includegraphics[width=\linewidth]{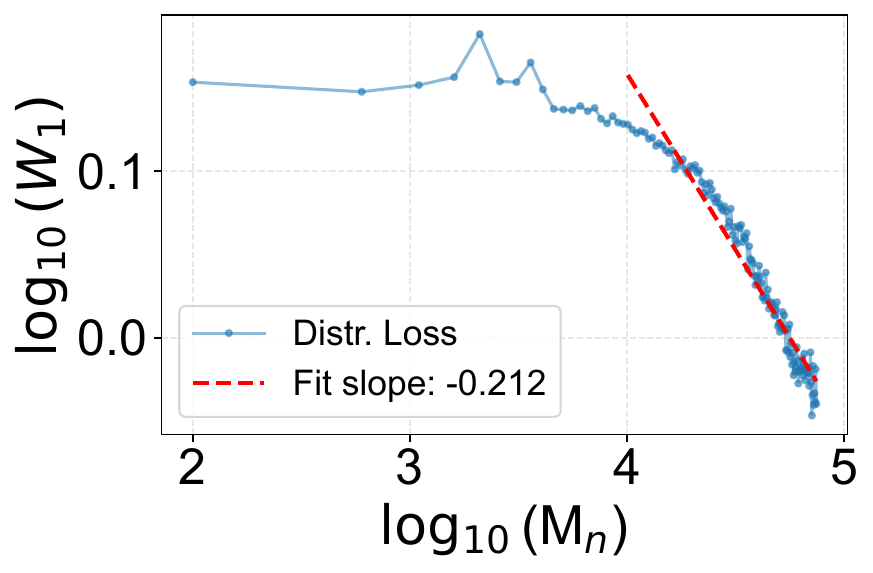}
        \end{minipage}
        \hfill
        \begin{minipage}{0.18\textwidth}
            \centering
            \includegraphics[width=\linewidth]{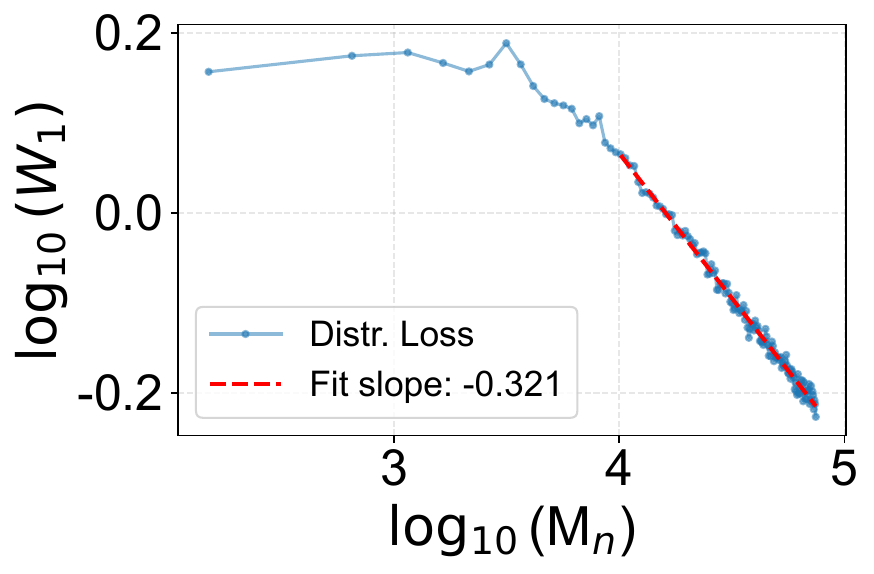}
        \end{minipage}
        \hfill
        \begin{minipage}{0.18\textwidth}
            \centering
            \includegraphics[width=\linewidth]{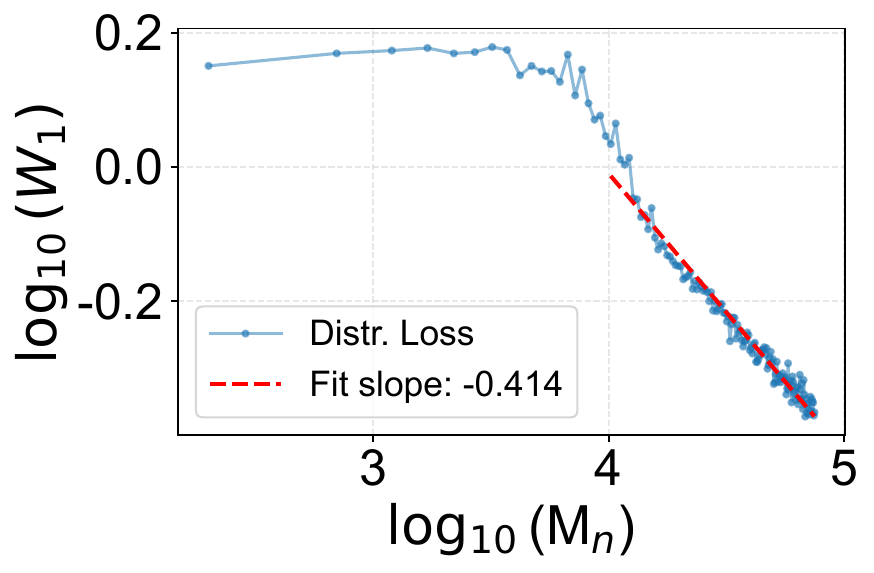}
        \end{minipage}
        \hfill
        \begin{minipage}{0.18\textwidth}
            \centering
            \includegraphics[width=\linewidth]{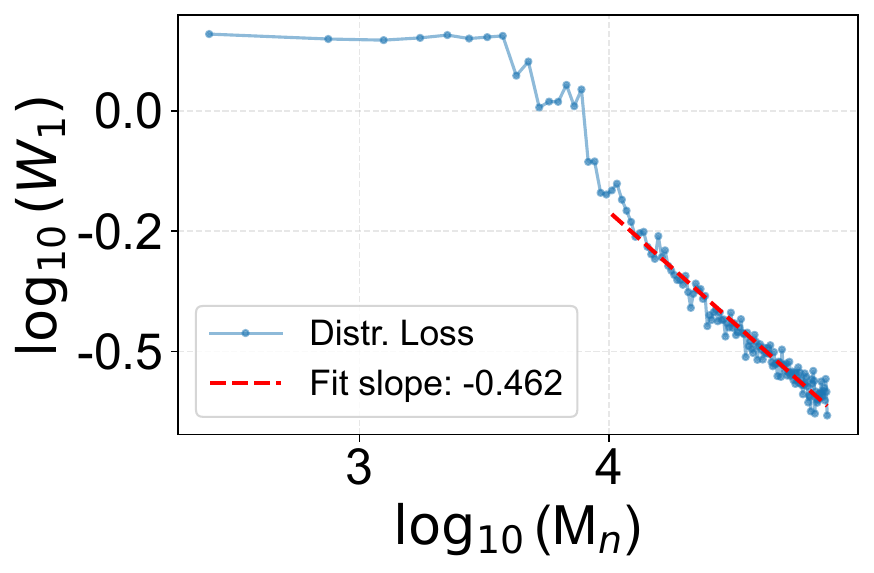}
        \end{minipage}

        \begin{minipage}{0.18\textwidth}
            \centering
            \includegraphics[width=\linewidth]{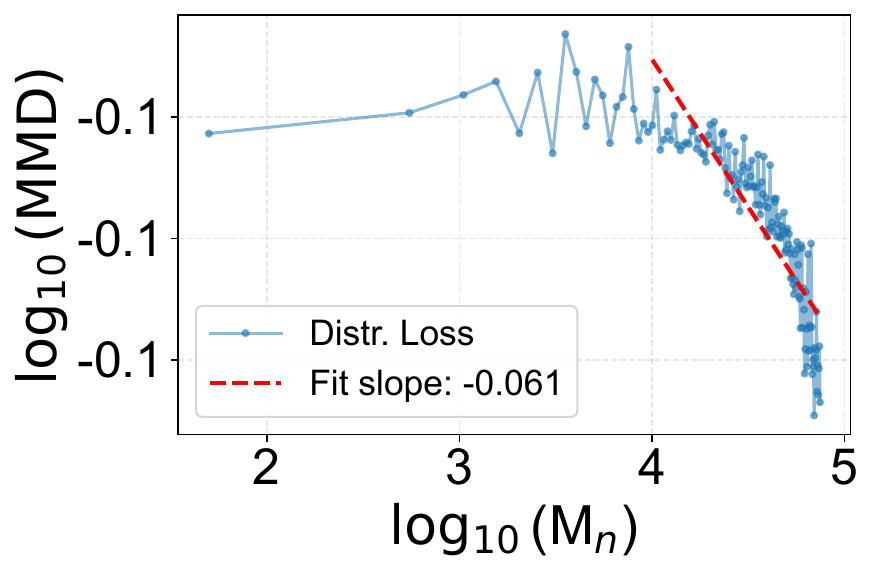}
            \caption*{\small $\alpha=0.10$}
        \end{minipage}
        \hfill
        \begin{minipage}{0.18\textwidth}
            \centering
            \includegraphics[width=\linewidth]{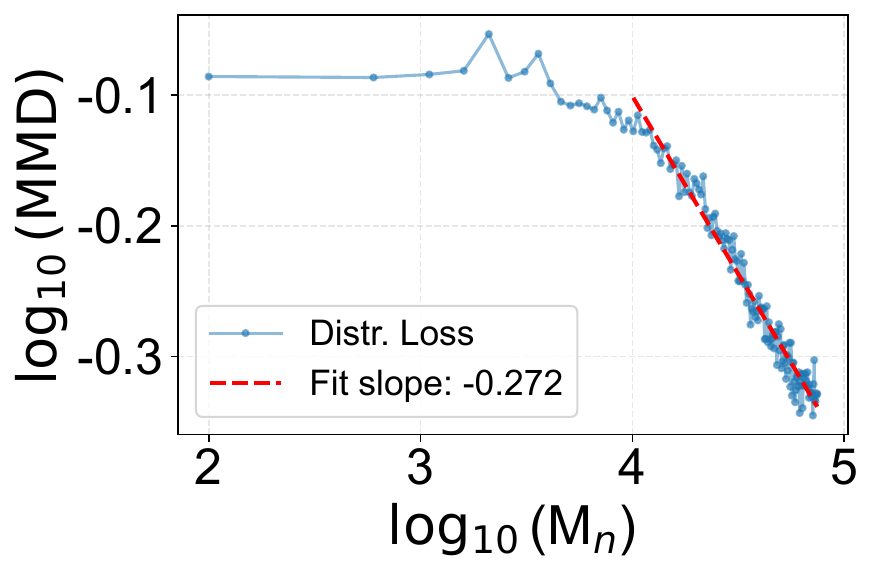}
            \caption*{\small $\alpha=0.20$}
        \end{minipage}
        \hfill
        \begin{minipage}{0.18\textwidth}
            \centering
            \includegraphics[width=\linewidth]{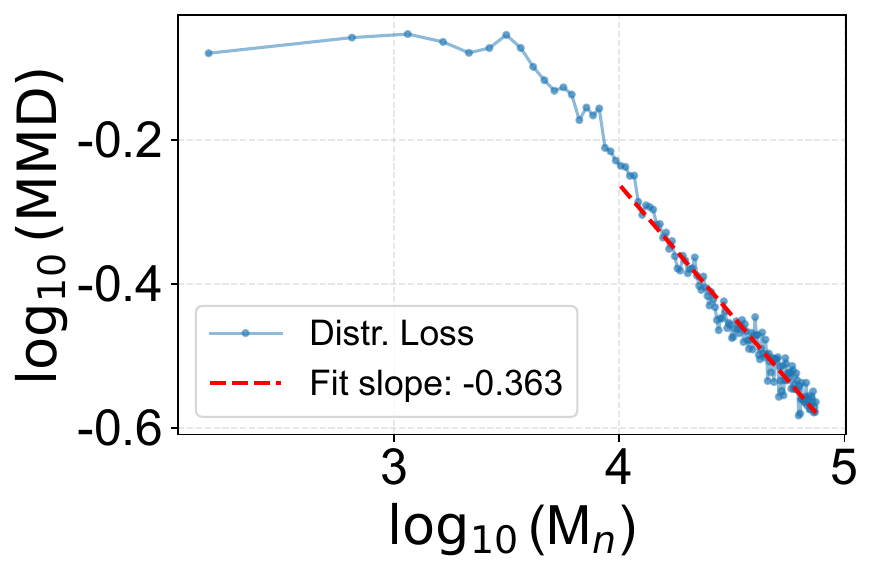}
            \caption*{\small $\alpha=0.30$}
        \end{minipage}
        \hfill
        \begin{minipage}{0.18\textwidth}
            \centering
            \includegraphics[width=\linewidth]{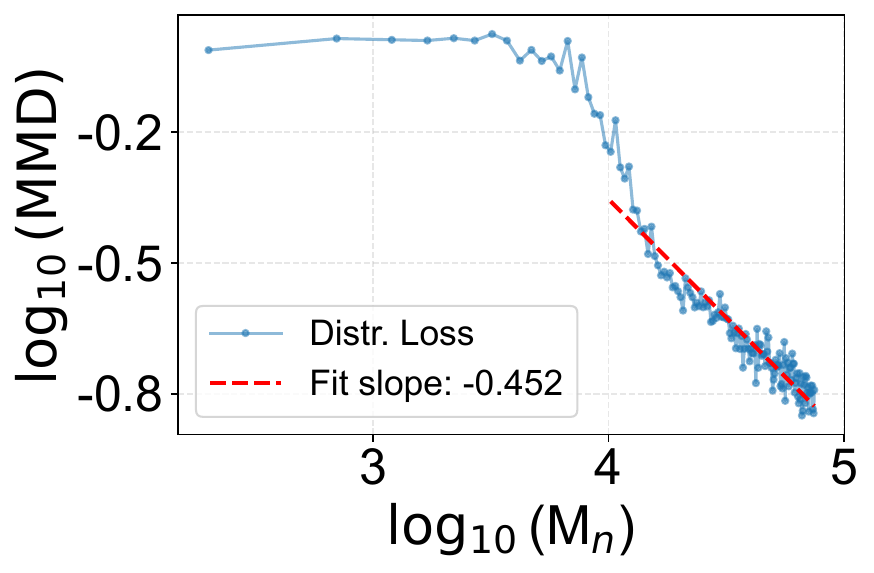}
            \caption*{\small $\alpha=0.40$}
        \end{minipage}
        \hfill
        \begin{minipage}{0.18\textwidth}
            \centering
            \includegraphics[width=\linewidth]{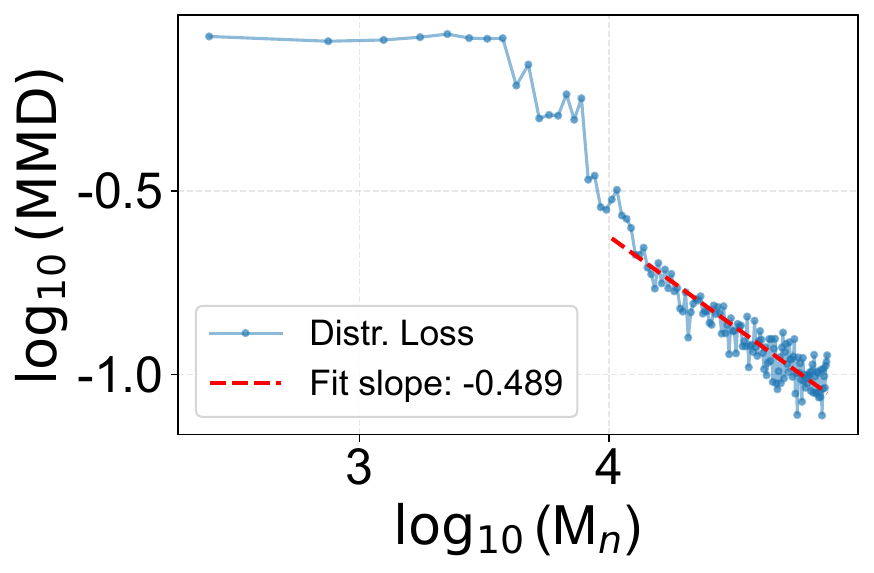}
            \caption*{\small $\alpha=0.50$}
        \end{minipage}
        
        \vspace{1em}
        
        \begin{minipage}{0.18\textwidth}
            \centering
            \includegraphics[width=\linewidth]{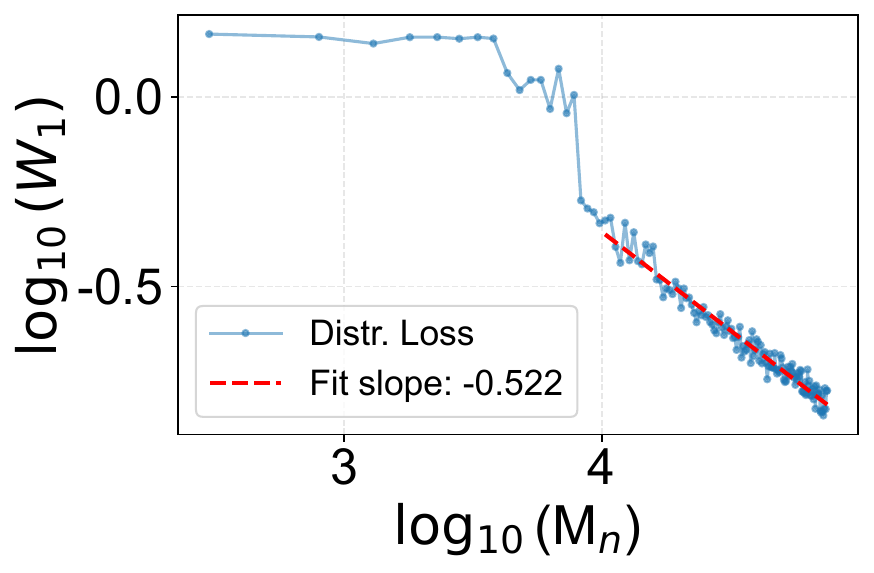}
        \end{minipage}
        \hfill
        \begin{minipage}{0.18\textwidth}
            \centering
            \includegraphics[width=\linewidth]{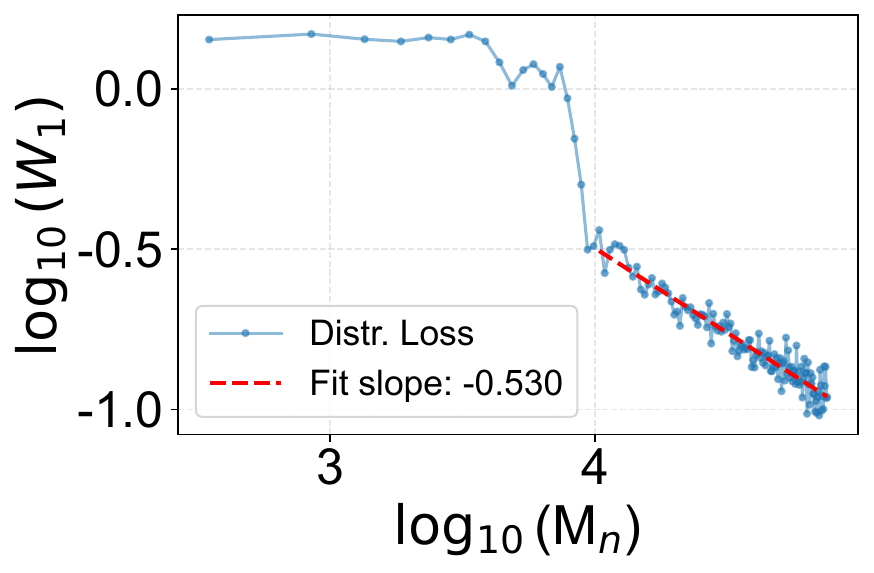}
        \end{minipage}
        \hfill
        \begin{minipage}{0.18\textwidth}
            \centering
            \includegraphics[width=\linewidth]{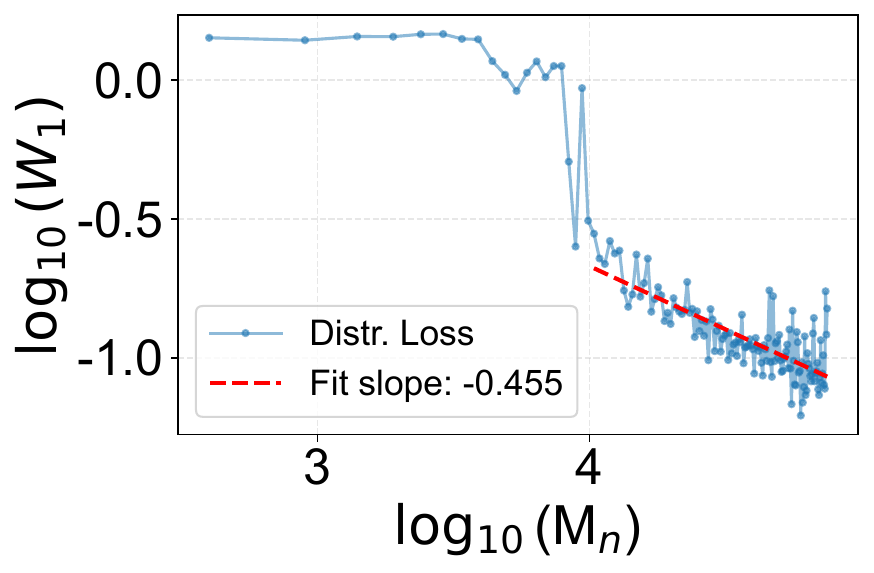}
        \end{minipage}
        \hfill
        \begin{minipage}{0.18\textwidth}
            \centering
            \includegraphics[width=\linewidth]{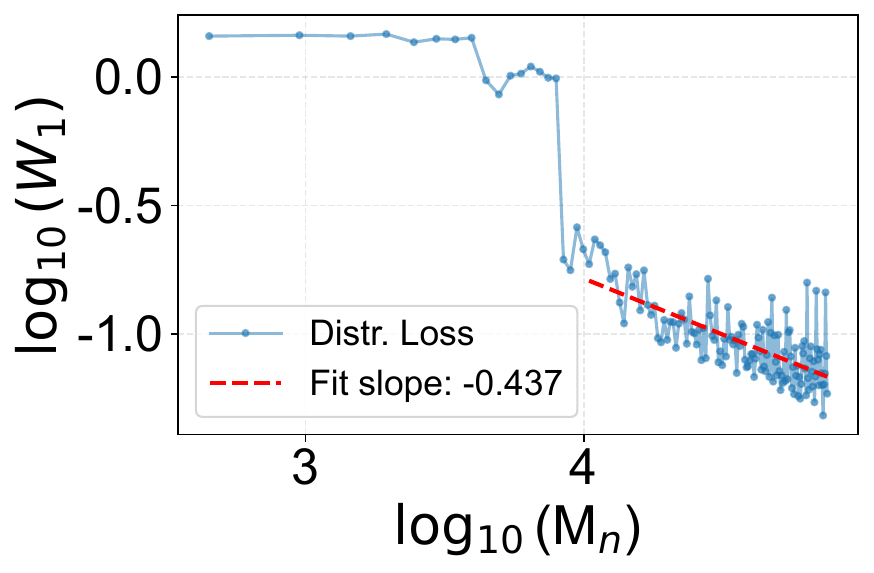}
        \end{minipage}
        \hfill
        \begin{minipage}{0.18\textwidth}
            \centering
            \includegraphics[width=\linewidth]{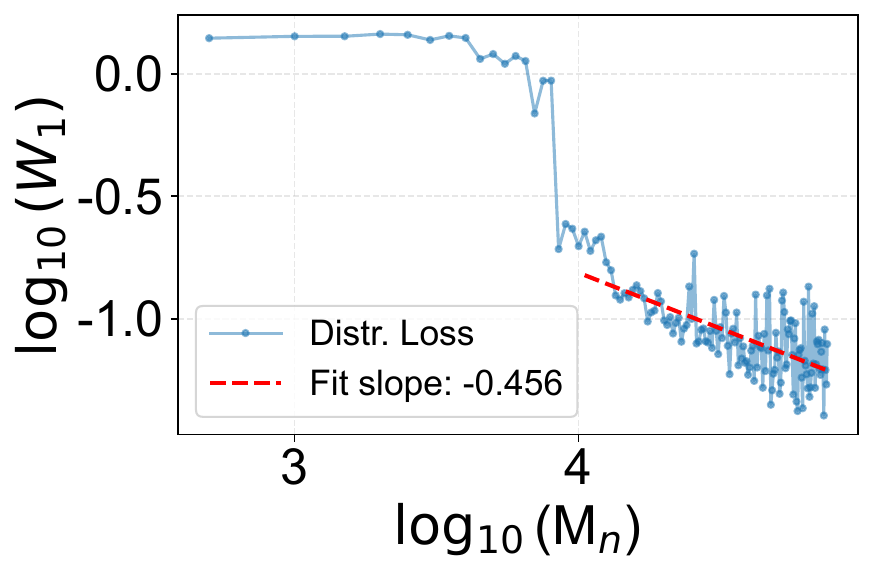}
        \end{minipage}

        \begin{minipage}{0.18\textwidth}
            \centering
            \includegraphics[width=\linewidth]{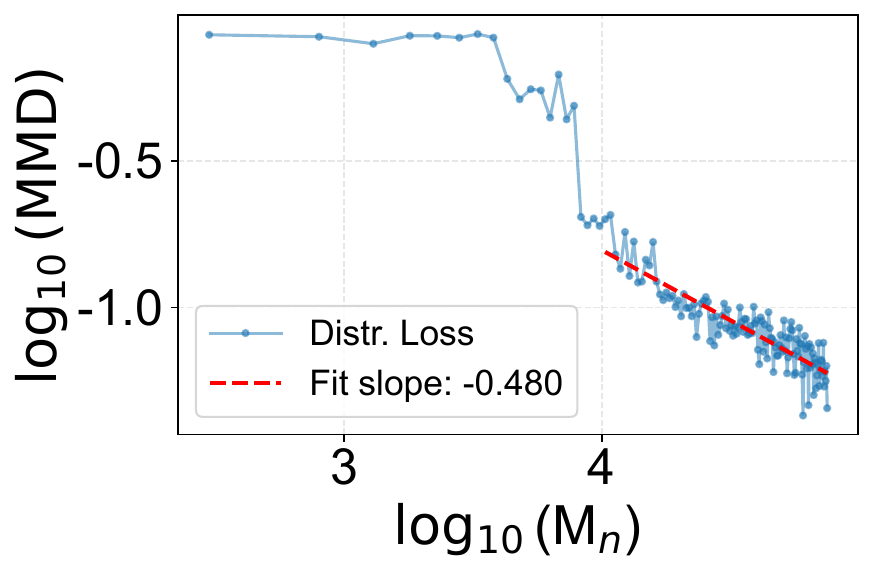}
            \caption*{\small $\alpha=0.60$}
        \end{minipage}
        \hfill
        \begin{minipage}{0.18\textwidth}
            \centering
            \includegraphics[width=\linewidth]{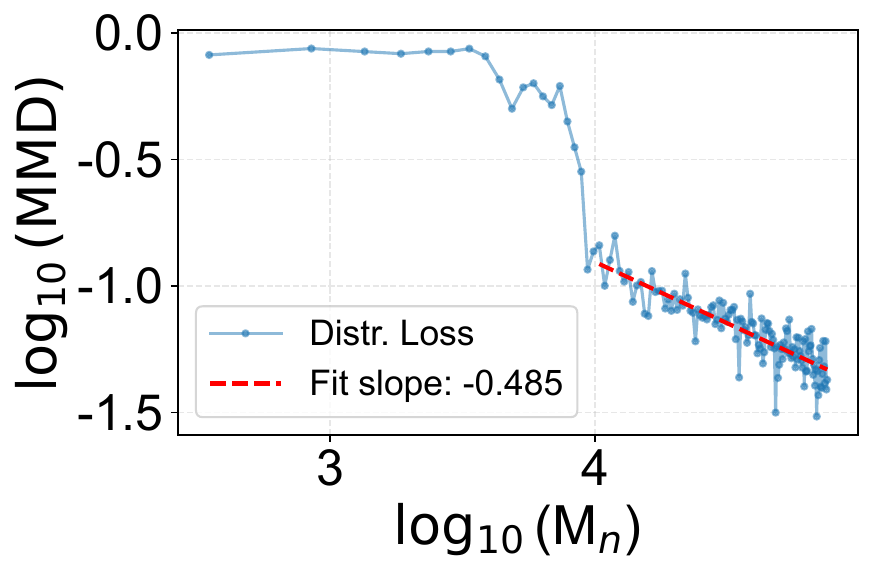}
            \caption*{\small $\alpha=0.70$}
        \end{minipage}
        \hfill
        \begin{minipage}{0.18\textwidth}
            \centering
            \includegraphics[width=\linewidth]{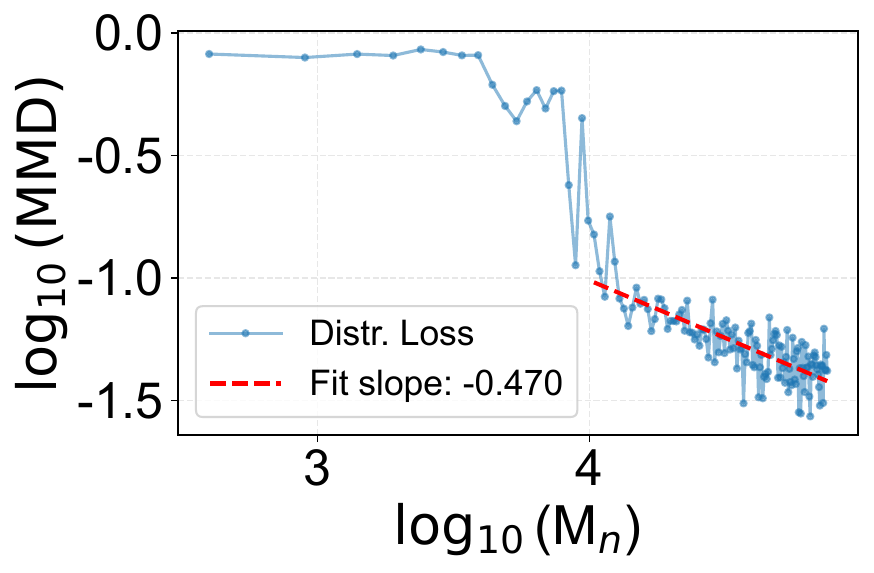}
            \caption*{\small $\alpha=0.80$}
        \end{minipage}
        \hfill
        \begin{minipage}{0.18\textwidth}
            \centering
            \includegraphics[width=\linewidth]{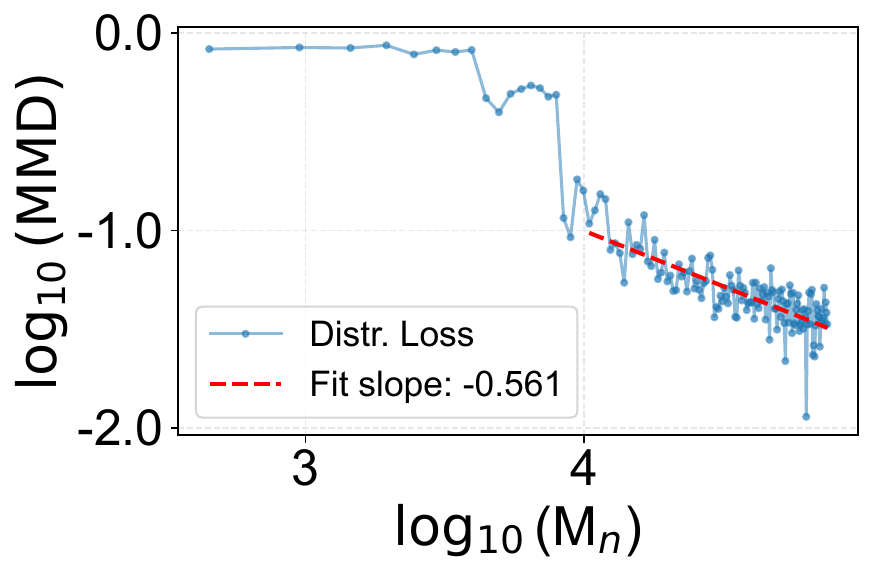}
            \caption*{\small $\alpha=0.90$}
        \end{minipage}
        \hfill
        \begin{minipage}{0.18\textwidth}
            \centering
            \includegraphics[width=\linewidth]{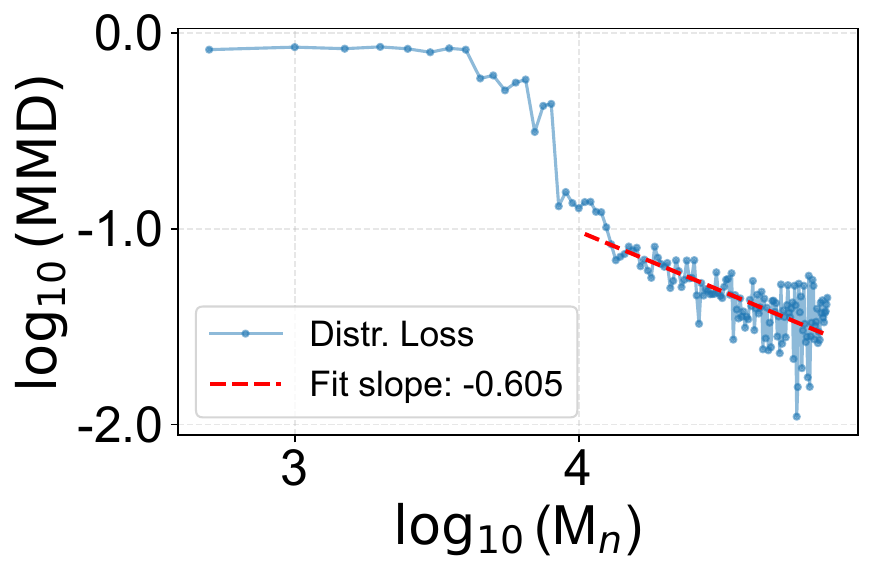}
            \caption*{\small $\alpha=1.00$}
        \end{minipage}
    
        \caption{CRT Simulation (WGAN Estimator). $W_1$ and MMD distributional losses and fitted slopes for varying values of $\alpha$ (real-data fraction) for a single run.
        }
        \label{fig:sim-crt-wgan-loss}
    \end{figure}

\clearpage
\subsection{Additional BCRT Simulation Details}\label{app:kdebias}

In the BCRT setting, we use the same one-dimensional Gaussian mixture target distribution $\PP_0$ as in the CRT simulations, with identical mixture weights and component parameters. The numerical grid used to evaluate distributional discrepancies, as well as the closed-form density and CDF of $\PP_0$ on this grid, are constructed exactly as described in the previous subsection and reused throughout.

We follow the BCRT procedure of \textbf{Biased Contaminated Recursive Training}. At iteration $t$, a batch of $m_1$ real samples is added to the accumulated dataset, together with $m_2 = ((1-\alpha_{\mathrm{real}})/\alpha_{\mathrm{real}})m_1$ synthetic samples drawn from the previous distribution $\widehat{\PP}_{t-1}$. Synthetic samples are generated by resampling from the accumulated dataset and adding Gaussian noise with variance equal to the current KDE bandwidth. Unlike in the CRT simulations, the real data stream here is biased. At iteration $t$, real samples are drawn from a contaminated distribution
\[
\PP_t^{\textnormal{bias}}
= (1-\textnormal{bias}_t)\PP_0
+ \textnormal{bias}_t\,\mathcal{N}(\mu_3,\sigma_3),
\]
where $(\mu_3,\sigma_3)=(3.0,1.0)$. The contamination level decays polynomially as
\[
\textnormal{bias}_t = 0.2\,(t+5)^{-q},
\]
with decay rate parameter $q > 0$.

The estimator $\widehat{\PP}_t$ is a KDE, as in the CRT case. The bandwidth follows a deterministic decay schedule
\[
h_t = h_0\,t^{-p},
\]
with base bandwidth $h_0=2$ and smoothness parameter $p=0.5$. The estimator defines a density-CDF pair $(\widehat f_t,\widehat F_t)$ evaluated on the fixed grid, using trapezoidal integration and renormalization as in previous simulations.

Performance is evaluated using the same deterministic plug-in metrics as in Simulations~1 and~2: the $W_1$ distance,
\[
W_1(\widehat{\PP}_t,\PP_0)
\approx \int \bigl| F_0(x) - \widehat F_t(x) \bigr|\,dx,
\]
and squared MMD, with the MMD computed using a Gaussian kernel on the evaluation grid.

To estimate convergence rates, we record the sequence of losses $\{d(\widehat{\PP}_t,\PP_0)\}$ over iterations and fit a power law of the form
\[
\log d(\widehat{\PP}_t,\PP_0)
= a + b \log M_t,
\]
where $M_t$ denotes the total number of accumulated samples. After discarding an initial burn-in period, the fitted slope $b$ yields the estimated empirical convergence rate. The theory predicts that the effective rate is governed by $\min(p,\alpha,q)$; empirical estimates are compared against this benchmark in the main text. Simulation parameters specific to this section are summarized below. Additionally, examples of final fitted models are visualized in \Cref{fig:ECDF-three-by-three-alpha-q} and \Cref{fig:KDE-three-by-three-alpha-q}, and examples of training curves in \Cref{fig:ECDF-three-by-three-alpha-q-loss} and \Cref{fig:KDE-three-by-three-alpha-q-loss}. All experimental parameters for model training and sampling are summarized in \Cref{tab:sim4-ecdf-params} and \Cref{tab:sim4-kde-params}.

    \begin{table}[H]
        \centering
        \footnotesize
        \begin{tabular}{lcl}
        \toprule
        \textbf{Parameter} & \textbf{Value} & \textbf{Description} \\
        \midrule
        $m_1$ & $25$ & Real samples per iteration \\
        $\alpha$ & $\{0.25,0.5,0.75\}$ & real-data fraction \\
        $q$ & $\{0.25,0.5,0.75\}$ & Bias Decay Rate \\
        $T$ & $3000$ & Total BCRT iterations \\
        $n_{\mathrm{reps}}$ & $100$ & Number of repetitions \\
        $m_{\mathrm{grid}}$ & $200$ & Grid size for deterministic evaluation \\
        $[x_{\min},x_{\max}]$ & Mixture-based & Grid interval for density/CDF evaluation \\
        $w_1$ & $0.35$ & Mixture weight \\
        $\mu_1,\sigma_1$ & $-2.0,\; 0.8$ & First Gaussian component \\
        $\mu_2,\sigma_2$ & $1.0,\; 1.3$ & Second Gaussian component \\
        $\mu_3,\sigma_3$ & $3.0,\; 1.0$ & Bias Gaussian component \\
        $h_0$ & $2.0$ & Base KDE bandwidth \\
        \bottomrule
        \end{tabular}
         \caption{BCRT Simulation (ECDF Estimator) experimental parameters.}
        \label{tab:sim4-ecdf-params}
        \end{table}
    
\begin{figure}[ht!]
    \centering

    \begin{minipage}{0.30\textwidth}
        \centering
        \includegraphics[width=\linewidth]{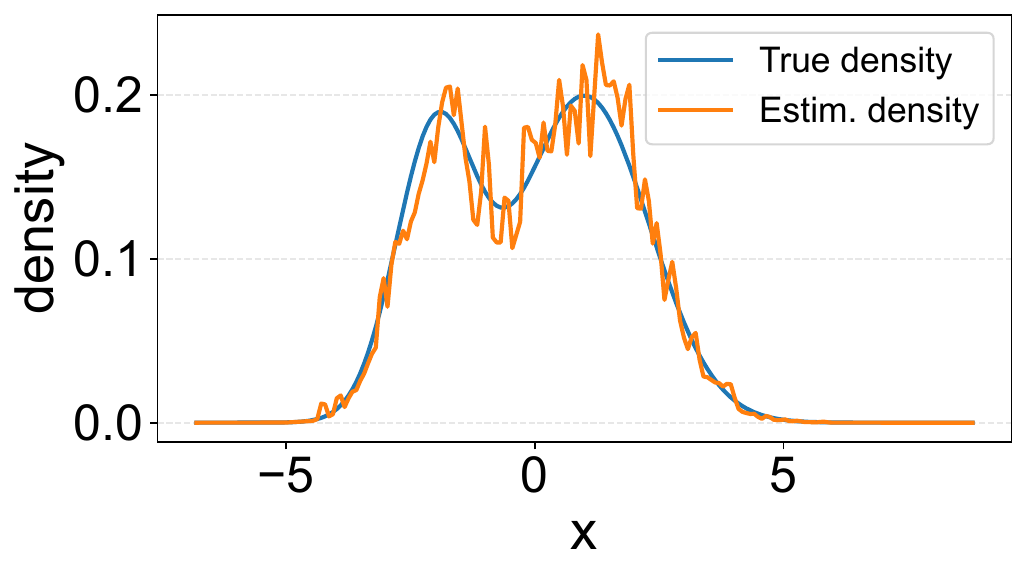}
        \\[-0.5em]
        {\small $\alpha=0.25,\ q=0.25$}
    \end{minipage}
    \hfill
    \begin{minipage}{0.30\textwidth}
        \centering
        \includegraphics[width=\linewidth]{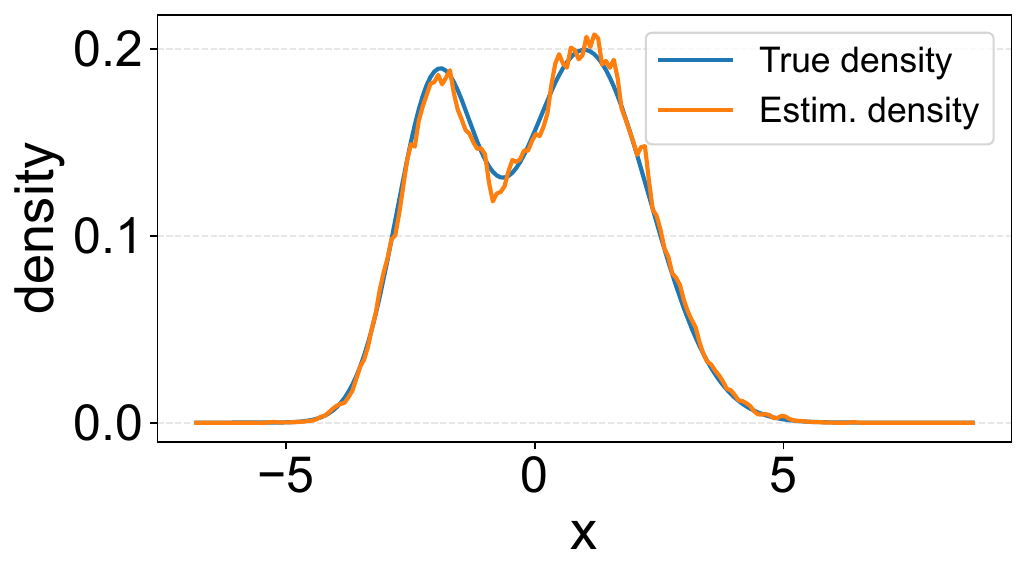}
        \\[-0.5em]
        {\small $\alpha=0.50,\ q=0.25$}
    \end{minipage}
    \hfill
    \begin{minipage}{0.30\textwidth}
        \centering
        \includegraphics[width=\linewidth]{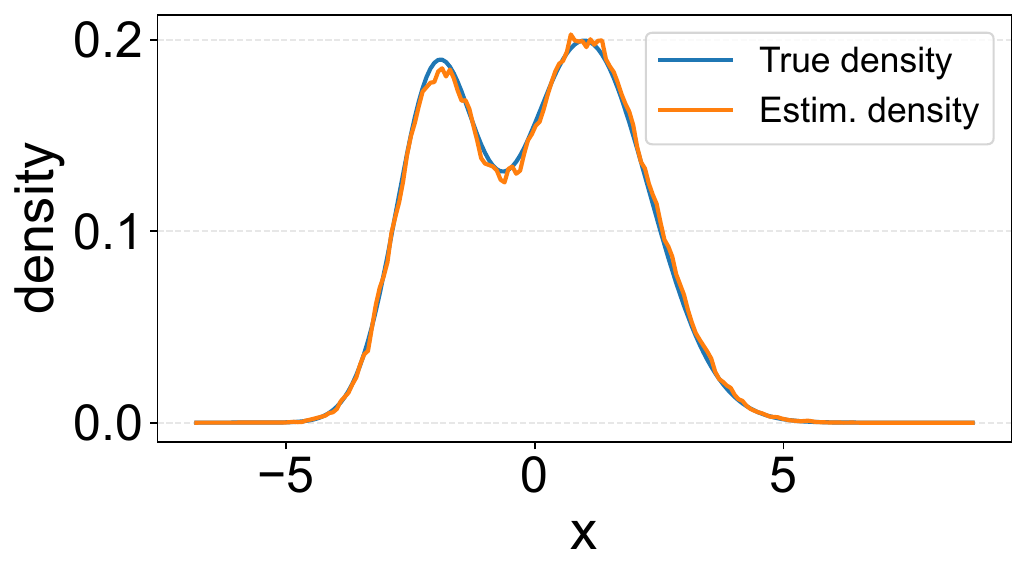}
        \\[-0.5em]
        {\small $\alpha=0.75,\ q=0.25$}
    \end{minipage}

    \vspace{1em}

    \begin{minipage}{0.30\textwidth}
        \centering
        \includegraphics[width=\linewidth]{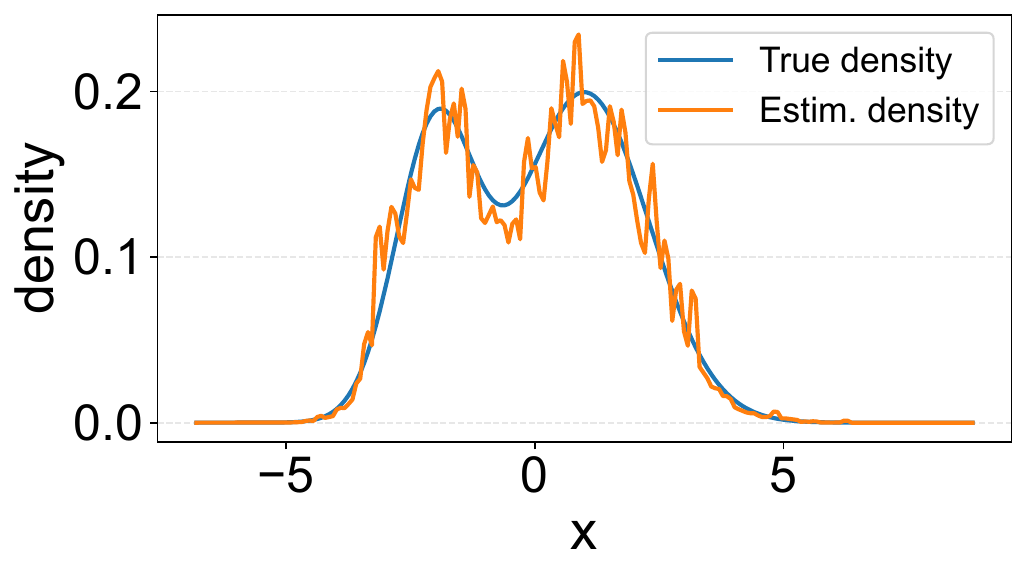}
        \\[-0.5em]
        {\small $\alpha=0.25,\ q=0.50$}
    \end{minipage}
    \hfill
    \begin{minipage}{0.30\textwidth}
        \centering
        \includegraphics[width=\linewidth]{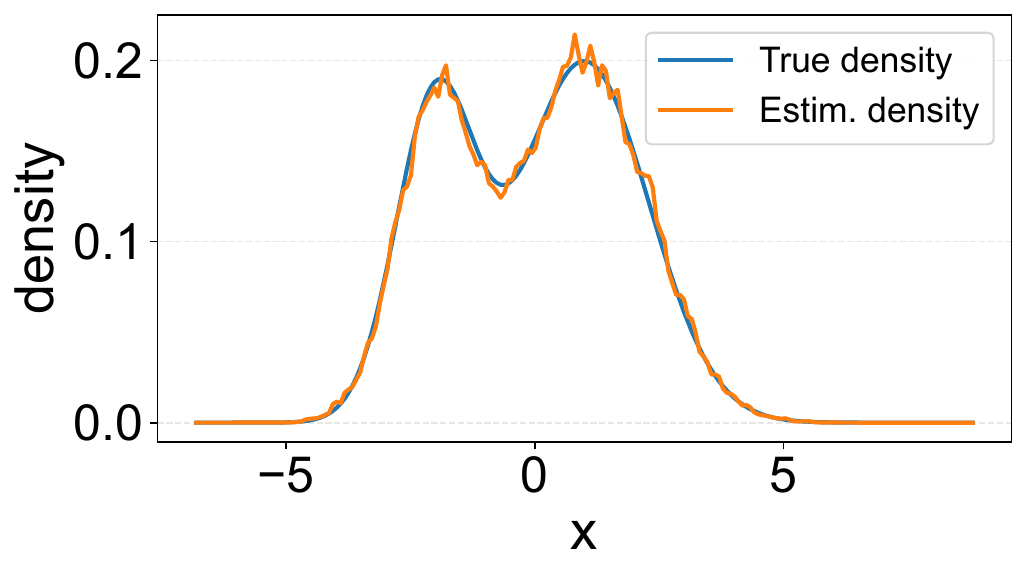}
        \\[-0.5em]
        {\small $\alpha=0.50,\ q=0.50$}
    \end{minipage}
    \hfill
    \begin{minipage}{0.30\textwidth}
        \centering
        \includegraphics[width=\linewidth]{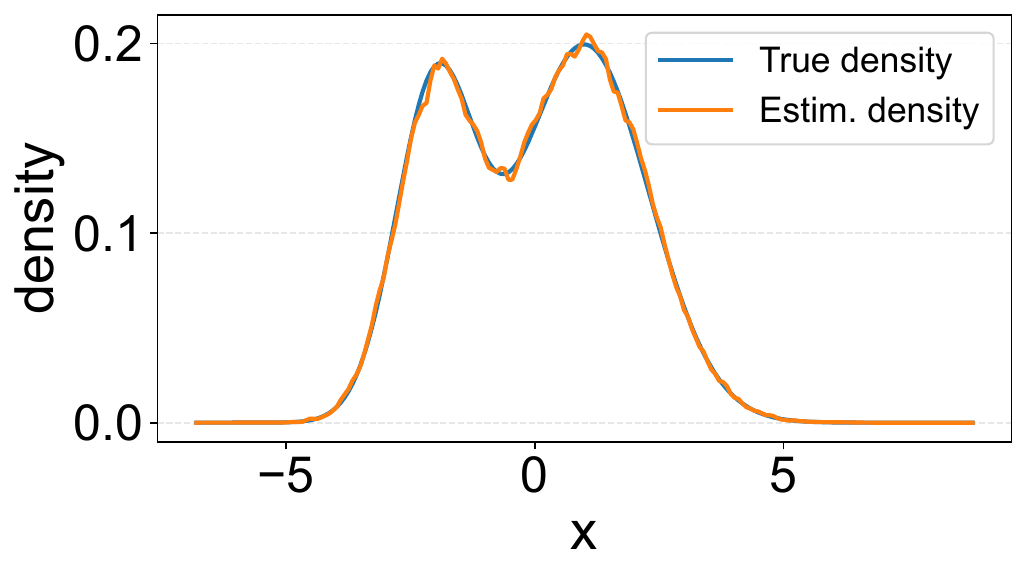}
        \\[-0.5em]
        {\small $\alpha=0.75,\ q=0.50$}
    \end{minipage}

    \vspace{1em}

    \begin{minipage}{0.30\textwidth}
        \centering
        \includegraphics[width=\linewidth]{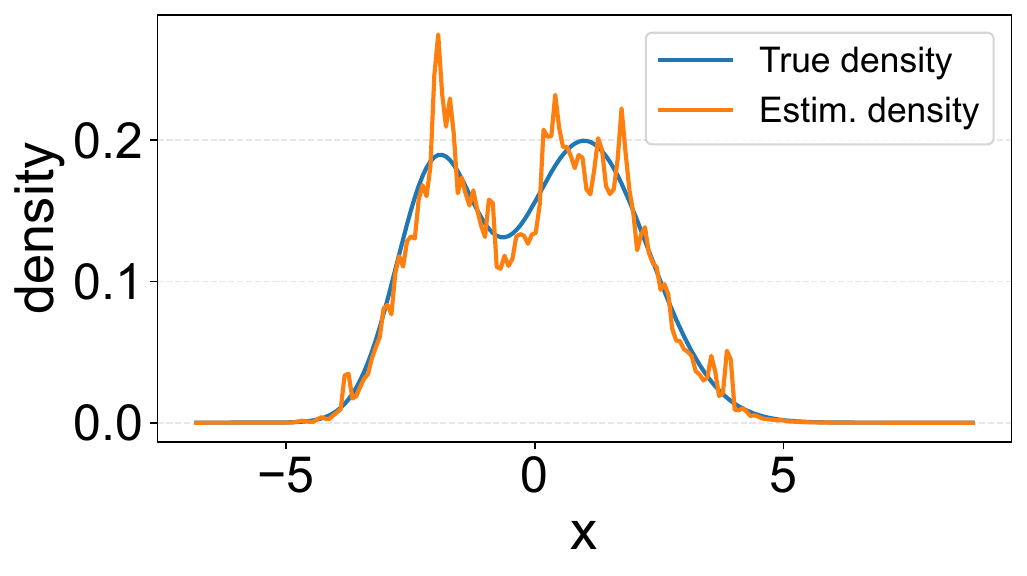}
        \\[-0.5em]
        {\small $\alpha=0.25,\ q=0.75$}
    \end{minipage}
    \hfill
    \begin{minipage}{0.30\textwidth}
        \centering
        \includegraphics[width=\linewidth]{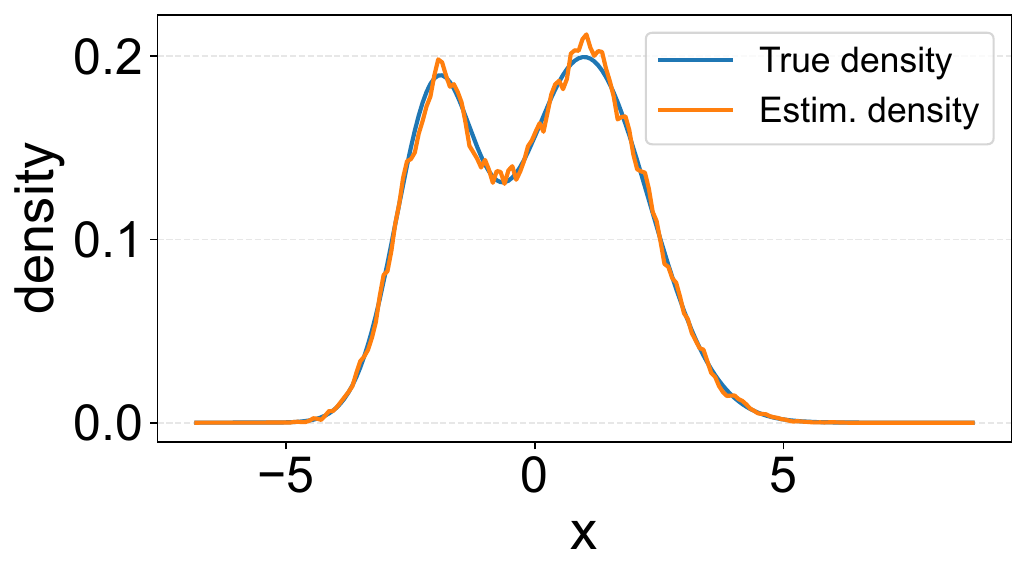}
        \\[-0.5em]
        {\small $\alpha=0.50,\ q=0.75$}
    \end{minipage}
    \hfill
    \begin{minipage}{0.30\textwidth}
        \centering
        \includegraphics[width=\linewidth]{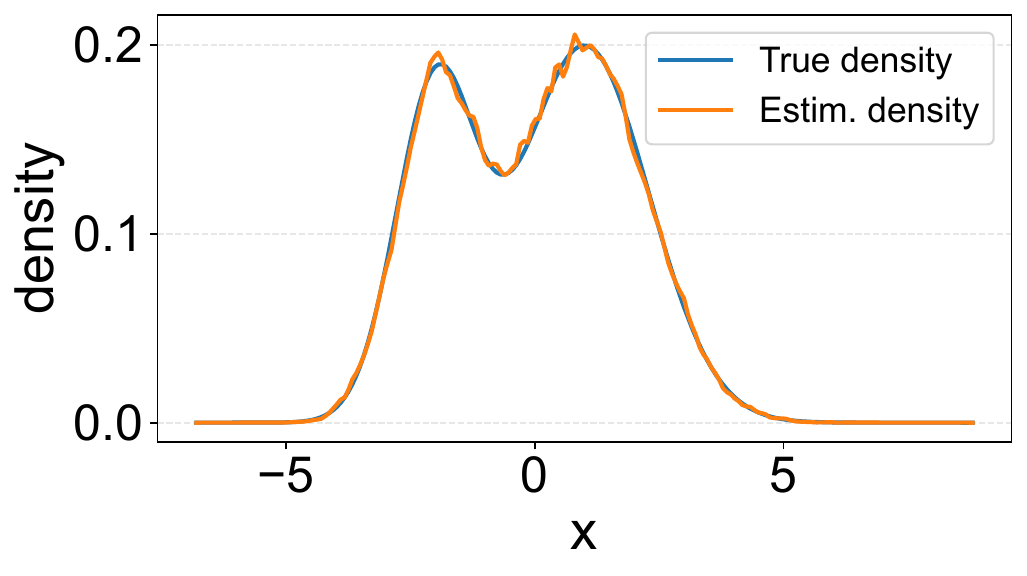}
        \\[-0.5em]
        {\small $\alpha=0.75,\ q=0.75$}
    \end{minipage}

    \caption{BCRT Simulation (ECDF Estimator). Final output distributions across combinations of real-data fraction $\alpha \in \{0.25,0.5,0.75\}$ and bias decay rate $q \in \{0.25,0.5,0.75\}$. }
    \label{fig:ECDF-three-by-three-alpha-q}
\end{figure}

\begin{figure}[ht!]
    \centering

    \begin{minipage}{0.275\textwidth}
        \centering
        \includegraphics[width=\linewidth]{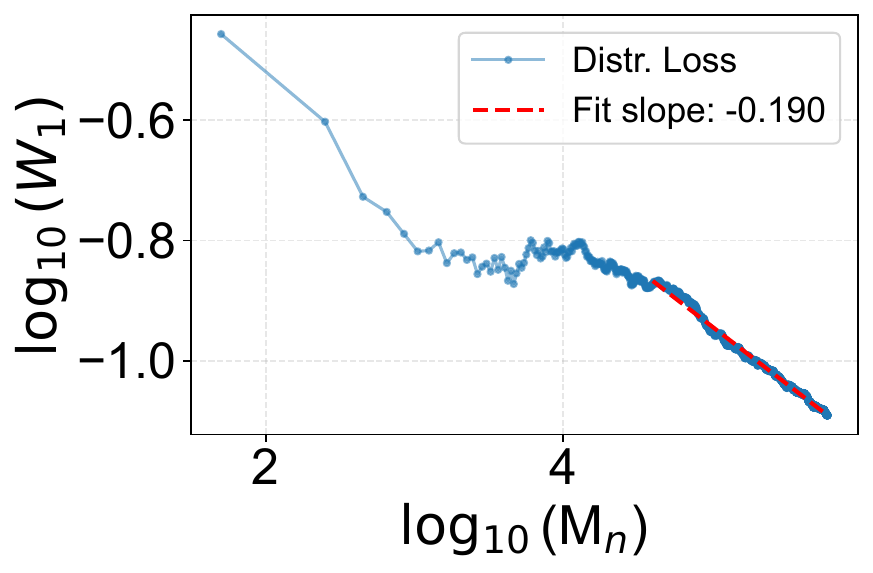}
    \end{minipage}
    \hfill
    \begin{minipage}{0.275\textwidth}
        \centering
        \includegraphics[width=\linewidth]{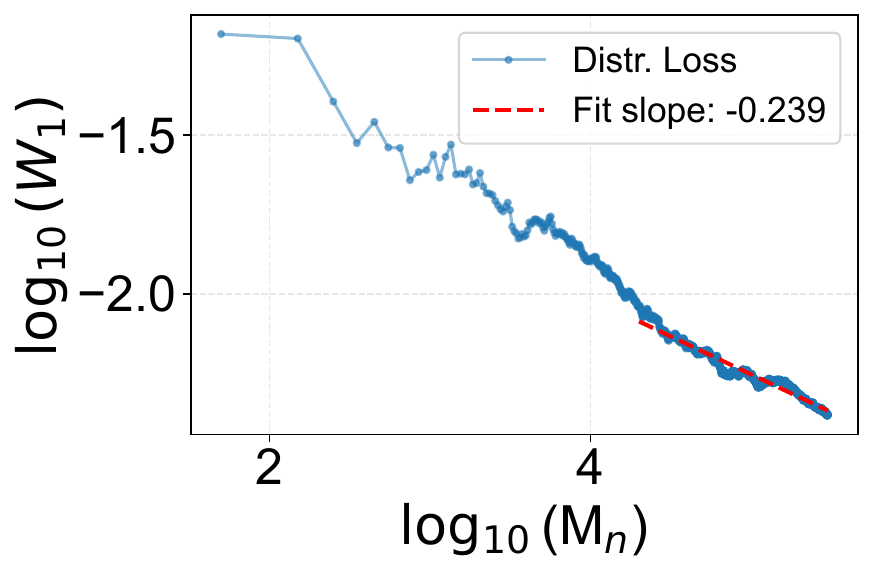}
        \end{minipage}
    \hfill
    \begin{minipage}{0.275\textwidth}
        \centering
        \includegraphics[width=\linewidth]{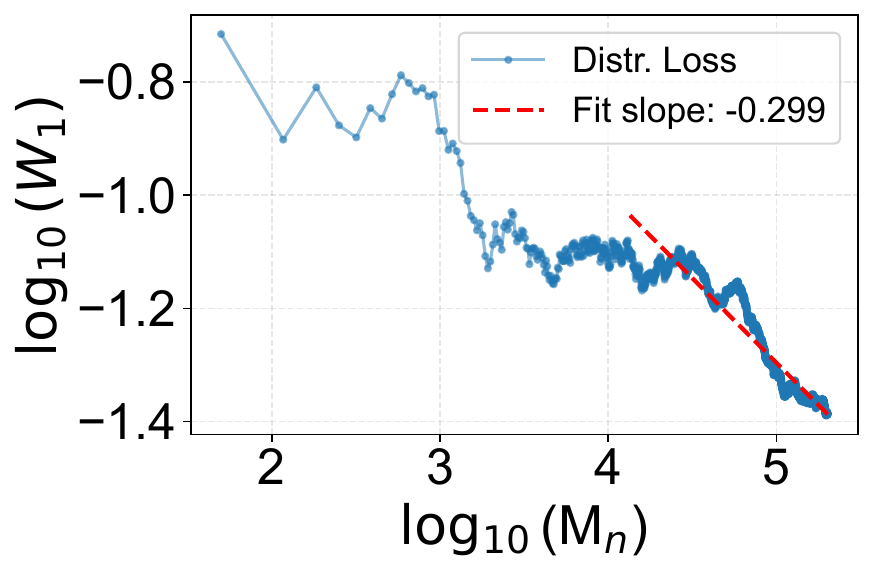}
        \end{minipage}
     \begin{minipage}{0.275\textwidth}
        \centering
        \includegraphics[width=\linewidth]{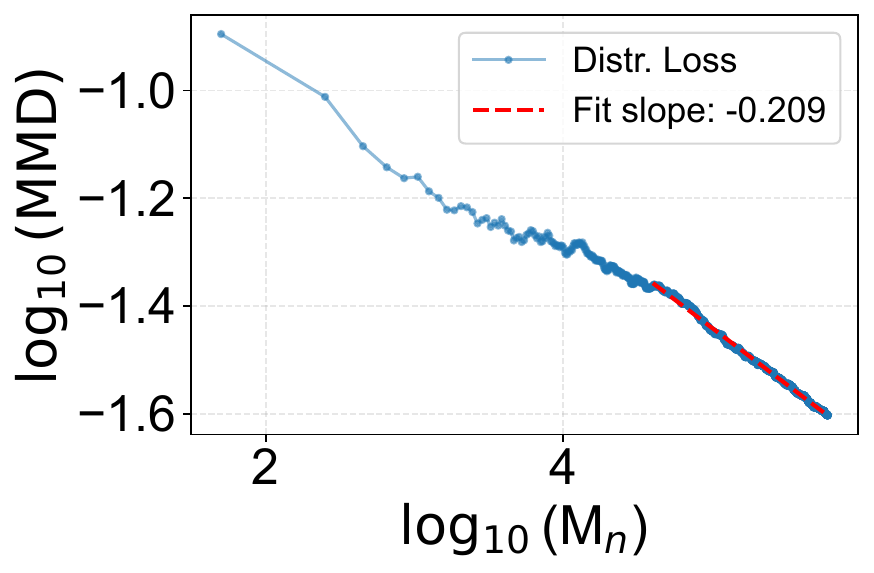}
        \caption*{\small $\alpha=0.25,\ q=0.25$}
    \end{minipage}
    \hfill
    \begin{minipage}{0.275\textwidth}
        \centering
        \includegraphics[width=\linewidth]{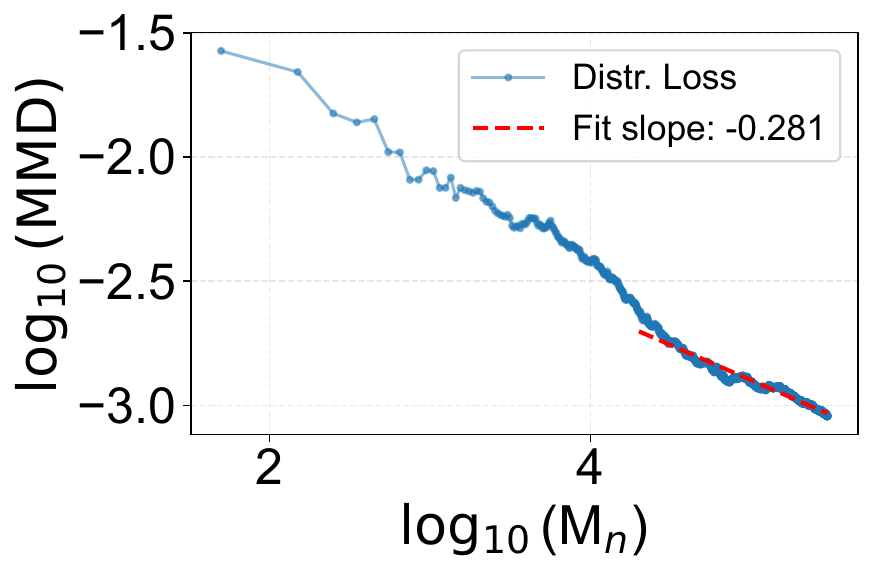}
        \caption*{\small $\alpha=0.50,\ q=0.25$}
        \end{minipage}
    \hfill
    \begin{minipage}{0.275\textwidth}
        \centering
        \includegraphics[width=\linewidth]{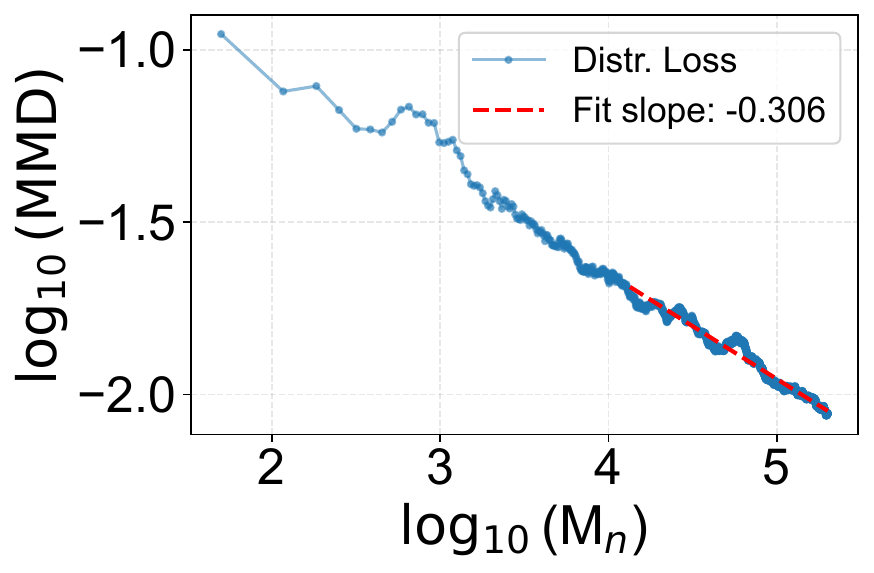}
        \caption*{\small $\alpha=0.75,\ q=0.25$}
        \end{minipage}
        
    \vspace{1em}

    \begin{minipage}{0.275\textwidth}
        \centering
        \includegraphics[width=\linewidth]{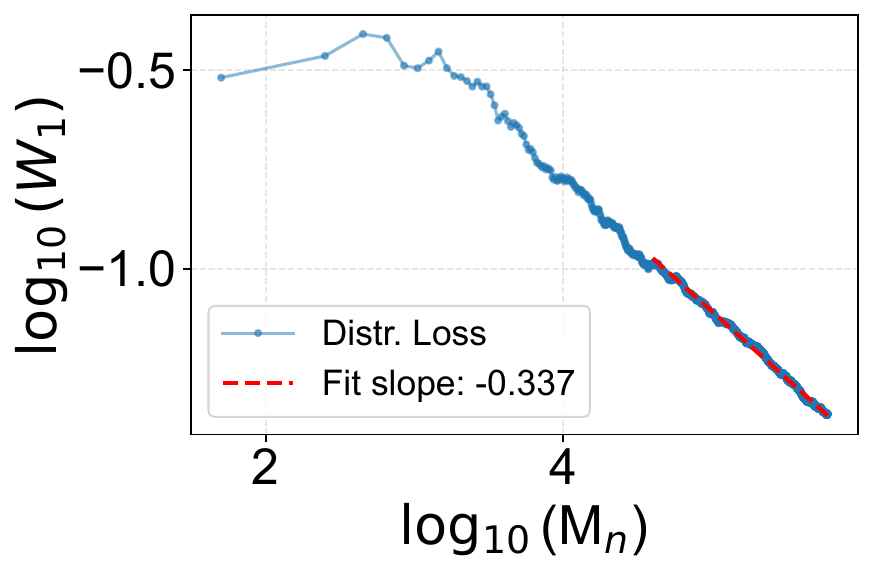}
    \end{minipage}
    \hfill
    \begin{minipage}{0.275\textwidth}
        \centering
        \includegraphics[width=\linewidth]{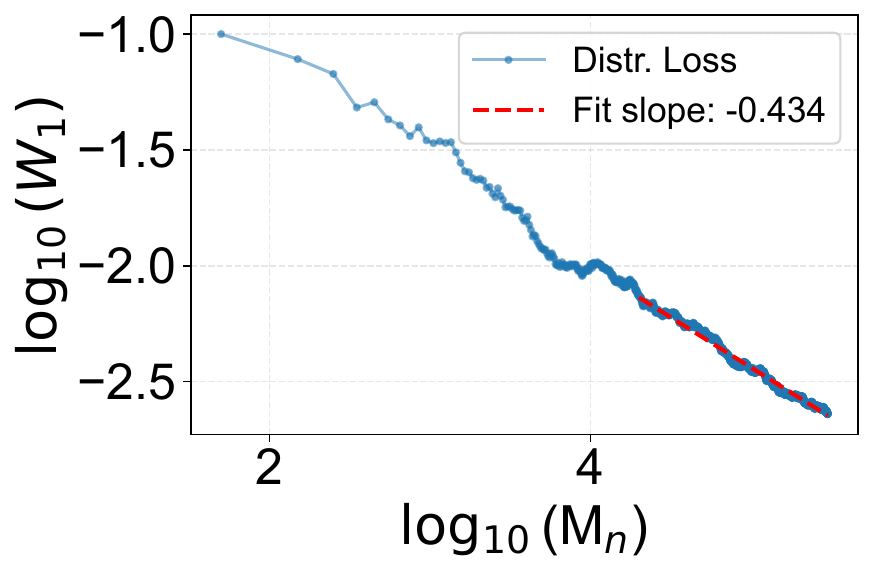}
    \end{minipage}
    \hfill
    \begin{minipage}{0.275\textwidth}
        \centering
        \includegraphics[width=\linewidth]{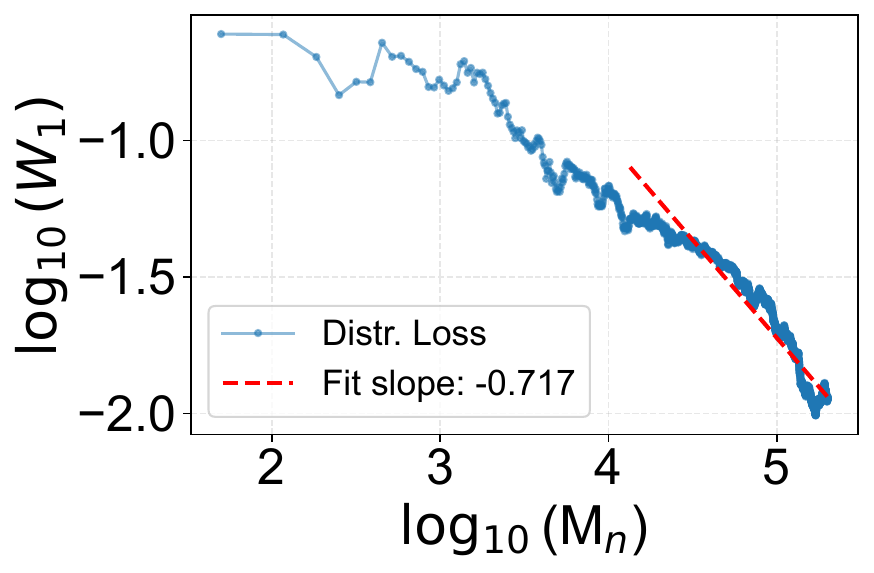}
    \end{minipage}
     \begin{minipage}{0.275\textwidth}
        \centering
        \includegraphics[width=\linewidth]{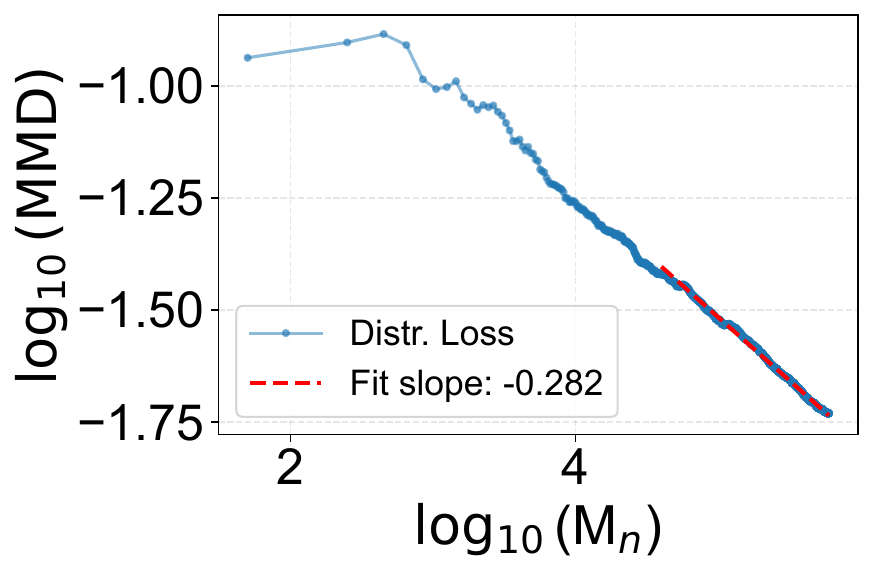}
        {\small $\alpha=0.25,\ q=0.50$}
    \end{minipage}
    \hfill
    \begin{minipage}{0.275\textwidth}
        \centering
        \includegraphics[width=\linewidth]{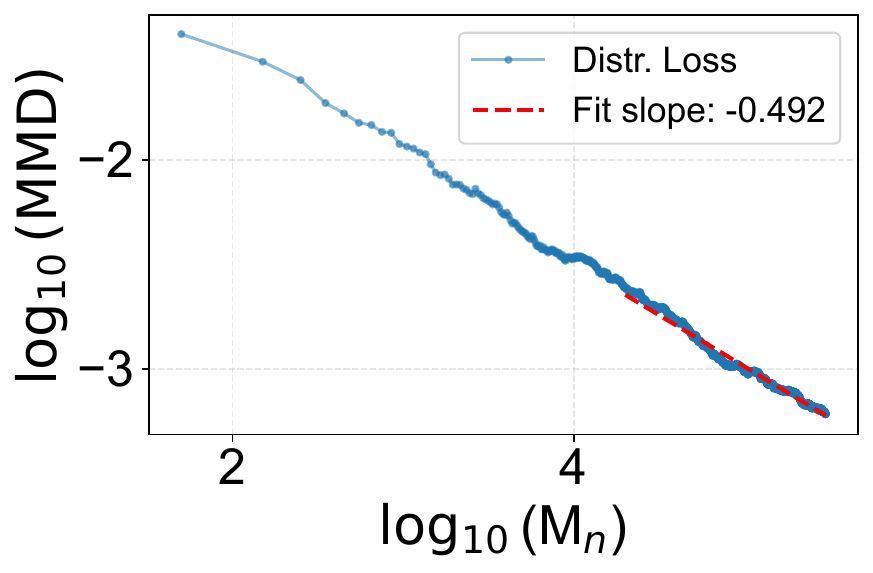}
        {\small $\alpha=0.50,\ q=0.50$}
    \end{minipage}
    \hfill
    \begin{minipage}{0.275\textwidth}
        \centering
        \includegraphics[width=\linewidth]{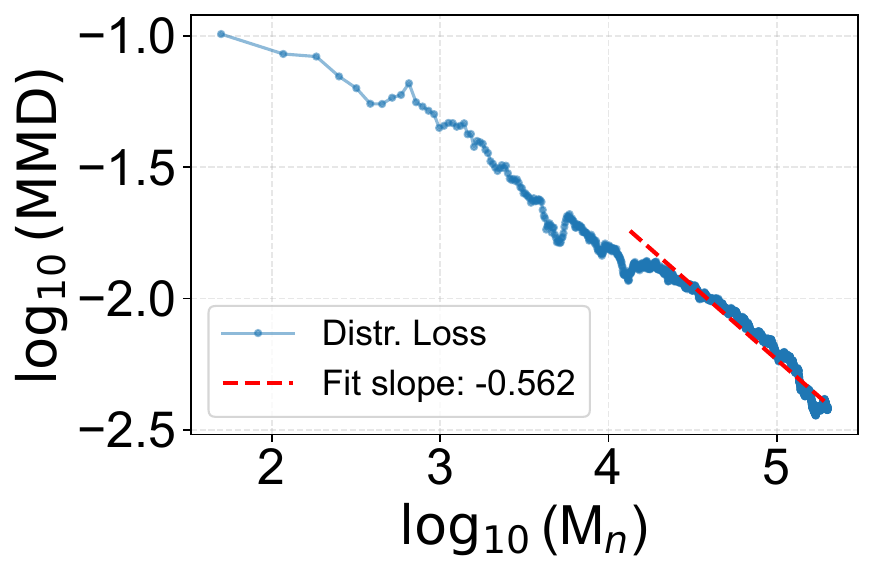}
        {\small $\alpha=0.75,\ q=0.50$}
    \end{minipage}

    \vspace{1em}

    \begin{minipage}{0.275\textwidth}
        \centering
        \includegraphics[width=\linewidth]{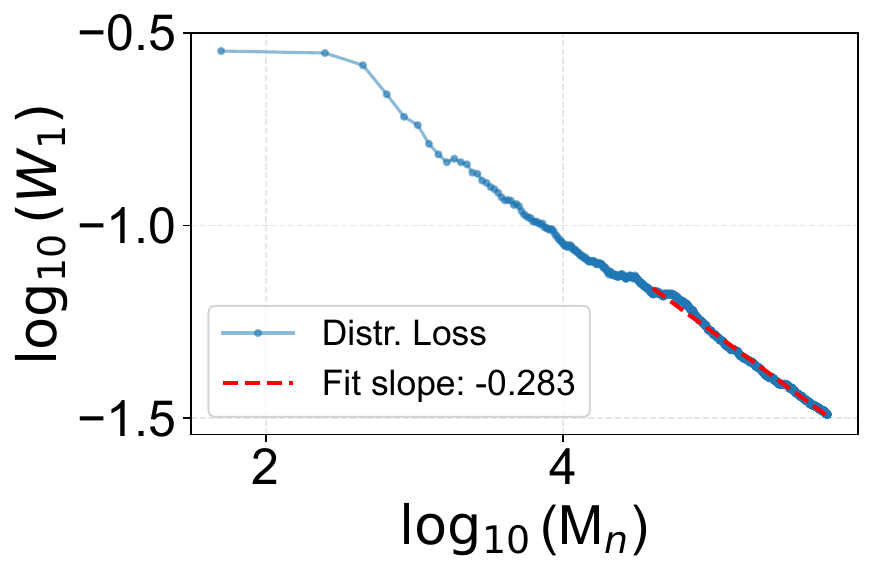}
    \end{minipage}
    \hfill
    \begin{minipage}{0.275\textwidth}
        \centering
        \includegraphics[width=\linewidth]{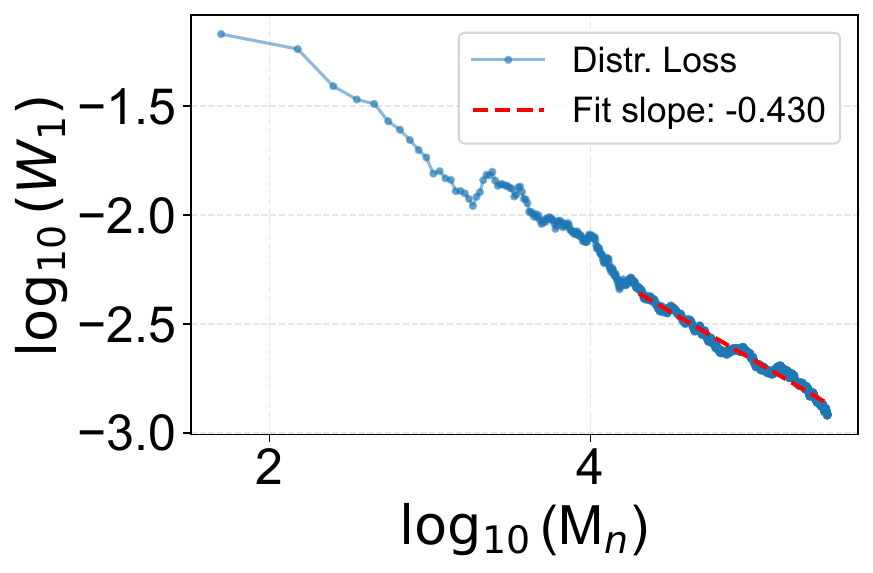}
    \end{minipage}
    \hfill
    \begin{minipage}{0.275\textwidth}
        \centering
        \includegraphics[width=\linewidth]{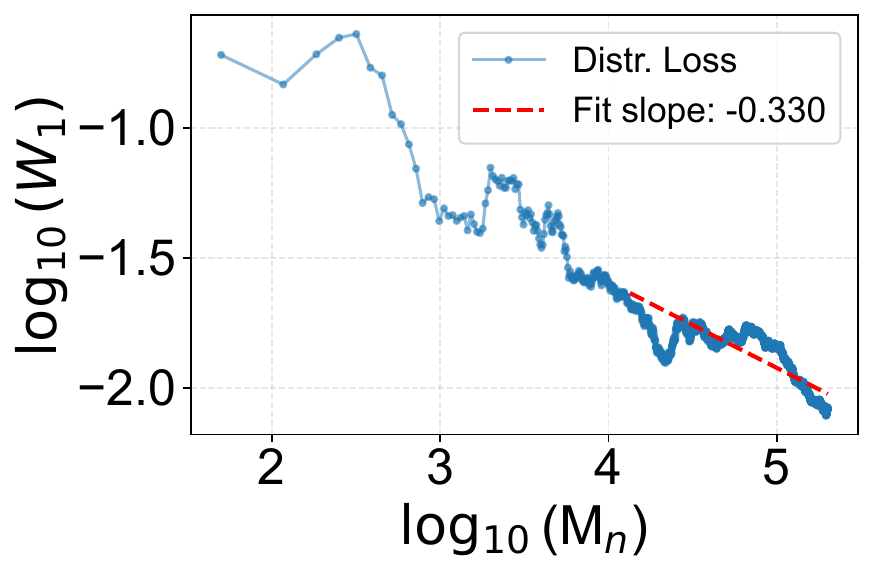}
    \end{minipage}
     \begin{minipage}{0.275\textwidth}
        \centering
        \includegraphics[width=\linewidth]{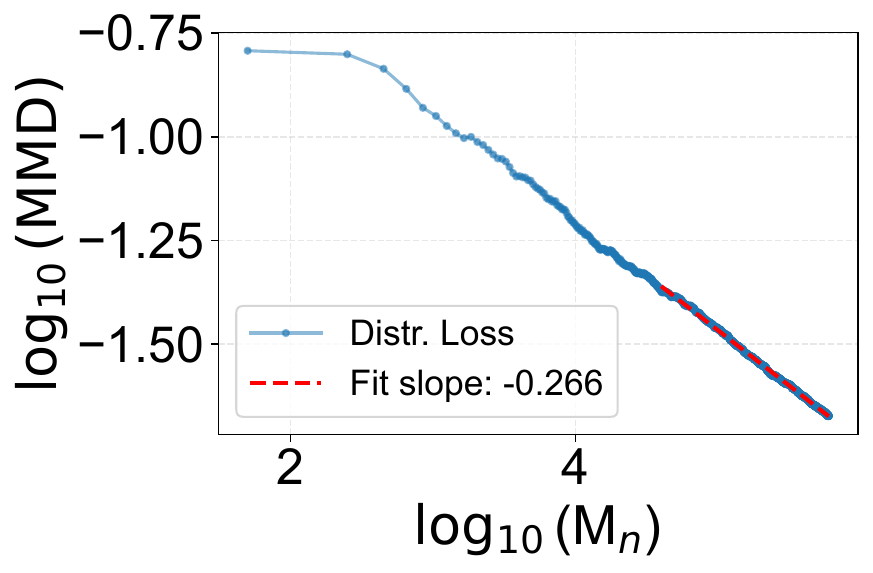}
        {\small $\alpha=0.25,\ q=0.75$}
    \end{minipage}
    \hfill
    \begin{minipage}{0.275\textwidth}
        \centering
        \includegraphics[width=\linewidth]{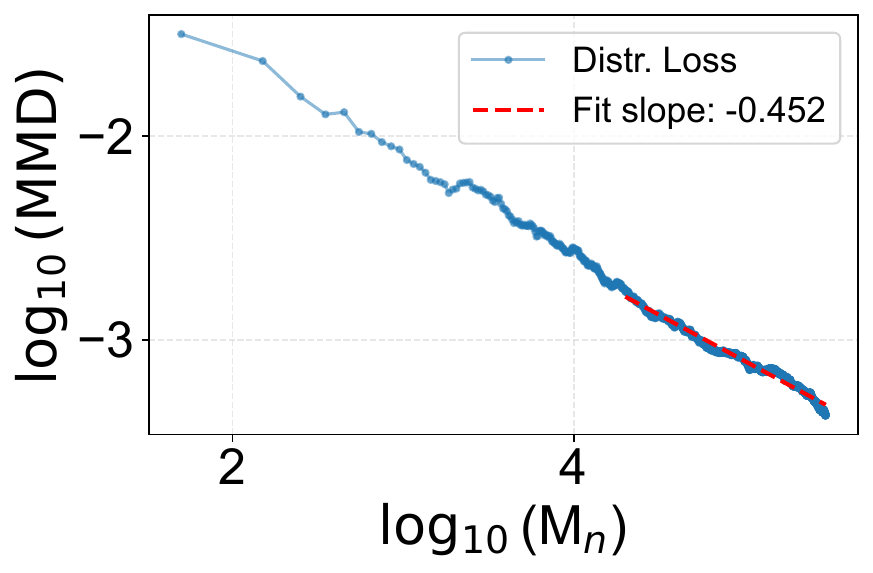}
        {\small $\alpha=0.50,\ q=0.75$}
    \end{minipage}
    \hfill
    \begin{minipage}{0.275\textwidth}
        \centering
        \includegraphics[width=\linewidth]{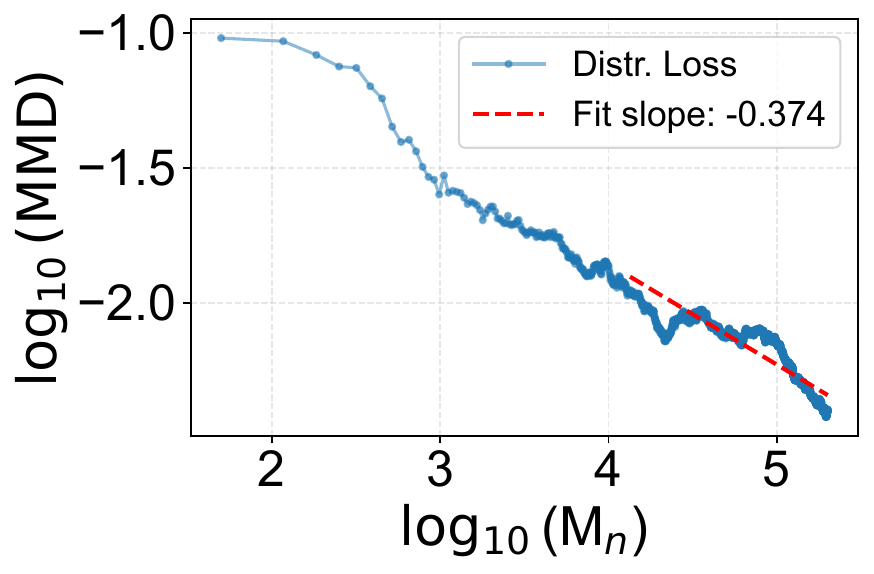}
        {\small $\alpha=0.75,\ q=0.75$}
    \end{minipage}

    \caption{BCRT Simulation (ECDF Estimator). $W_1$ and MMD distributional losses and fitted slopes for varying values of $\alpha$ and $q$ for a single run. Slopes are fitted after dropping the first 200 out of 3000 iterations.}
    \label{fig:ECDF-three-by-three-alpha-q-loss}
\end{figure}

\clearpage
    \begin{table}[ht!]
        \centering
        \footnotesize
        \begin{tabular}{lcl}
        \toprule
        \textbf{Parameter} & \textbf{Value} & \textbf{Description} \\
        \midrule
        $m_1$ & $50$ & Real samples per iteration \\
        $\alpha$ & $\{0.25,0.5,0.75\}$ & real-data fraction \\
        $q$ & $\{0.25,0.5,0.75\}$ & Bias Decay Rate \\
        $T$ & $3000$ & Total BCRT iterations \\
        $n_{\mathrm{reps}}$ & $100$ & Number of repetitions \\
        $m_{\mathrm{grid}}$ & $200$ & Grid size for deterministic evaluation \\
        $[x_{\min},x_{\max}]$ & Mixture-based & Grid interval for density/CDF evaluation \\
        $w_1$ & $0.35$ & Mixture weight \\
        $\mu_1,\sigma_1$ & $-2.0,\; 0.8$ & First Gaussian component \\
        $\mu_2,\sigma_2$ & $1.0,\; 1.3$ & Second Gaussian component \\
        $\mu_3,\sigma_3$ & $3.0,\; 1.0$ & Bias Gaussian component \\
        $h_0$ & $2.0$ & Base KDE bandwidth \\
        \bottomrule
        \end{tabular}
        \caption{BCRT Simulation (KDE Estimator) experimental parameters}
        \label{tab:sim4-kde-params}
        \end{table}

\begin{figure}[ht!]
    \centering

    \begin{minipage}{0.30\textwidth}
        \centering
        \includegraphics[width=\linewidth]{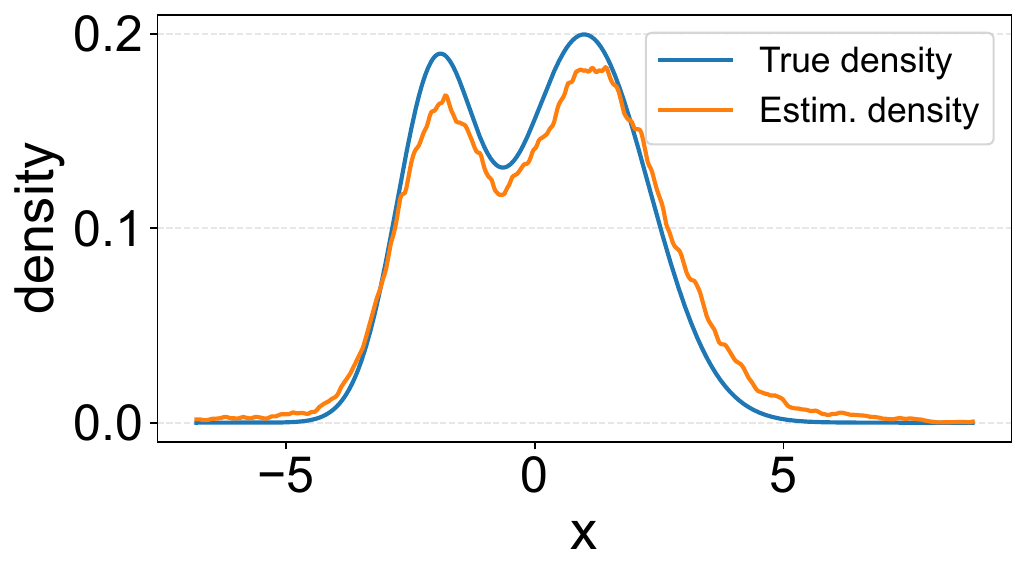}
        \\[-0.5em]
        {\small $\alpha=0.25,\ q=0.25$}
    \end{minipage}
    \hfill
    \begin{minipage}{0.30\textwidth}
        \centering
        \includegraphics[width=\linewidth]{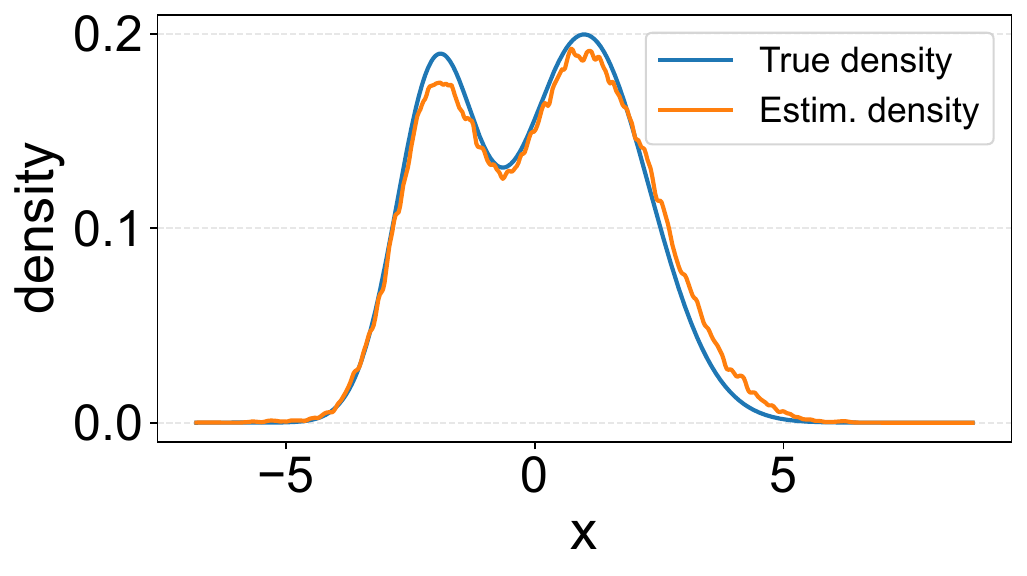}
        \\[-0.5em]
        {\small $\alpha=0.50,\ q=0.25$}
    \end{minipage}
    \hfill
    \begin{minipage}{0.30\textwidth}
        \centering
        \includegraphics[width=\linewidth]{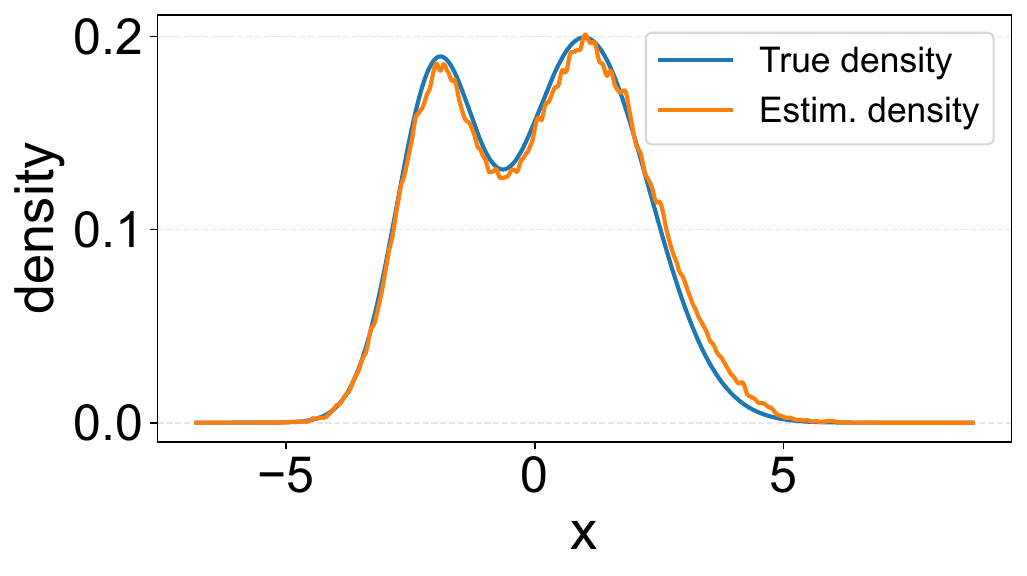}
        \\[-0.5em]
        {\small $\alpha=0.75,\ q=0.25$}
    \end{minipage}

    \vspace{1em}

    \begin{minipage}{0.30\textwidth}
        \centering
        \includegraphics[width=\linewidth]{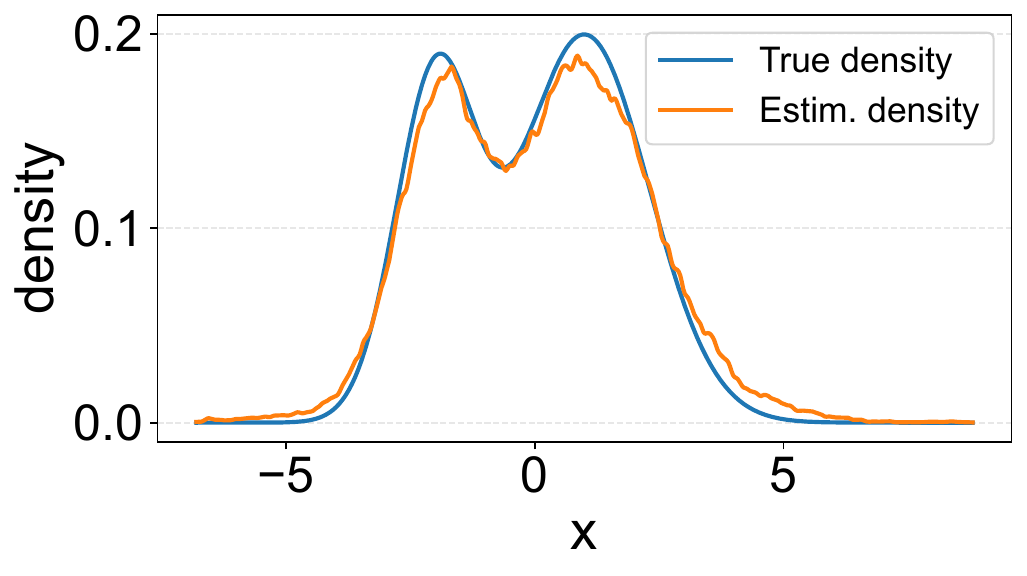}
        \\[-0.5em]
        {\small $\alpha=0.25,\ q=0.50$}
    \end{minipage}
    \hfill
    \begin{minipage}{0.30\textwidth}
        \centering
        \includegraphics[width=\linewidth]{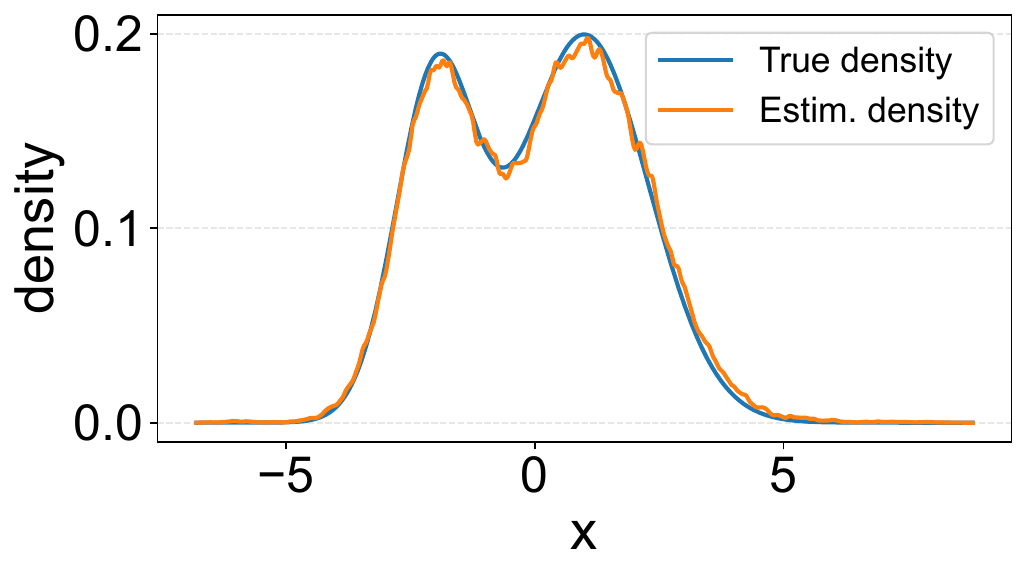}
        \\[-0.5em]
        {\small $\alpha=0.50,\ q=0.50$}
    \end{minipage}
    \hfill
    \begin{minipage}{0.30\textwidth}
        \centering
        \includegraphics[width=\linewidth]{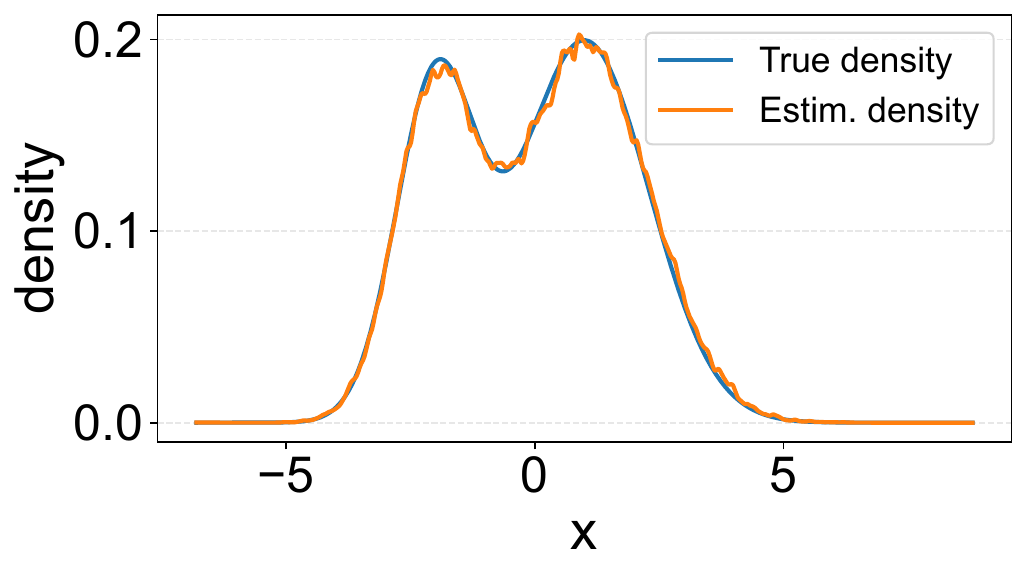}
        \\[-0.5em]
        {\small $\alpha=0.75,\ q=0.50$}
    \end{minipage}

    \vspace{1em}

    \begin{minipage}{0.30\textwidth}
        \centering
        \includegraphics[width=\linewidth]{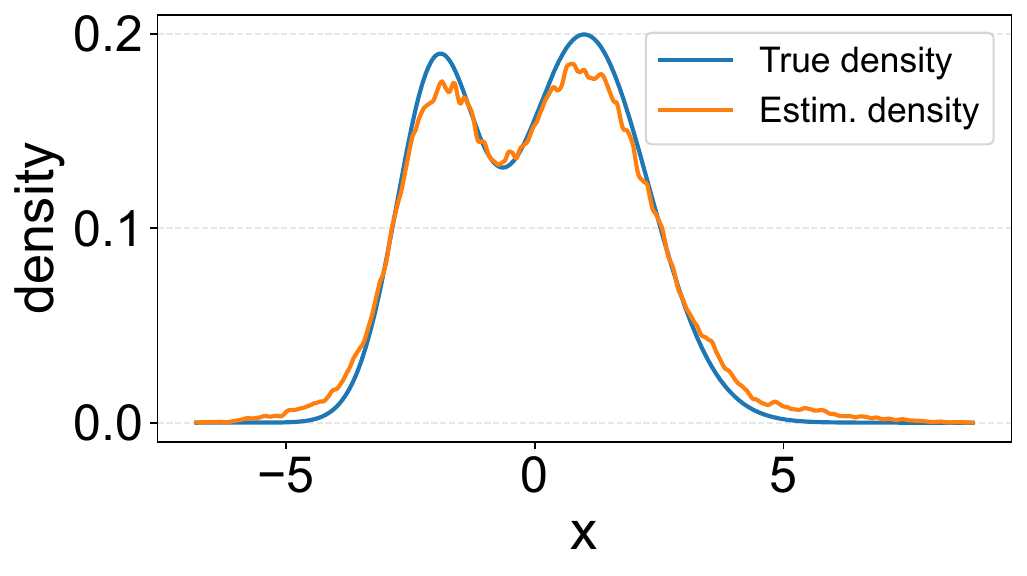}
        \\[-0.5em]
        {\small $\alpha=0.25,\ q=0.75$}
    \end{minipage}
    \hfill
    \begin{minipage}{0.30\textwidth}
        \centering
        \includegraphics[width=\linewidth]{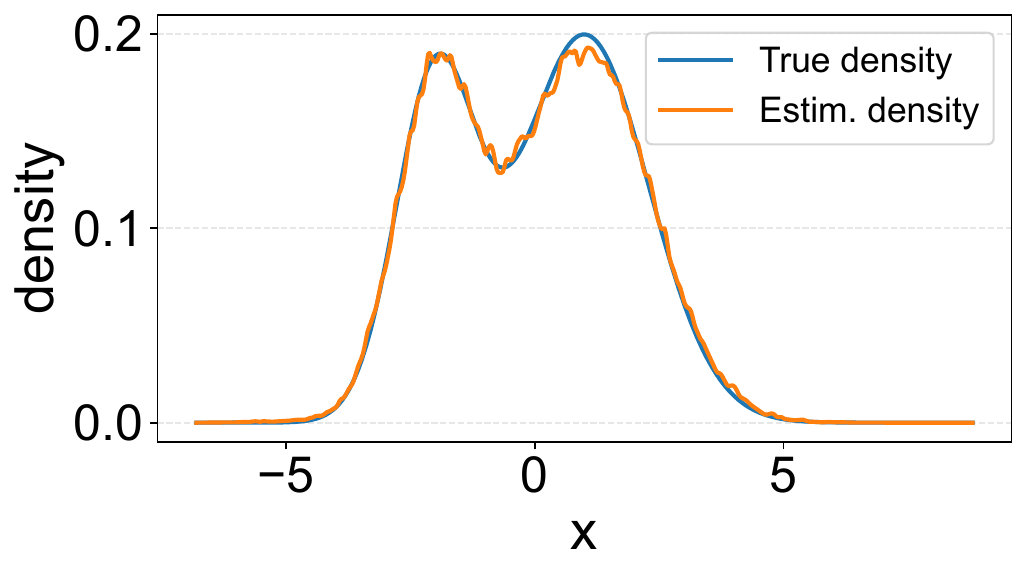}
        \\[-0.5em]
        {\small $\alpha=0.50,\ q=0.75$}
    \end{minipage}
    \hfill
    \begin{minipage}{0.30\textwidth}
        \centering
        \includegraphics[width=\linewidth]{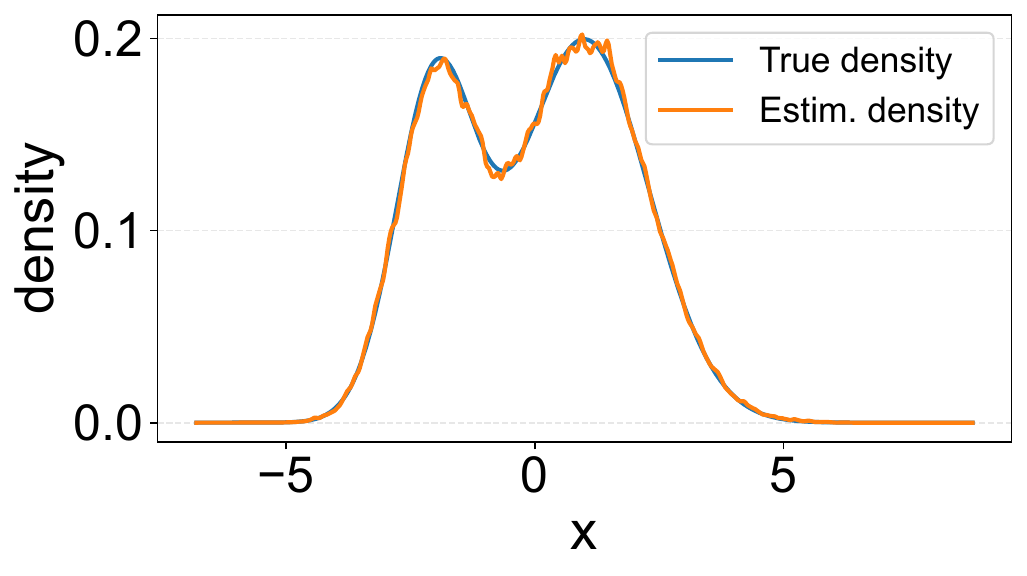}
        \\[-0.5em]
        {\small $\alpha=0.75,\ q=0.75$}
    \end{minipage}

    \caption{BCRT Simulation (KDE Estimator). Final output distributions across combinations of real-data fraction $\alpha \in \{0.25,0.5,0.75\}$ and bias decay rate $q \in \{0.25,0.5,0.75\}$.}
    \label{fig:KDE-three-by-three-alpha-q}
\end{figure}

\begin{figure}[ht!]
    \centering

    \begin{minipage}{0.275\textwidth}
        \centering
        \includegraphics[width=\linewidth]{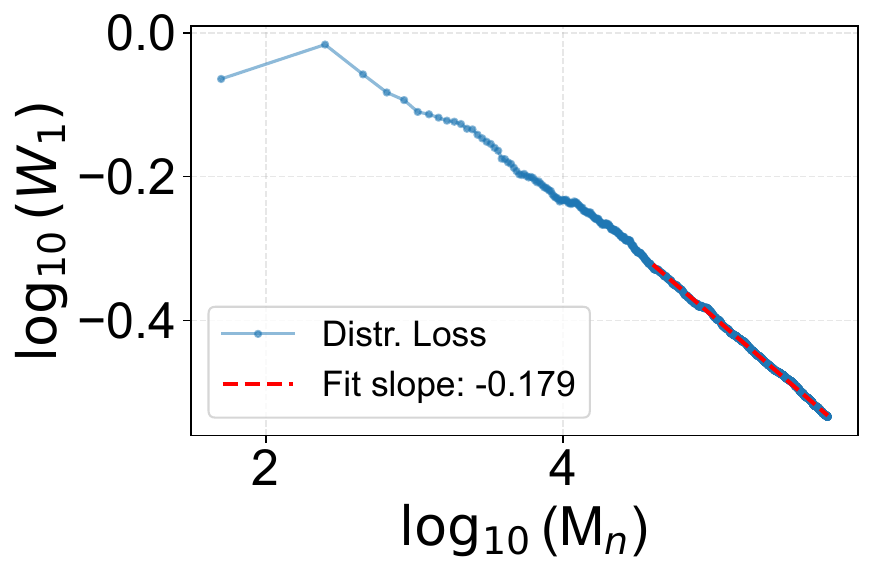}
    \end{minipage}
    \hfill
    \begin{minipage}{0.275\textwidth}
        \centering
        \includegraphics[width=\linewidth]{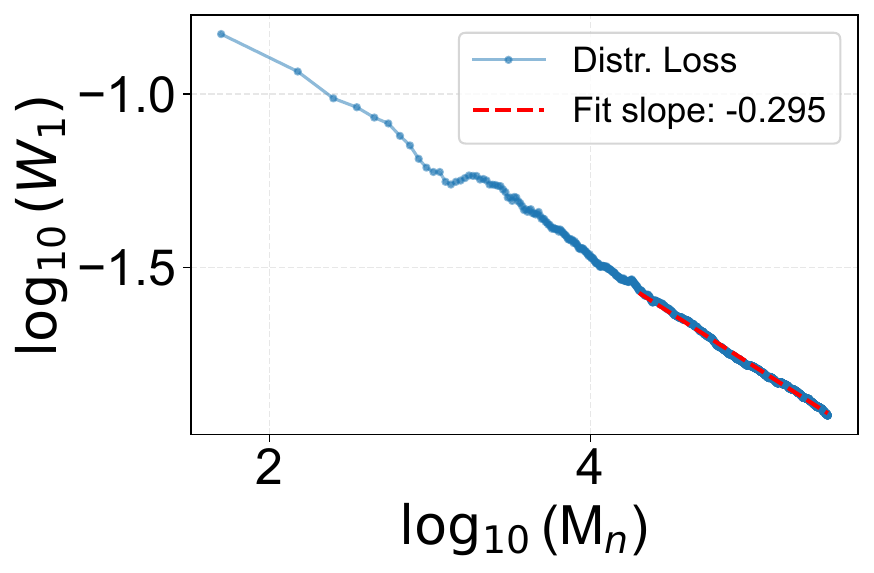}
        \end{minipage}
    \hfill
    \begin{minipage}{0.275\textwidth}
        \centering
        \includegraphics[width=\linewidth]{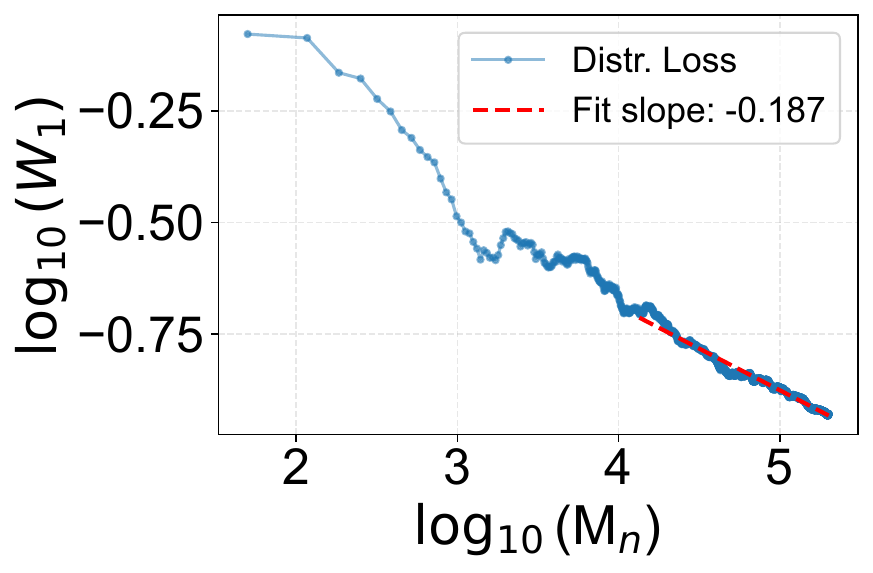}
        \end{minipage}
     \begin{minipage}{0.275\textwidth}
        \centering
        \includegraphics[width=\linewidth]{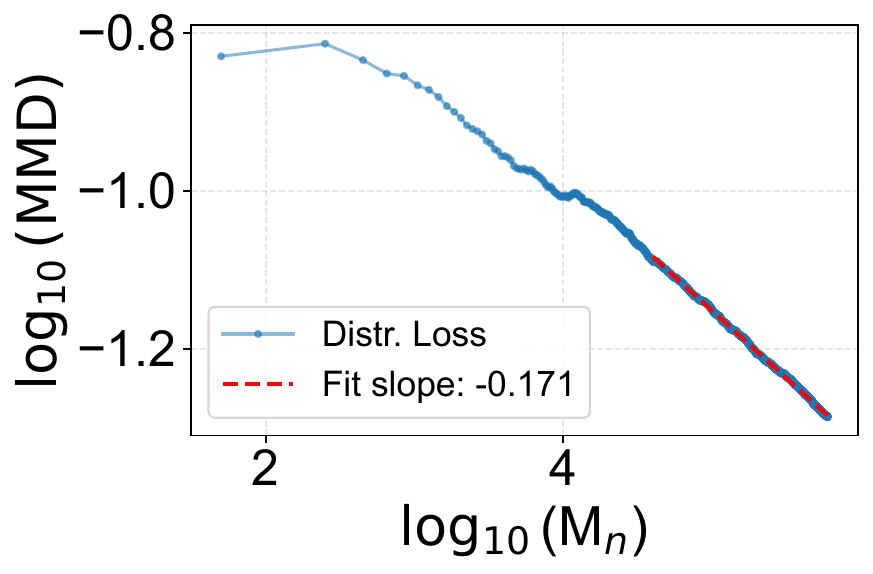}
        \caption*{\small $\alpha=0.25,\ q=0.25$}
    \end{minipage}
    \hfill
    \begin{minipage}{0.275\textwidth}
        \centering
        \includegraphics[width=\linewidth]{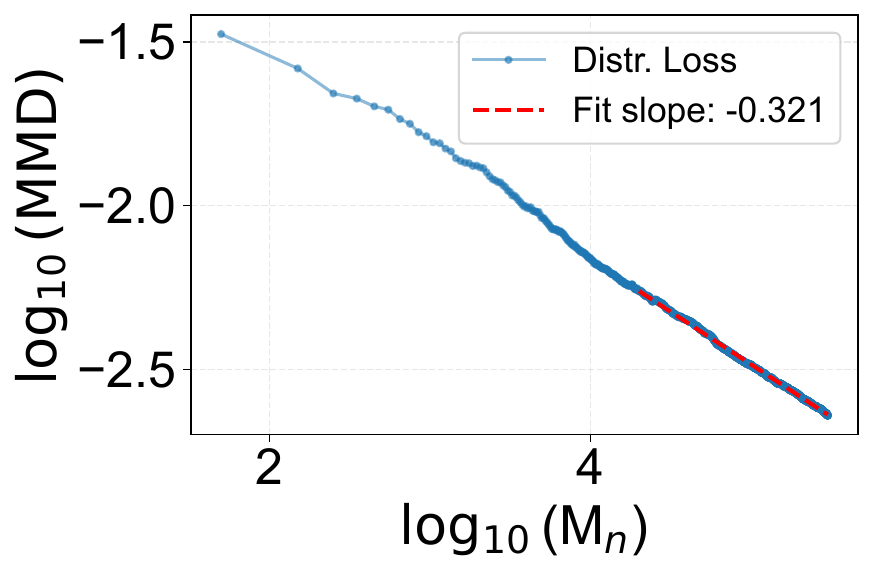}
        \caption*{\small $\alpha=0.50,\ q=0.25$}
        \end{minipage}
    \hfill
    \begin{minipage}{0.275\textwidth}
        \centering
        \includegraphics[width=\linewidth]{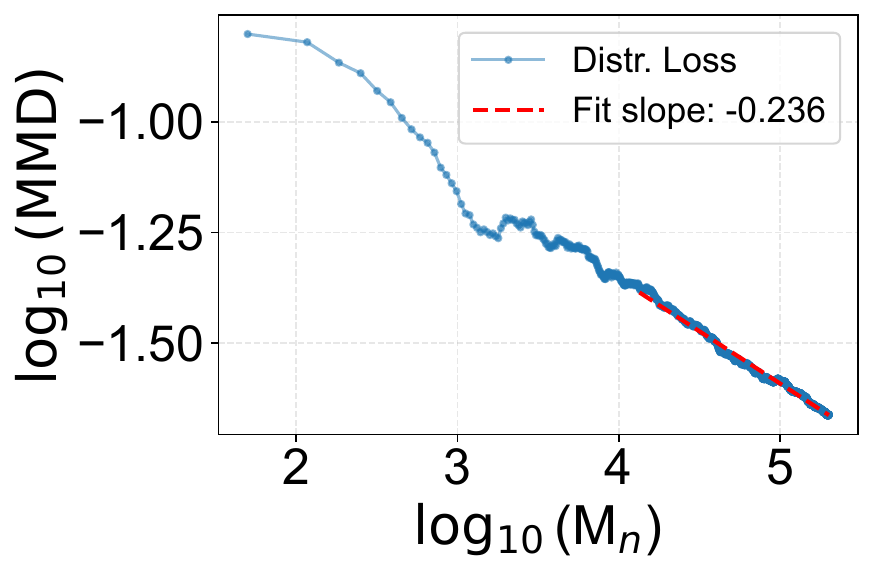}
        \caption*{\small $\alpha=0.75,\ q=0.25$}
        \end{minipage}
        
    \vspace{1em}

    \begin{minipage}{0.275\textwidth}
        \centering
        \includegraphics[width=\linewidth]{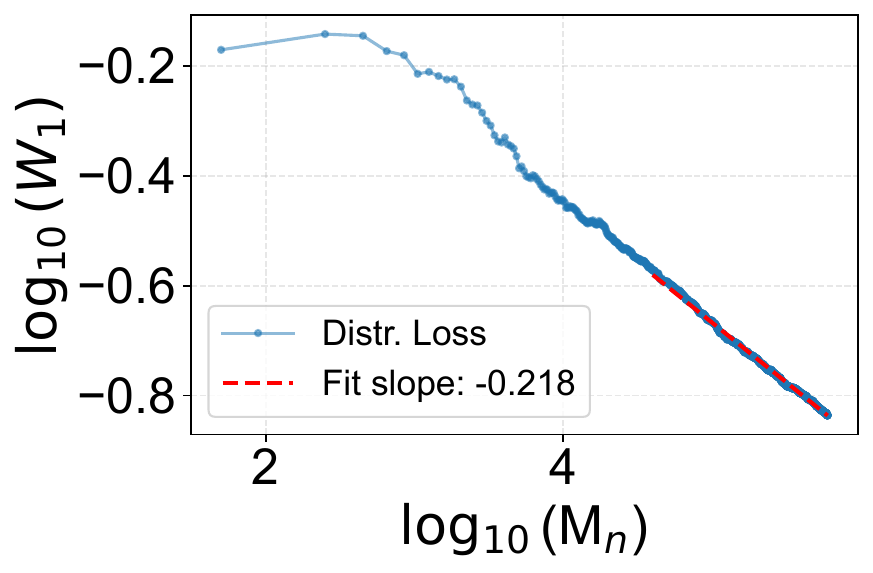}
    \end{minipage}
    \hfill
    \begin{minipage}{0.275\textwidth}
        \centering
        \includegraphics[width=\linewidth]{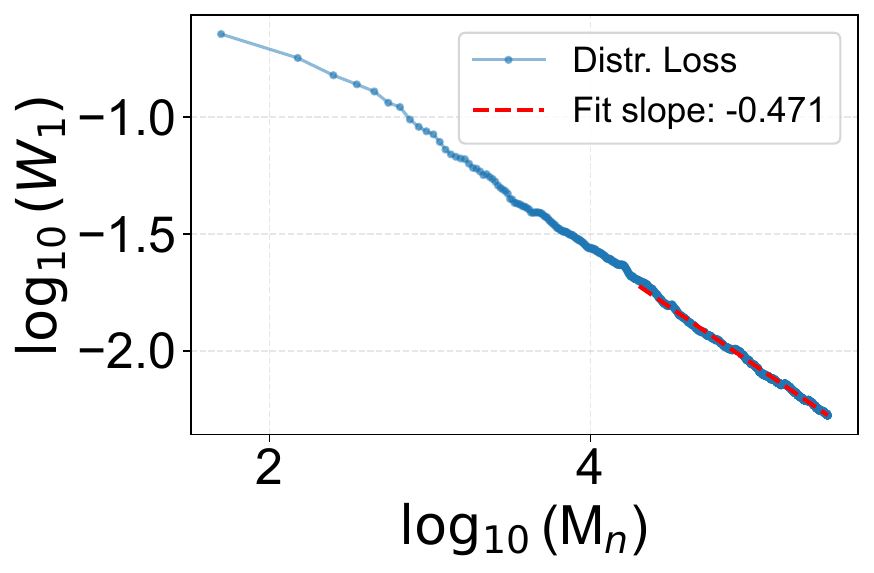}
    \end{minipage}
    \hfill
    \begin{minipage}{0.275\textwidth}
        \centering
        \includegraphics[width=\linewidth]{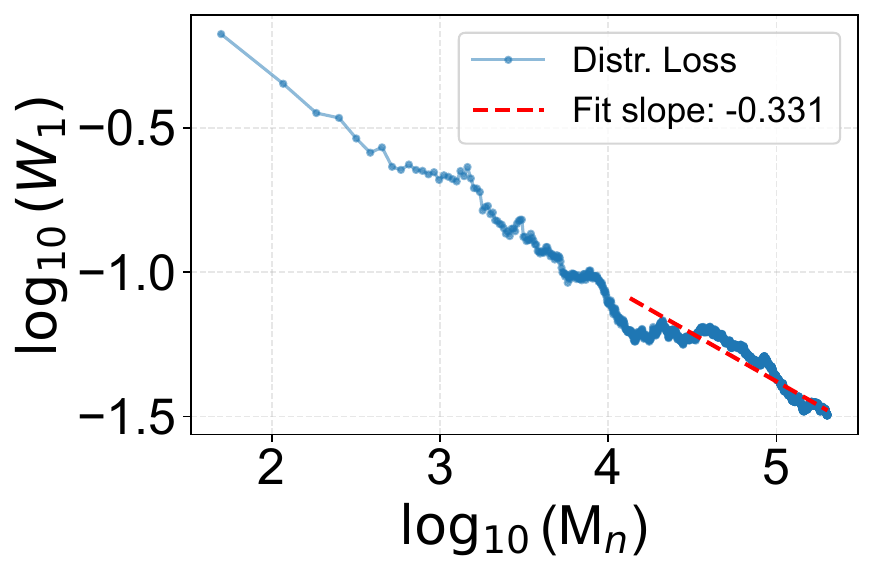}
    \end{minipage}
     \begin{minipage}{0.275\textwidth}
        \centering
        \includegraphics[width=\linewidth]{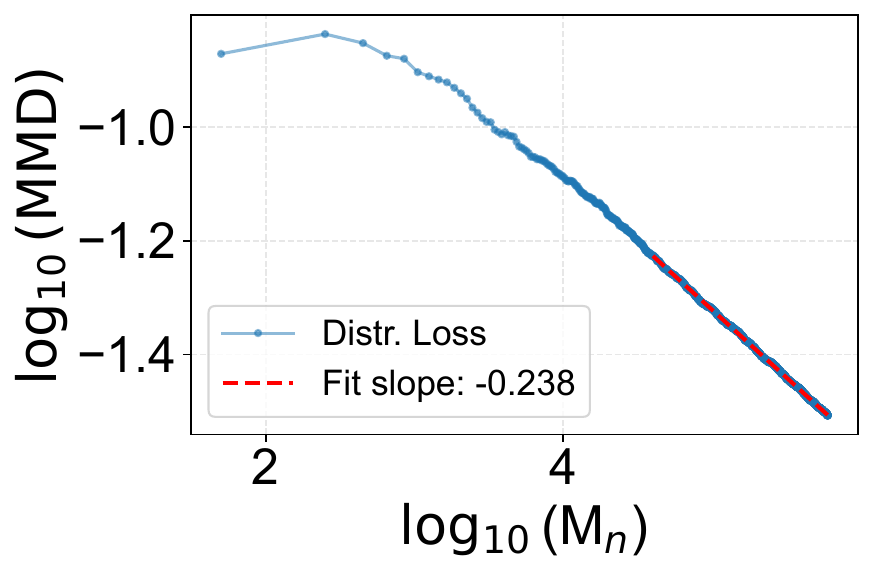}
        {\small $\alpha=0.25,\ q=0.50$}
    \end{minipage}
    \hfill
    \begin{minipage}{0.275\textwidth}
        \centering
        \includegraphics[width=\linewidth]{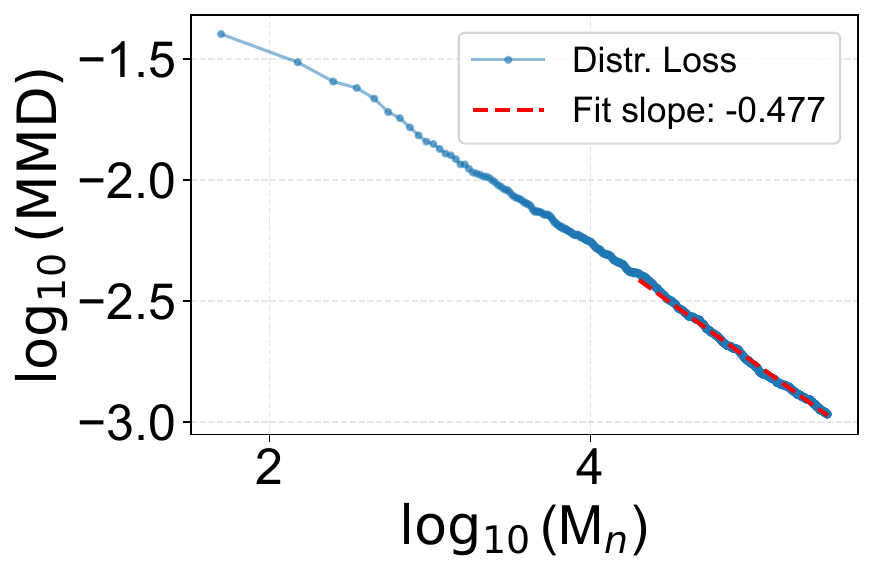}
        {\small $\alpha=0.50,\ q=0.50$}
    \end{minipage}
    \hfill
    \begin{minipage}{0.275\textwidth}
        \centering
        \includegraphics[width=\linewidth]{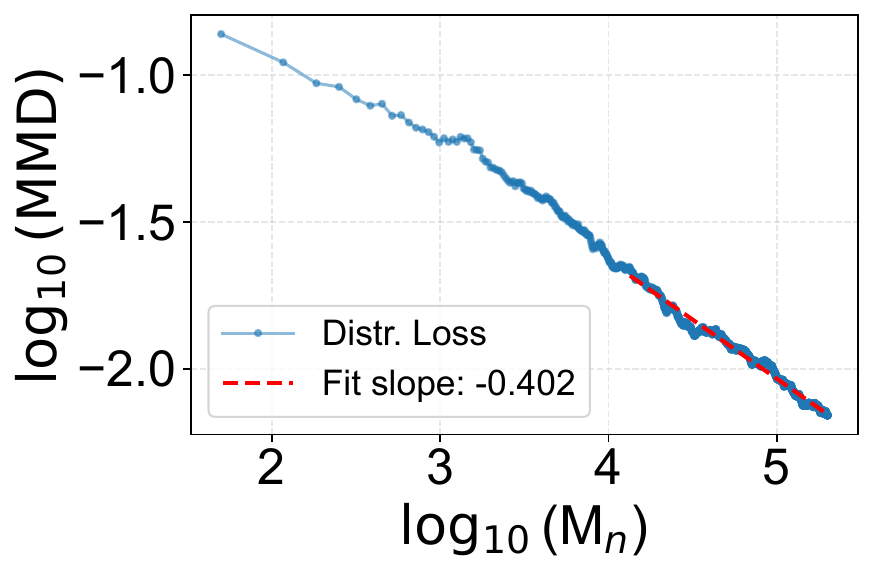}
        {\small $\alpha=0.75,\ q=0.50$}
    \end{minipage}

    \vspace{1em}

    \begin{minipage}{0.275\textwidth}
        \centering
        \includegraphics[width=\linewidth]{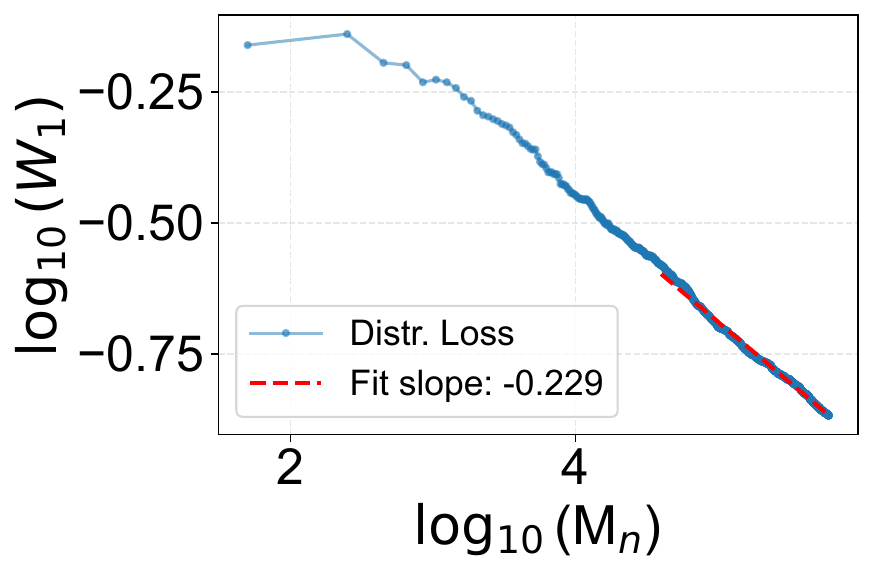}
    \end{minipage}
    \hfill
    \begin{minipage}{0.275\textwidth}
        \centering
        \includegraphics[width=\linewidth]{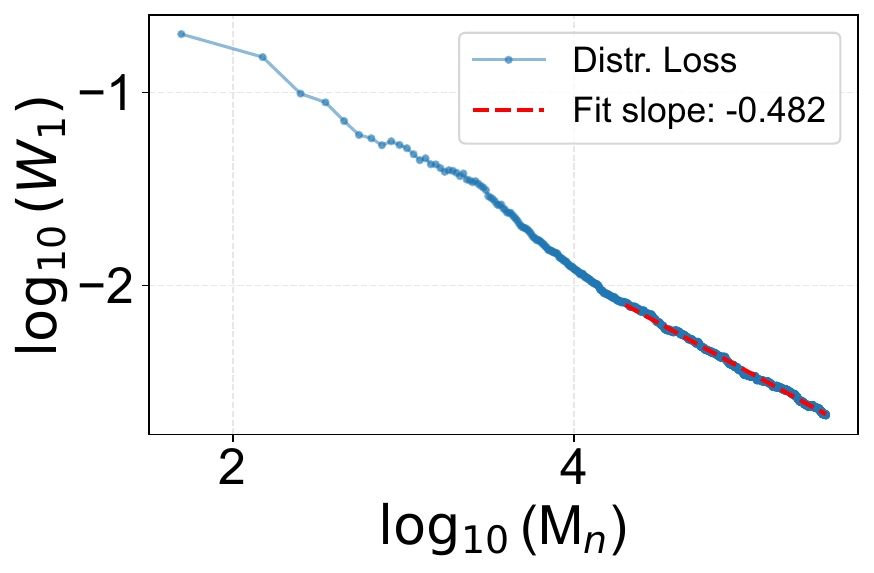}
    \end{minipage}
    \hfill
    \begin{minipage}{0.275\textwidth}
        \centering
        \includegraphics[width=\linewidth]{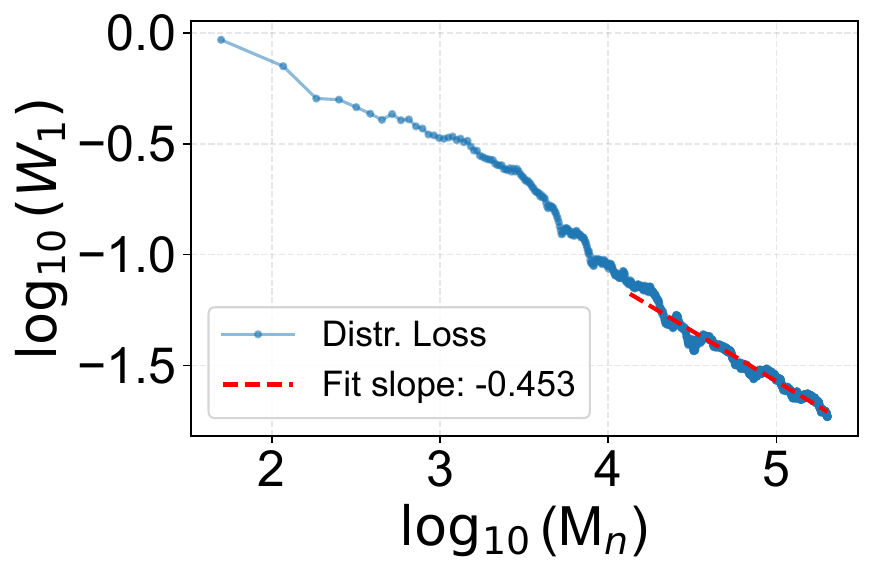}
    \end{minipage}
     \begin{minipage}{0.275\textwidth}
        \centering
        \includegraphics[width=\linewidth]{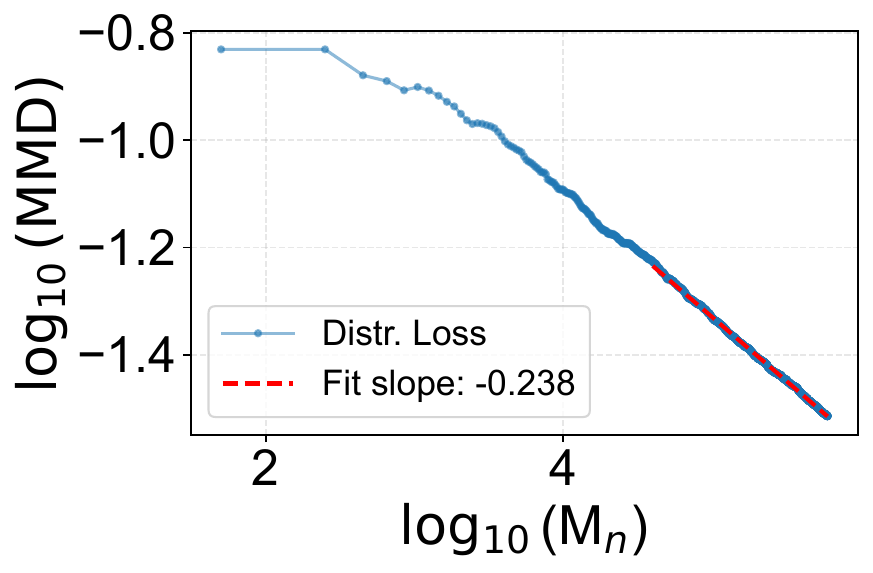}
        {\small $\alpha=0.25,\ q=0.75$}
    \end{minipage}
    \hfill
    \begin{minipage}{0.275\textwidth}
        \centering
        \includegraphics[width=\linewidth]{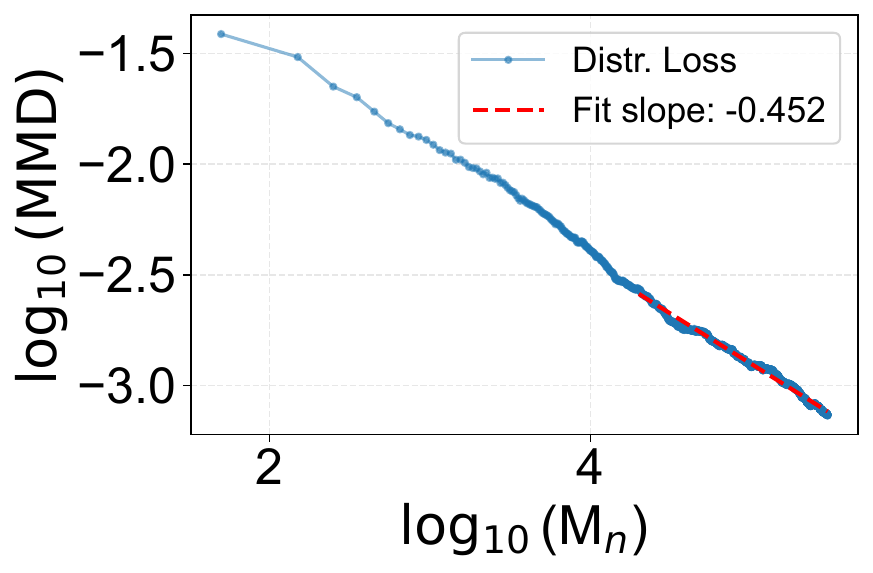}
        {\small $\alpha=0.50,\ q=0.75$}
    \end{minipage}
    \hfill
    \begin{minipage}{0.275\textwidth}
        \centering
        \includegraphics[width=\linewidth]{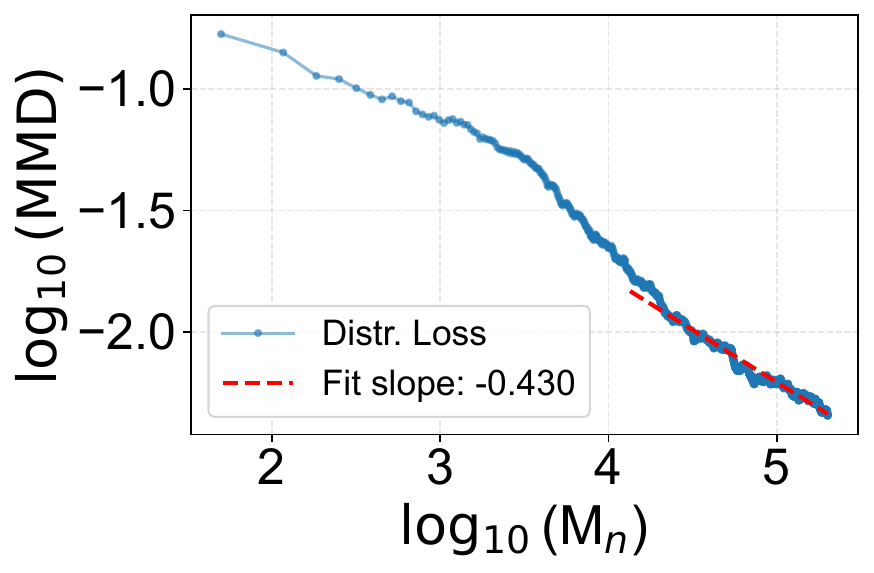}
        {\small $\alpha=0.75,\ q=0.75$}
    \end{minipage}

    \caption{BCRT Simulation (KDE Estimator). $W_1$ and MMD distributional losses and fitted slopes for varying values of $\alpha$ and $q$ for a single run. Slopes are fitted after dropping the first 200 out of 3000 iterations.}
    \label{fig:KDE-three-by-three-alpha-q-loss}
\end{figure}

\clearpage
\section{Additional MNIST Experiment Details}\label{apdx:mnist}

        In this section, we present experimental parameters of the DDPM diffusion model trained on MNIST under CRT. We use data from the 60,000 MNIST training set where each training sample is used by the optimizer an equal number of epochs, as per the setup in \Cref{thm:recursive_convergence}. Each iteration builds on the previous model's generator, such that every new sample is trained for a fixed number of epochs and all data points are used by the optimizer for an equal number of training steps. To ensure uniform representation, the newly introduced real data points at each step are explicitly sampled to maintain a balanced distribution across all ten digit classes. A new sample is produced at each CRT iteration to visualize the output of each iteration's generator.
    
    \begin{table}[H]
        \centering
        \small
        \begin{tabular}{lcl}
        \toprule
        \textbf{Parameter} & \textbf{Value} & \textbf{Description} \\
        \midrule
        $\alpha$ & $\{0.0, 0.25, 0.50, 0.75, 1.00\}$ & real-data fraction (Sweep default) \\
        Samples per step & $300$ & Total combined batch size per iteration \\
        $m_1$ & $\alpha \times 300$ & Real samples per iteration \\
        $T$ & $200$ & Total CRT iterations \\
        $T_{diff}$ & $400$ & Number of diffusion steps \\
        Epochs & $100$ & Epochs of training per data iteration \\
        Batch Size & $100$ & Training batch size \\
        Schedule & Cosine & Optimizer schedule \\
        $\beta_{start}$  & $1\times10^{-4}$ & Noise variance schedule \\
        $\beta_{end}$  & $2\times10^{-2}$ & Noise variance schedule  \\
        Time dimension & $128$ & Embedding dimension for time conditioning \\
        Base channels & $64$ & Base channel multiplier for the UNet \\
        \bottomrule
        \end{tabular}
        \caption{Experimental parameters for MNIST experiment.}
        \label{tab:MNIST-diffusion-params}
        \end{table}        

        We present the details of the network architecture and training objective. We parameterize the diffusion noise predictor using a lightweight UNet-style convolutional network with residual blocks, attention mechanisms, and time conditioning. The timestep $t$ is embedded via a sinusoidal encoding and passed through a small MLP. To improve training stability, the mean squared error loss for epsilon prediction is scaled using Min-SNR weighting ($\gamma = 5.0$). All experimental parameters for model training are summarized in \Cref{tab:MNIST-diffusion-params}.

        Architecturally, the model follows a two-level UNet structure. The encoder consists of an initial convolution followed by residual blocks, downsampling from 28x28 to 14x14, where a scaled dot-product attention block is applied. A second downsampling reduces the spatial resolution to 7x7 while increasing the channel width from 64 to 128. The bottleneck consists of residual blocks interleaved with a secondary attention block at the 7x7 resolution. The decoder mirrors the encoder with two stages of upsampling, skip connections, and a final attention block at the 14x14 resolution. The final convolution maps features back to the single output channel predicting the noise.

\section{Additional LLM Experiment Details}\label{apdx:llm}

    In this section we provide additional details on the contaminated recursive training experiment using LLMs. The dataset we use is the WikiText-103 corpus, consisting of high-quality Wikipedia articles, pre-partitioned into training and test splits. We use the standard release training split for fine-tuning, and the standard release test split for independent evaluation of the text diversity metrics. The training set consists of approximately 103 million tokens, while the test set contains approximately 0.25 million tokens. Both sets of data are further divided into disjoint blocks of 1,024 tokens for training and evaluation. 
    
    The model used for all experiments is the 124M parameter version of GPT-2 from OpenAI with its standard tokenizer, initialized from publicly available pretrained weights and fine-tuned within the recursive training framework described above.

    \subsection{Training} 
    
        Fine-tuning was performed using the standard autoregressive objective, in which the model is optimized to predict each token given its preceding context. For a sequence of tokens \( (x_1, \dots, x_L) \), the objective is
            \[
            \mathcal{L} = - \sum_{i=1}^{L} \log p_{\theta}(x_i \mid x_{<i}),
            \]
        where \( \theta \) denotes the model parameters.

        \begin{table}[H]
            \centering
            \small
            \begin{tabular}{lcl}
            \toprule
            \textbf{Training Parameter} & \textbf{Value} & \textbf{Description} \\
            \midrule
            Block size & 1024 & Tokens per training example \\
            $m_{\text{total}}$ & 1024 & Blocks per iteration dataset $D_t$ \\
            $\alpha$ & $\{0.0, 0.10, 1.0\}$ & real-data fraction \\
            $T$ & 50 & Number of recursive iterations \\
            Epochs & 1 & Epochs per iteration \\
            Learning rate & $2\times10^{-5}$ & AdamW optimizer \\
            Batch size & 1 & Micro-batch size \\
            Gradient accumulation. & 32 & Gradient accumulation steps \\
            Schedule & Linear warmup + decay & Learning rate schedule \\
            \midrule
            \textbf{Sampling Parameter}  \\
            \midrule
            Generation $p$ & 0.95 & Nucleus sampling parameter \\
            Temperature & 1.0 & Sampling temperature \\
            Repetition penalty & 1.05 & Decoding penalty \\
            \bottomrule
            \end{tabular}
            \caption{Experimental parameters for recursive LLM training experiment. Note that the effective batch size for the experiment is the micro-batch size multiplied by the number of gradient accumulations steps. }
            \label{tab:llm-recursive-params}
            \end{table}

             Optimization was performed using the AdamW optimizer with a learning rate of \( 2 \times 10^{-5} \), weight decay, and linear learning rate warmup followed by decay. Gradient accumulation was used to achieve a larger effective batch size under memory constraints. Mixed-precision (FP16) training was employed when supported by the hardware.

             At each iteration, the model was trained for a fixed number of full passes (epochs) over the constructed dataset \( D_t \) at outer training iteration $t$, such that no training token was re-used between iterations. At each $t$, $D_t$ consists of a mixture of real data blocks and synthetic data blocks generated by the generator trained at $t-1$, where the fraction of real data blocks is defined by $\alpha$. Model parameters were updated between iterations without reinitialization unless otherwise specified, as in previous experiments, to ensure all data points are utilized by the optimizer an equal number of times as per \Cref{thm:recursive_convergence}. All experimental parameters for model training and sampling are summarized in \Cref{tab:llm-recursive-params}.

    \subsection{Evaluation} 

        All evaluations are performed on the pre-partitioned test set, which was not directly used for any training or fine-tuning. The primary evaluation metrics used were 1) model perplexity, reflecting the model's predictive accuracy, 2) average per-token predictive entropy, measuring the model output uncertainty, and 3) distinct$-n$ for $n=1,2,3$ to measure output diversity. Perplexity and entropy were evaluated over the independent test set, while distinct$-n$ were evaluated over a new sample of the trained models.

\end{document}